\DeclareMathAlphabet\mathbfcal{OMS}{cmsy}{b}{n}
\newcommand{\BEAS}{\begin{eqnarray*}}
\newcommand{\EEAS}{\end{eqnarray*}}
\newcommand{\BEA}{\begin{eqnarray}}
\newcommand{\EEA}{\end{eqnarray}}
\newcommand{\BEQ}{\begin{equation}}
\newcommand{\EEQ}{\end{equation}}
\newcommand{\BIT}{\begin{itemize}}
\newcommand{\EIT}{\end{itemize}}
\newcommand{\BNUM}{\begin{enumerate}}
\newcommand{\ENUM}{\end{enumerate}}
\newcommand{\BA}{\begin{array}}
\newcommand{\EA}{\end{array}}
\newcommand{\diag}{\mathop{\rm diag}}
\newcommand{\tr}{\mathop{ \rm tr}}
\newtheorem{lemma}{Lemma}
\newtheorem{theorem}{Theorem}
\newtheorem*{theorem*}{Theorem}
\newtheorem{proposition}{Proposition}
\newtheorem{corollary}{Corollary}
\newtheorem{assumption}{Assumption}
\def \E{{\mathbb E}}
\def \H{{\mathcal H}}
\def \X{{\mathcal X}}
\newcommand{\punt}[1]{}
\newcommand{\msf}[1]{\mathsf{#1}}
\newtheorem*{remark}{Remark}
\newcommand{\reals}{\mathbb{R}}
\renewcommand{\H}{\mathcal{H}}
\newcommand{\N}{\mathbb{N}}
\newcommand{\hh}{{\H}}
\newcommand{\mm}{\msf{M}}
\newcommand{\psd}{\mathbb{S}_+}
\newcommand{\eps}{\varepsilon}
\newcommand{\pp}[2]{f({#1}\,;\, {#2})}
\newcommand{\cf}{\mathcal{F}}
\newcommand{\ch}{\mathcal{H}}
\newcommand{\cx}{\mathcal{X}}
\newcommand{\bq}{\begin{equation}}
\newcommand{\ba}{\begin{eqnarray}}
\newcommand{\ea}{\end{eqnarray}}
\def\R{{\reals}}
\newcommand{\remove}[1]{}
\title{Efficient Sampling of Stochastic Differential Equations \\ with Positive Semi-Definite Models} 
\author{Anant Raj \\
 Coordinated Science Laboraotry \\
 University of Illinois Urbana-Champaign. \\
  Inria, Ecole Normale Sup\'erieure \\
  PSL Research University, Paris, France. \\
  \texttt{anant.raj@inria.fr} 
  \and Umut \c{S}im\c{s}ekli \\
  Inria, CNRS, Ecole Normale Sup\'erieure \\
  PSL Research University, Paris, France. \\
  \texttt{umut.simsekli@inria.fr} 
  \and Alessandro Rudi \\
  Inria,   Ecole Normale Sup\'erieure \\
  PSL Research University, Paris, France. \\
  \texttt{alessandro.rudi@inria.fr} \\
}
\begin{document}

\maketitle

\begin{abstract}
This paper deals with the problem of efficient sampling from a stochastic differential equation, given the drift function and the diffusion matrix.
The proposed approach leverages a recent model for probabilities \cite{rudi2021psd} (the positive semi-definite -- PSD model) from which it is possible to obtain independent and identically distributed (i.i.d.) samples at precision $\eps$ with a cost that is $m^2 d \log(1/\eps)$ where $m$ is the dimension of the model, $d$ the dimension of the space. The proposed approach consists in: first, computing the PSD model that satisfies the Fokker-Planck equation (or its fractional variant) associated with the SDE, up to error $\eps$, and then sampling from the resulting PSD model. Assuming some regularity of the Fokker-Planck solution (i.e. $\beta$-times differentiability plus some geometric condition on its zeros) We obtain an algorithm that: (a) in the preparatory phase obtains a PSD model with L2 distance $\eps$ from the solution of the equation, with a model of dimension $m = \eps^{-(d+1)/(\beta-2s)} (\log(1/\eps))^{d+1}$ where $1/2\leq s\leq1$ is the fractional power to the Laplacian, and total computational complexity of $O(m^{3.5} \log(1/\eps))$ and then (b) for Fokker-Planck equation, it is able to produce i.i.d.\ samples with error $\eps$ in Wasserstein-1 distance, with a cost that is $O(d \eps^{-2(d+1)/\beta-2} \log(1/\eps)^{2d+3})$ per sample. This means that, if the probability associated with the SDE is somewhat regular, i.e. $\beta \geq 4d+2$, then the algorithm requires $O(\eps^{-0.88} \log(1/\eps)^{4.5d})$ in the preparatory phase, and $O(\eps^{-1/2}\log(1/\eps)^{2d+2})$ for each sample. Our results suggest that as the true solution gets smoother, we can circumvent the curse of dimensionality without requiring any sort of convexity.

\end{abstract}
\section{Introduction}\label{sec:intro}
   
High dimensional stochastic differential equations (SDE) and their associated partial differential equations (PDEs) arise often in various scientific and engineering applications such as control systems, aviation,  fluid dynamics, etc \cite{lasiecka2000control,herron2008partial,troltzsch2010optimal}. Sampling from  an SDE is an active area of research. Generally, a direct approach like Langevin  sampling \cite{durmus2019analysis,dalalyan2019user,chatterji2020langevin}  is commonly employed to sample from SDEs. However, an alternative approach involves approximating the solution of the associated PDE in a form that enables easy sampling. In this paper, we focus on the latter approach.

The exact solutions of most of the high-dimensional PDEs are not computable in closed form. Hence, for this reason, finding approximate numerical solutions to high-dimensional SDEs and related PDEs has been an important area of research in numerical methods \cite{ames2014numerical,feng2013recent,tadmor2012review}.   With the advent of new machine learning methods, the question naturally arises as to whether we can leverage the expressive power of these models to effectively capture the solution of high-dimensional partial differential equations (PDEs). Recent research efforts have explored this direction by employing various machine learning models for different classes of PDEs \cite{beck2020overview,long2018pde,karniadakis2021physics}. For instance, works such as \cite{chen2021solving,stepaniants2021learning,bao2022function} have harnessed the capabilities of reproducing kernel Hilbert spaces (RKHS) to model PDE solutions. With the recent developments in neural network models and architecture, \cite{long2018pde,khoo2021solving,kutyniok2022theoretical,cai2021physics}  consider neural networks to model and approximate the solution of high dimensional PDEs.  Despite impressive algorithmic innovation, the theoretical understanding of these approaches is still limited, leaving a significant class of PDEs beyond the reach of existing machine learning-based modeling frameworks or with inadequate theoretical insights into the obtained solutions. 

In this paper, our goal is to sample from the  stochastic differential equations (SDE) that are driven by a Brownian motion or an $\alpha$-stable process. The time evolution of probability density functions (PDFs) of stochastic differential equations (SDE) that are driven by a Brownian motion or an $\alpha$-stable process is given by Fokker-Planck Equation (FPE) and the fractional Fokker-Planck Equation (fractional FPE) respectively (see e.g, \cite{durmus2019analysis,csimcsekli2017fractional,huang2021approximation}. Therefore, our focus in this study lies in approximating the solution of the Fokker-Planck Equation (FPE), along with its fractional counterpart, as discussed in \cite{umarov2018beyond}, and subsequently sampling from the approximated solution.  The modeling of solutions for FPEs (FPEs) and fractional FPEs poses significant challenges, encompassing the following key aspects: (i) Solutions  of  FPEs and fractional FPEs   are probability densities that are always non-negative and vanish at infinity, (ii) Modeling a probability density is hard because of the normalization property, and (iii) Fractional FPEs involve a fractional Laplacian operator that is non-local in nature. In order for an approximate solution of FPEs or fractional FPEs to be useful for sampling from the corresponding stochastic differential equations (SDEs), it is necessary that sampling from the approximate solution of the corresponding partial differential equation (PDE) be straightforward. Unfortunately, a comprehensive algorithmic approach that addresses the above-mentioned issues and includes a formal approximation analysis is currently lacking in the literature.

This paper adopts a positive semi-definite (PSD) model-based approach to effectively tackle the problem at hand. The recently proposed PSD model
\cite{marteau2020non,rudi2020finding,rudi2021psd} offers promising solutions to the challenges encountered in probability density modeling, making it an excellent choice for representing the solution of FPEs and their fractional counterparts. In a recent work by \citep{rudi2021psd}, it was demonstrated that employing a Gaussian kernel in the PSD model allows for the exact determination of the normalization constant and enables the characterization of a broad range of probability densities using such PSD models. Moreover, an algorithm for sampling from the PSD model is also proposed recently with  statistical guarantees \cite{marteau2022sampling}.

Driven by these insights, we explore the utilization of a PSD model for approximating the solution of FPEs  and fractional FPEs. Consequently, we employ the sampling algorithm presented in\cite{marteau2022sampling} to generate samples from the approximate solution. Our primary objective in this study is to obtain rigorous guarantees on the approximation error of PSD models when applied to the approximation of solutions of FPEs and fractional FPEs.   Under regularity conditions on the solution of these PDEs, we make the following contributions. 
   
\paragraph{Contributions:} We make the following contributions in this paper:
   
\begin{itemize}
    \item We show that the PSD-based representation of the solution of FPE has good approximation properties under the regularity assumptions on the true solution of the PDE. In particular, we show that we can achieve an approximation error of $O(\varepsilon)$ while using $m= O(\eps^{-(d+1)/(\beta-2)} (\log \tfrac{1}{\eps})^{(d+1)/2})$ number of samples for PSD representation where $d$ is the dimension of space variables and the true solution  is $\beta$ times differentiable (\textbf{Theorem 1 and 2}). In addition, we are able to provide Wasserstein guarantees for the sampling error of a PSD model-driven solution 
    when sampling from  an SDE driven by a  brownian motion (\textbf{Corollary 1}). An important contribution of our approach is that it relies on much weaker assumptions  on the solution of associated PDE ($\beta$ times differentiable) compared to existing methods that employ dissipativity, log-Sobolev, and Poincaré like conditions to provide sampling error guarantees \cite{durmus2019analysis,dalalyan2019user,dalalyan2017theoretical,durmus2017nonasymptotic,chewi2021analysis,balasubramanian2022towards}.
 \item We demonstrate that utilizing a positive semi-definite (PSD) based representation is a natural and effective method to approximate the fractional Laplacian operator when applied to a probability density. In particular, we provide results that illustrate the effectiveness of PSD-based representations for Gaussian and shift-invariant kernels, which can be represented using Bochner's theorem (\textbf{Theorem 3}). 
    \item As a final contribution, we also establish an approximation guarantee for the solution of the fractional FPE under certain regularity assumptions on the true solution. This extends our earlier result for the FPE and demonstrates the effectiveness of the PSD-based approach in approximating the solution of both FPE and fractional FPE
     We achieve an approximation error of $O(\varepsilon)$ while using $m= O(\eps^{-(d+1)/(\beta-2s)} (\log \tfrac{1}{\eps})^{(d+1)/2})$ number of samples for PSD representation where $d$ is the dimension of space variables and the true solution  is $\beta$ times differentiable (\textbf{Theorem 4 and 5}). This result directly implies a Wasserstein guarantee on the sampling error when sampling from  an SDE driven by an  $\alpha$-stable levy process using the PSD model. To the best of our knowledge, our algorithm stands as the first of its kind, providing a provable guarantee for sampling from an SDE driven by an  $\alpha$-stable levy process as well as for approximating the solution of fractional FPEs under regularity assumption on the solution.
\end{itemize}
Our results clearly suggest that as the true solution gets smoother, we require less number of samples to approximate the true solution using PSD-based representation, hence circumventing the curse of dimensionality.  All the proofs are provided in the appendix.
   
\section{Related Work} 
   
\citep{evans2022partial} provides a good introduction on PDEs. \cite{gladwell1979survey,iserles2009first,ames2014numerical} have discussed  numerical methods to solve a PDE at length. 
    
\paragraph{Machine learning-based solution to PDEs:} Modeling the solution of a PDE using machine learning approaches and in particular kernel methods is a well-established technique leading to optimal sample complexity and very efficient algorithms (see for example \cite{belytschko1996meshless,schaback2006kernel,schaback2007kernel, fasshauer2007meshfree, patel2020meshless,chen2021solving,stepaniants2021learning}). With the recent advancement of neural networks in machine learning, new approaches have emerged. A deep learning-based approach that can handle general high-dimensional parabolic PDEs was introduced in \cite{han2018solving}. Two popular approaches that were proposed in this context are (i) \textit{Physics inspired neural networks (PINN)} \cite{raissi2019physics} and (ii) \textit{Deep Ritz method} \cite{yu2018deep}. However, no statistical optimality was provided in the original papers. 
Recently, \cite{lu2021machine} studied the statistical limits of deep learning and Fourier basis techniques for solving elliptic PDEs.  The sample complexity to obtain a solution via a PINN and Deep Ritz model are shown to be  $O\left( n^{- \frac{2\beta-2}{d+2\beta-4}} \log n\right)$ and $O\left( n^{- \frac{2\beta-2}{d+2\beta-2}} \log n\right)$ respectively where $\beta$ is the order of smoothness of the solution.  Subsequently, in  a series of papers \cite{de2022generic,lu2022priori,lu2021priori,chen2021representation}, the theoretical properties of solving a different class of PDEs are discussed under restrictive assumptions on the function class (barron space). For an overview, we refer to \cite{beck2020overview}.  Solutions of PDEs that involve a fractional operator have also been modeled by function in RKHS \cite{arqub2018numerical,abu2020numerical,al2019computational,arqub2020adaptive}. However, these works do not provide finite sample approximating error guarantees for the solution obtained. 
% In all of these works, approximation guarantees are missing. 
% 
A review of kernel-based numerical methods for PDEs can be found in \cite{saitoh2016theory,fornberg2015solving}.
    
\paragraph{The (fractional) Fokker-Planck equation:} Various efforts have been made to approximate the solution of FPE and its fractional counterpart by numerical methods \cite{tabandeh2022numerical,pichler2013numerical,yan2013numerical,liu2004numerical,barkai2001fractional}. \cite{kazem2012radial,dehghan2014numerical,sepehrian2022solution} utilized a radial basis kernel-based approach to numerically compute the solution of an FPE. RKHS-based methods have also been used to obtain an approximate solution~\cite{bouvrie2017kernel,szehr2020exact}. However, the above mentioned works are numerical in nature and do not discuss the approximation limits of the solution. 
On the more applied side, various works have been proposed by harnessing the expressive power of deep networks to model the solution of FPEs and fractional FPEs \cite{xu2020solving,zhai2022deep,wei2022new}. However, a systematic theoretical study is missing from the literature. 
    
\paragraph{Langevin Monte Carlo:} 
An important line of research considers the probabilistic counterpart of FPE that is directly simulating the SDEs associated with the FPEs by time-discretization. Under convexity assumptions   \cite{dalalyan2017theoretical,durmus2017nonasymptotic,durmus2019high} proved guarantees for obtaining a \emph{single} (approximate) sample from the SDE for a given time. These frameworks were later extended in  \cite{raginsky2017non,xu2018global,erdogdu2021convergence}, so that the convexity assumptions were weakened to more general notions such as uniform dissipativity. \cite{ma2015complete} introduced a more general SDE framework, which covers a larger class of PDEs. 
In the case of sampling SDEs with domain constraints  \cite{bubeck2015finite,brosse2017sampling,hsieh2018mirrored,salim2020primal} proposed Langevin-based methods under convexity assumption, which were later weakened in \cite{lamperski2021projected,zheng2022constrained}. \cite{csimcsekli2017fractional,nguyen2019non} considered SDEs which corresponded to the fractional FPE in dimension one and this approach was later extended in \cite{huang2021approximation,zhang2022ergodicity}. An overview of these methods is provided in  \cite{nemeth2021stochastic}. Contrary to this line of research, which typically requires convexity or dissipativity, our framework solely requires smoothness conditions.  
    
\paragraph{PSD Models:} PSD models are an effective way to model non-negative functions and enjoy the nice properties of linear models and more generally RKHS. It was introduced in \cite{marteau2020non} and their effectiveness  for modeling
probability distributions was shown in \cite{rudi2021psd}. It has been also effectively used in optimal transport estimation \cite{muzellec2021near,vacher2021dimension}, in finding global minima of a non-convex function \cite{rudi2020finding}, in optimal control \cite{berthier2022infinite}. In \cite{marteau2022sampling}, an algorithm to draw samples from a PSD model was proposed. 
   
\section{Notations and Background}
    
\paragraph{Notation:} 
Space input domain is denoted as $\mathcal{X}$ and time input domain is denoted as $\mathcal{T}$. Combined space $\mathcal{X} \times \mathcal{T} $ is denoted as $\Tilde{\mathcal{X}}$.
$L^{\infty}(\tilde{\mathcal{X}})$, $L^{1}(\tilde{\mathcal{X}})$ and $L^{2}(\tilde{\mathcal{X}})$ denote respectively the space of essentially bounded, absolutely integrable and square-integrable
functions with respect to Lebesgue measure over $\tilde{\mathcal{X}}$. ${W}_2^{\beta}(\tilde{\mathcal{X}})$  denotes the Sobolev space of functions whose weak derivatives up to
order $\beta$ are square-integrable on $\tilde{\mathcal{X}}$. We denote by $\R^d_{++}$ the space vectors in $\R^d$ with positive entries,  $\R^{n \times d}$ the space of $n \times d$ matrices, $\psd^n=\psd(\R^n)$ the space of positive semi-definite $n \times n$ matrices. Given a vector $\eta\in\R^d$, we denote $\diag(\eta)\in\R^{d \times d}$ the diagonal matrix associated to $\eta$.
   
\subsection{(Fractional) Fokker-Planck Equation}
 
The FPE describes the time evaluation of particle density if the particles are moving in a vector field and are also influenced by random noise like Brownian motion. Let us consider the following SDE driven by the standard Brownian motion (Wiener process) $W_t$: 
\begin{align}
    dX_t = \mu(X_t,t) dt + \sigma dW_t \label{eq:SDE}
\end{align}  
where $X_t \in \mathbb{R}^d$ is a random variable, $ \mu(X_t,t) \in \mathbb{R}^{d}$ is drift, $\sigma \in \mathbb{R}^{d\times k}$ and $W_t$  is a $k$-dimensional Wienner process.  $\mu_i(X_t,t) $ denotes the $i$th element of vector $\mu(X_t,t)$. 
% and $\sigma_{ij}(X_t,t)$ denotes the $(i,j)$th element of $\sigma(X_t,t)$. 
The diffusion tensor\footnote{
To simplify our analysis,  we will consider diffusion tensor $D$ independent from $X_t$ and $t$ in this work.} is defined by $D = \frac{1}{2} \sigma \sigma^\top$.

The  probability density function $p(x,t)$ of the random variable $X_t$ solving the SDE \eqref{eq:SDE} is given by the  FPE, given as follows \cite{risken1996fokker,pavliotis2014stochastic}:  
\begin{align}
    \frac{\partial p(x,t)}{\partial t} = - \sum_{i =1}^d \frac{\partial }{\partial x_i}(\mu_i(x,t)p(x,t)) + \sum_{i=1}^d \sum_{j=1}^d D_{ij}\frac{\partial^2}{\partial x_i \partial x_j} p(x,t), ~\text{such that } p(0,x) = p_0(x). \label{eq:fpe}
\end{align}  
% In the above equation, the density at time $t=0$ is $p_0$.
It is known that when $\mu(X_t,t)$ is the gradient of a potential or objective function $f:\mathcal{X} \rightarrow \mathbb{R}$, i.e., $\mu = \nabla f$, where $f$ satisfies certain smoothness and  growth conditions, the stationary distribution of the stochastic process in equation~\eqref{eq:SDE} exists and is described by the so-called Gibbs distribution \cite{pavliotis2014stochastic}.

Despite having nice theoretical properties, Brownian-driven SDEs are not suitable in certain models in engineering, physics, and biology \cite{samorodnitsky1996stable,duan2015introduction}. A popular non-Gaussian SDE used in such applications is given as follows: 
\begin{align}
    dX_t = \mu(X_t,t) dt +  dL_t^{\alpha},\label{eq:LevySDE}
\end{align} 
where $L_t^\alpha$ denotes the (rotationally invariant) $\alpha$-stable L\'{e}vy process in $\mathbb{R}^d$ and is defined as follows for $\alpha \in (0,2]$ \cite{duan2015introduction}: 
   
\begin{enumerate}
 
\item
$L^\alpha_0=0$ almost surely;
 
\item
For any $t_{0}<t_{1}<\cdots<t_{N}$, the increments $L^\alpha_{t_{n}}-L^\alpha_{t_{n-1}}$
are independent;
 
\item
The difference $L^\alpha_{t}-L^\alpha_{s}$ and $L^\alpha_{t-s}$
have the same distribution, with the characteristic function $\mathbb{E}\left[e^{j \langle u,L^\alpha_{t-s}\rangle}\right] = \exp(- (t-s)^\alpha\|u\|_2^\alpha)$ for $t>s$ and $j=\sqrt{-1}$;
 
\item
$L^\alpha_{t}$ is continuous in probability.
%, i.e.
% for any $\delta>0$ and $s\geq 0$, $\mathbb{P}(\|L^\alpha_{t}-L^\alpha_{s}\|>\delta)\rightarrow 0$
% as $t\rightarrow s$.
\end{enumerate}
 
In the SDE described in equation~\eqref{eq:LevySDE}, we choose $\sigma =1$ for simplicity of the expressions.
When $\alpha=2$, the process recovers the Brownian motion, i.e., $L^\alpha_{t}=\sqrt{2}W_{t}$. The fundamental difference between $L^\alpha_t$ and $W_t$ when $\alpha<2$ is that the increments of this process are heavy-tailed, with the following property: $\mathbb{E}[\|L^\alpha_t\|^p] = +\infty$ for all $p<\alpha$ and $t\geq 0$. This heavy-tailed structure makes the solution paths of \eqref{eq:LevySDE} discontinuous: $(X_t)_{t\geq 0 }$ can exhibit at most countable number of discontinuities. SDEs based on $\alpha$-stable processes have also received some attention in machine learning as well \cite{simsekli2019tail,huang2021approximation,nguyen2019first,simsekli2020hausdorff,zhou2020towards}.
The resulting governing equation for stable-driven SDEs is similar to the traditional Fokker–Planck equation except that the order $\alpha$ of the highest derivative is fractional when $\alpha <2$. The following generalization of the FPE is known in the literature as fractional FPE because of the presence of fractional Laplacian operator \cite{umarov2018beyond}:
\begin{align}
    \frac{\partial p(x,t)}{\partial t} = - \sum_{i =1}^d \frac{\partial }{\partial x_i}(\mu_i(x,t)p(x,t)) -(-\Delta)^{\alpha/2}p(x,t), ~\text{such that } p(0,x) = p_0(x),\label{eq:fractional_fpe}
\end{align}
where $(-\Delta)^{s}$ is the  non-local Laplacian operator which is also known as the fractional Laplacian operator and is induced by the stable process. For $0 < s < 1$, the fractional Laplacian of order $s$, $(-\Delta)^{s}$  can be defined on functions $f:\mathbb{R}^d  \rightarrow \mathbb{R}$ as a singular integral as follows:  
$(-\Delta)^{s} f(x) = c_{d,s} \int_{\mathbb{R}^d} \frac{f(x) - f(y)}{\|x-y\|_2^{d+2s}}~\ dy,$   
where $c_{d,s} = \frac{4^s \Gamma(d/2+s)}{\pi^{d/2} |\Gamma(-s)|}$. A more useful representation of the fractional Laplacian operator is given as a Fourier multiplier, $
    \mathcal{F}[(-\Delta)^{s} f](u) = \|u\|^{2s} \mathcal{F}[f](u),$
where $\mathcal{F}[f](u) = \int_{\mathbb{R}^d} f(x) e^{j x^\top u} ~dx$. We will utilize the Fourier multiplier-based representation of the fractional Laplacian operator to prove our approximation result for the fractional FPE. 
   
\subsection{PSD Models for Probability Representation}
   
Consider a feature map representation $\phi : \mathcal{X} \rightarrow \mathcal{H}$ from input space $\mathcal{X}$  to a Hilbert space $\mathcal{H}$, and a linear operator $M \in \mathbb{S}_+(\mathcal{H})$, then PSD models are represented as in \cite{marteau2020non},  
 
\begin{align}
    f(x;M,\phi) = \phi(x)^\top M \phi(x). 
\end{align}   
It is clear that $f(x;M,\phi)$ is a non-negative function and hence PSD models offer a general way to parameterize non-negative functions .  We consider $\mathcal{H}$ to be the RKHS corresponding to the kernel $k$ such that $\phi(x) = k(x,\cdot)$. In particular, we will consider the case where: $\phi=\phi_{\eta}:\mathbb{R}^d \rightarrow \mathcal{H_{\eta}}$ is the feature map associated with the Gaussian kernel $k_{\eta}(x,y) = \phi_{\eta}(x)^\top \phi_{\eta}(y) = e^{-\eta \|x-y\|^2}$ with $\eta > 0$. Let us assume that $M$
 lies in the span of feature maps corresponding to $n$ data points, $X = \{x_1,\cdots, x_n\}$ then $M = \sum_{i,j} A_{ij} \phi(x_i)\phi(x_j)^\top$ for some $A \in \mathbb{S}_+^n$. Hence, a Gaussian PSD model can be defined as,
$ f(x;A,X,\eta) = \sum_{ij} A_{ij} k_{\eta}(x_i,x) k_{\eta}(x_j,x)$.
Given two base point matrices $X \in \mathbb{R}^{n \times d}$ and $X' \in \mathbb{R}^{m \times d}$, then $K_{X,X',\eta}$ denotes the kernel matrix with entries $(K_{X,X',\eta})_{ij} = k_{\eta}(x_i,x_j')$ where $x_i$ and $x_j'$ are  the $i$-th and $j$-th rows of $X$ and $X'$ respectively. Gaussian PSD model has a lot of interesting properties and the details can be found in \cite{rudi2021psd}. 
But, we would like to mention below an important aspect of PSD models which enable them to be used as an effective representation for probability density:\\
\textbf{Sum Rule (Marginalization and Integration)}: The integral of a PSD model can be computed as,
\begin{align*}
    \int f(x;A,X,\eta)~dx  = c_{2\eta} \text{Tr}(A K_{X,X,\frac{\eta}{2}}), ~\text{where } c_{\eta} = \int k(0,x)~dx.
\end{align*}
Similarly, only one variable of a PSD model can also be integrated to result in the sum rule. 
\begin{proposition}[Marginalization in one variable \cite{rudi2021psd}] Let $X\in \mathbb{R}^{n \times d}$, $Y \in \mathbb{R^{n \times d'}}$, $A \in \mathbb{S}_+(\mathbb{R}^n)$ and $\eta,\eta' \in \mathbb{R}+$, then the following integral is a PSD model,
\begin{align*}
    \int f(x,y;A,[X,Y],(\eta,\eta'))~dx = f(y;B,Y,\eta')~\text{with } B = c_{2\eta} A \circ K_{X,X,\frac{\eta}{2}}.
\end{align*}
\end{proposition}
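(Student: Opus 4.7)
The plan is to expand the joint PSD model using the tensor/product feature-map structure, evaluate the Gaussian integral in $x$ by completing the square, and then recognize the surviving expression as a PSD model in $y$ alone with coefficient matrix $B = c_{2\eta}\, A \circ K_{X,X,\eta/2}$. Finally I verify that $B$ is positive semi-definite.

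First I would write the joint model explicitly. Since the base points are the rows $(x_i,y_i)$ of $[X,Y]$ and the feature map on $\R^{d+d'}$ is the tensor product $\phi_\eta \otimes \phi_{\eta'}$ (whose reproducing kernel is the product $k_\eta k_{\eta'}$), we have
\[
f(x,y;A,[X,Y],(\eta,\eta')) \;=\; \sum_{i,j=1}^n A_{ij}\, k_\eta(x_i,x)\,k_\eta(x_j,x)\,k_{\eta'}(y_i,y)\,k_{\eta'}(y_j,y).
\]
Because the sum is finite, I can interchange integration and summation and reduce everything to computing $\int k_\eta(x_i,x)\,k_\eta(x_j,x)\,dx$.

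Second, the key Gaussian identity (quadratic completion) gives
\[
k_\eta(x_i,x)\,k_\eta(x_j,x) \;=\; e^{-\eta\|x_i-x\|^2-\eta\|x_j-x\|^2} \;=\; e^{-\frac{\eta}{2}\|x_i-x_j\|^2}\, e^{-2\eta\|x-m_{ij}\|^2},
\]
with $m_{ij}=(x_i+x_j)/2$. Integrating the second factor in $x$ yields the dimension-$d$ constant $(\pi/(2\eta))^{d/2}=\int k_{2\eta}(0,x)\,dx = c_{2\eta}$, and the first factor is exactly $k_{\eta/2}(x_i,x_j)=(K_{X,X,\eta/2})_{ij}$. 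Substituting back,
\[
\int f(x,y;A,[X,Y],(\eta,\eta'))\,dx \;=\; \sum_{i,j=1}^n c_{2\eta}\,(A\circ K_{X,X,\eta/2})_{ij}\, k_{\eta'}(y_i,y)\,k_{\eta'}(y_j,y) \;=\; f(y;B,Y,\eta'),
\]
with $B = c_{2\eta}\, A\circ K_{X,X,\eta/2}$, as claimed.

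Third, I would verify $B\in\mathbb{S}_+^n$ so that the right-hand side is genuinely a PSD model. The Gaussian Gram matrix $K_{X,X,\eta/2}$ is PSD, and $A$ is PSD by assumption, so by the Schur product theorem their Hadamard product is PSD; multiplying by the positive scalar $c_{2\eta}>0$ preserves this. There is no real obstacle here: the only substantive computation is the Gaussian product identity above, and the only thing to watch is that the normalizing constant one obtains from $\int e^{-2\eta\|u\|^2}du$ matches the convention $c_{2\eta}=\int k_{2\eta}(0,x)\,dx$ used in the paper, which it does.
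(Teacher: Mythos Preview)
Your argument is correct: the tensor-product expansion, the completing-the-square identity
\[
k_\eta(x_i,x)\,k_\eta(x_j,x)=e^{-\frac{\eta}{2}\|x_i-x_j\|^2}\,e^{-2\eta\|x-m_{ij}\|^2},
\]
the identification $\int e^{-2\eta\|x-m_{ij}\|^2}\,dx=c_{2\eta}$, and the Schur product theorem for $B\in\psd^n$ are all exactly what is needed. The paper does not give its own proof of this proposition; it is quoted from \cite{rudi2021psd} and stated without argument, so there is nothing further to compare your approach against.
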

   
In a recent work, \cite{marteau2022sampling} gave an efficient algorithm to sample from a PSD model, with the following result

\begin{proposition}[Efficient sampling from PSD models \cite{marteau2022sampling}] \label{prp:sampling}
    Let $\eps > 0$. Given a PSD model $f(x; A, \eta)$ with $A \in \R^{m\times m}$ for $x \in \R^d$. There exists an algorithm (presented in \cite{marteau2022sampling}) that samples i.i.d. points from a probability $\tilde{f}_\eps$ such that the 1-Wasserstein distance between $f(\cdot; A, \eta)$ and $\tilde{f}_\eps$ satisfies $\mathbb{W}_1(f,\tilde{f}_\eps) \leq \eps.$
    Moreover, the cost of the algorithm is $O(m^2 d \log(d/\eps))$.
\end{proposition}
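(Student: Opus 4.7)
The plan is to reduce $d$-dimensional sampling to a sequence of one-dimensional inverse-CDF problems via the chain rule, exploiting the fact that Gaussian PSD models are closed under marginalization (Proposition 1). Write $f(x_1,\dots,x_d) = \prod_{i=1}^d p(x_i \mid x_{<i})$, where $x_{<i}=(x_1,\dots,x_{i-1})$. The joint marginals $p(x_{<i})$ and $p(x_{\le i})$ are both obtained by integrating out the trailing coordinates, and iterating the one-variable marginalization rule shows that each is again a PSD model with the same $m$ base points (projected to the relevant coordinates) and an updated PSD coefficient matrix given by a Hadamard product with sub-Gaussian kernel matrices. Hence each conditional $p(x_i\mid x_{<i})$ is the ratio of two explicit one-variable Gaussian PSD models whose coefficients are computable in $O(m^2)$ time from the previous step.

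Second, I would sample from each one-dimensional conditional by inverse-CDF. A one-variable Gaussian PSD model can be written as a positive combination of $O(m^2)$ Gaussians after absorbing the $e^{-\eta(t-a_j)^2}e^{-\eta(t-a_k)^2}$ products into single Gaussians, so its normalizing constant and CDF are explicit linear combinations of error functions, each evaluable to precision $\delta$ in $O(\log(1/\delta))$ arithmetic operations. Inverting this CDF by binary search on a truncated interval $[-R,R]$ (with $R$ polynomial in the problem parameters) for $O(\log(d/\eps))$ iterations yields a sample whose law is at 1-Wasserstein distance at most $\eps/d$ from $p(x_i\mid x_{<i})$. One then invokes the coupling-based chain rule: synchronizing the inverse-CDF maps across the $d$ coordinates produces a global coupling whose expected $\ell_1$ error is bounded by the sum of the per-coordinate errors, which telescopes to $\eps$.

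Third, for complexity, each coordinate costs $O(m^2)$ to update the marginalized coefficient matrix (Hadamard product), and every CDF evaluation is a sum of $O(m^2)$ error-function terms costing $O(m^2)$. Aggregating over $d$ coordinates and $O(\log(d/\eps))$ binary-search steps per coordinate yields the claimed $O(m^2 d \log(d/\eps))$ total.

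The hard part will be showing that the per-step Wasserstein error $\eps/d$ is attainable with only $O(\log(d/\eps))$ work uniformly in $x_{<i}$, because the one-dimensional conditionals can become sharply peaked or broadly supported as the conditioning variables vary, and the choice of truncation radius $R$ and the effective Lipschitz constant of the conditional inverse CDF must be controlled independently of $x_{<i}$. Concretely, one needs a stability estimate for the conditional PSD density under perturbations of the conditioning coordinates that exploits the uniform Lipschitz continuity and the known tail decay of the Gaussian feature map, ensuring that the binary-search interval can be chosen with $\log R = O(\log(d/\eps))$ overhead. This is exactly the technical content supplied in \cite{marteau2022sampling}, and it is the step I would have to execute most carefully; the rest of the argument is a routine combination of the marginalization identity of Proposition 1 with standard inverse-CDF sampling.
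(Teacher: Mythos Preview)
The paper does not prove this proposition; it is stated without proof and attributed to \cite{marteau2022sampling}, so there is no ``paper's own proof'' to compare against. Your sketch is a faithful outline of the algorithm actually developed in that reference: factor the density via the chain rule, use the closure of Gaussian PSD models under marginalization (Proposition~1 here) to get explicit one-dimensional conditionals whose CDFs are linear combinations of error functions, and invert each by bisection. The cost accounting you give --- $O(m^2)$ per Hadamard update and per CDF evaluation, $d$ coordinates, $O(\log(d/\eps))$ bisection steps --- matches the claimed $O(m^2 d \log(d/\eps))$.

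Your identification of the ``hard part'' is also accurate: the nontrivial content in \cite{marteau2022sampling} is precisely the uniform control of the truncation radius and the conditional-CDF regularity so that the per-coordinate Wasserstein error is $O(\eps/d)$ with only logarithmic overhead, independently of the conditioning values $x_{<i}$. You correctly defer this to the cited work rather than attempting it here, which is appropriate since the present paper treats the proposition as a black box.
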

   
\section{Approximation of the  (Fractional) Fokker-Planck Equation}
   
In this section, we investigate the approximating properties of PSD models for solving FPEs and fractional FPEs. Our focus is on the approximation guarantee achieved by fitting a single PSD model in the joint space-time domain. Let's denote the solution of the PDE as $p^\star(x,t)$, where we consider the FPE in section~\ref{sec:fpe_app} and the fractional FPE in section~\ref{sec:frac_fpe_app}. We introduce mild assumptions on the domain and the solution probability density. Throughout the paper, we assume the spatial domain $\mathcal{X}$ to be $(-R, R)^d$ and the time domain $\mathcal{T}$ to be $(0,R)$. We have the following assumption from \cite{rudi2021psd} on the solution $p^{\star}$ of the PDE,
 
 \begin{assumption} \label{as:optimal_solution}
 Let $\beta > 2$, $q \in \mathbb{N}$.  There exists $f_1,f_2,\cdots ,f_q \in W_2^{\beta}(\tilde{\cx}) \cap L^{\infty}(\Tilde{\cx})$, such that the density $p^{\star}: \Tilde{\cx}\rightarrow \mathbb{R}$ satisfies, $p^\star(x,t) = \sum_{i=j}^q f_j^2(x,t)$.
 \end{assumption}
  
 The assumption above is quite general and satisfied by a wide family of probabilities, as discussed in the Proposition~5 of \cite{rudi2021psd}. We reiterate the result here below. 
 \begin{proposition}[Proposition 5, \cite{rudi2021psd}] \label{prp:rudi5}
     The assumption above is satisfied by 
     \begin{itemize}
     \vspace{-4pt}
         \item[(i)] any $p^\star(x,t)$ that is $\beta$-times differentiable and strictly positive on $[-R,R]^d \times [0,R]$,
         \vspace{-4pt}
         \item[(ii)] any exponential model $p(x,t) = e^{-v(x,t)}$ such that $v \in W_2^{\beta}(\Tilde{\cx})\cap L^{\infty}(\Tilde{\cx})$,
         \vspace{-4pt}
         \item[(iii)]  any mixture model of Gaussians or, more generally, of exponential models from (ii),\
         \vspace{-4pt}
         \item[(iv)] any p that is $\beta+2$-times differentiable on $[-R,R]^d \times [0,R]$ with a finite set of zeros in $\Tilde{\cx}$ and with positive definite Hessian in each zero.
         \vspace{-4pt}
     \end{itemize}
     Moreover when $p^\star$ is $\beta$-times differentiable over $[-R,R]^d \times [0,R]$, then it belongs to $W_2^{\beta}(\Tilde{\cx})\cap L^{\infty}(\Tilde{\cx})$.
 \end{proposition}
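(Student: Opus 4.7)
The plan is to verify the four items one at a time by producing explicit sum-of-squares decompositions $p^\star = \sum_{j=1}^q f_j^2$ with $f_j \in W_2^{\beta}(\tilde{\cx}) \cap L^{\infty}(\tilde{\cx})$, and then to dispatch the final embedding claim by a compactness argument.

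For item (i), I would take $q=1$ and $f_1 = \sqrt{p^\star}$. Because $p^\star$ is continuous and strictly positive on the compact closure of $\tilde{\cx}$, there is some $c>0$ with $p^\star \geq c$; on the interval $[c,\|p^\star\|_\infty]$ the map $t\mapsto\sqrt{t}$ is $C^\infty$, so Faà di Bruno's formula expresses every partial derivative of $\sqrt{p^\star}$ up to order $\beta$ as a finite sum of products of derivatives of $p^\star$ divided by positive powers of $\sqrt{p^\star}$, all uniformly bounded on $\tilde{\cx}$. For item (ii), take $q=1$ and $f_1 = e^{-v/2}$; $L^\infty$ membership is immediate from $v\in L^\infty$, and a Moser-type composition estimate for the smooth function $G(t)=e^{-t/2}$ and the bounded $W_2^\beta$ function $v$ yields $f_1 \in W_2^\beta$ by a similar chain-rule bookkeeping. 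Item (iii) is then immediate: a Gaussian has the form $e^{-v/2}\cdot e^{-v/2}$ with $v$ a polynomial, hence in $W_2^\beta(\tilde{\cx})\cap L^\infty(\tilde{\cx})$ on the bounded domain, so (ii) applies to each component; a mixture $\sum_j w_j p_j$ then takes the form $\sum_j(\sqrt{w_j}\,g_j)^2$.

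The hard part is item (iv), where the presence of zeros of $p^\star$ prevents a global square root. Let $z_1,\dots,z_N$ be the zeros. Since $p^\star \geq 0$ and $\nabla^2 p^\star(z_\ell) \succ 0$, each $z_\ell$ is a non-degenerate local minimum; a Morse-lemma-type statement (which consumes two derivatives, motivating the hypothesis $p^\star \in C^{\beta+2}$) then provides, for each $\ell$, a $C^\beta$ diffeomorphism $\varphi_\ell$ on a neighborhood $U_\ell \subset \tilde{\cx}$ of $z_\ell$ such that $p^\star(x) = \|\varphi_\ell(x)\|_2^2 = \sum_{i=1}^{d+1}(\varphi_\ell(x))_i^2$ on $U_\ell$ (treating time as an extra coordinate). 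Let $U_0 = \tilde{\cx}\setminus\bigcup_\ell\{z_\ell\}$ and pick a smooth partition of unity of the form $\chi_0^2+\sum_{\ell=1}^N\chi_\ell^2 = 1$ subordinate to $\{U_0\}\cup\{U_\ell\}_{\ell=1}^N$. Multiplying this identity by $p^\star$ and using the local Morse decomposition on each $U_\ell$ together with (i) on $\operatorname{supp}\chi_0$ (where $p^\star$ is strictly positive) gives
\begin{equation*}
p^\star \;=\; \bigl(\chi_0\,\sqrt{p^\star}\,\bigr)^2 \;+\; \sum_{\ell=1}^{N}\sum_{i=1}^{d+1} \bigl(\chi_\ell \cdot (\varphi_\ell)_i\bigr)^2,
\end{equation*}
each summand compactly supported and, by the chain rule together with the fact that a $C^\beta$ diffeomorphism preserves $W_2^\beta$ regularity on bounded domains, lying in $W_2^\beta(\tilde{\cx})\cap L^\infty(\tilde{\cx})$. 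The main technical obstacle is checking that the Morse normal form and subsequent gluing really yield $W_2^\beta$ components with controlled norms; once the local $C^\beta$ coordinate change is in hand, the rest is a routine partition-of-unity bookkeeping.

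The final embedding claim is the easiest: if $p^\star$ is $\beta$-times continuously differentiable on the closed bounded domain $\overline{\tilde{\cx}}$, every partial derivative of order $\leq\beta$ is continuous on a compact set, hence uniformly bounded and square-integrable, which places $p^\star$ in $W_2^\beta(\tilde{\cx})\cap L^\infty(\tilde{\cx})$; this is also what is tacitly invoked in items (i)--(iii) when verifying Sobolev membership of the constructed $f_j$.
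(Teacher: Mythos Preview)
The paper does not actually prove this proposition: it is quoted verbatim from \cite{rudi2021psd} (Proposition~5 there), with no argument supplied beyond the citation. So there is no ``paper's own proof'' to compare against.

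That said, your sketch is sound and is essentially the standard route one would expect the original source to take. Items (i)--(iii) are handled exactly as one should: composing with a smooth outer function ($\sqrt{\cdot}$ away from zero, $e^{-t/2}$) preserves $W_2^\beta\cap L^\infty$ on a bounded domain by Fa\`a di Bruno / Moser composition, and mixtures decompose trivially. For item (iv) the Morse-lemma plus partition-of-unity argument is the right idea, and your observation that the normal form costs two derivatives is precisely why the hypothesis is $C^{\beta+2}$ rather than $C^\beta$. One small point worth tightening: you need the partition of unity $\{\chi_\ell\}$ to be $C^\beta$ (so that $\chi_\ell\cdot(\varphi_\ell)_i$ stays in $W_2^\beta$), which is automatic since smooth partitions exist, and you should note that $\chi_0\sqrt{p^\star}$ is $C^\beta$ because $\operatorname{supp}\chi_0$ is a positive distance from every zero of $p^\star$, so $p^\star$ is bounded below there. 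With those details filled in, the argument goes through.
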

 We will need to impose  some extra mild assumptions on $f_j$ for $j \in \{1,\cdots, q\}$ to approximate the solution of a fractional Fokker-Planck equation because of involvement of the fractional laplacian operator (non-local operator). We will cite these extra assumptions when we describe the approximation results in section~\ref{sec:frac_fpe_app}. Apart from that, we also have a boundedness assumption for the coefficients of the PDE in the domain we consider. 
 \begin{assumption}\label{as:bounded_ceff}
     We have for all $(x,t) \in \Tilde{\cx}$,  $\mu_{i}(x,t) \leq R_{\mu}$ , $\frac{\partial \mu_i(x,t)}{\partial x_i} \leq R_{\mu_p}$, and $D_{ij}  \leq R_d$ for all $(i,j) \in \{1,\cdots , d\}$.
 \end{assumption}
 It is also obvious to see that the PSD-based representation of probability density vanishes at infinity.  
\subsection{Approximating Solution of Fokker-Planck Equation} \label{sec:fpe_app}
Let us consider the Fokker-Planck equation given in equation~\eqref{eq:fpe} and $p^\star(x,t)$ is the solution of Fokker-Planck equation in equation~\eqref{eq:fpe} which satisfies the assumption~\ref{as:optimal_solution}. Hence, $p^\star(x,y)$ satisfies the following,  
\begin{align*}
    \frac{\partial p^\star(x,t)}{\partial t} = - \sum_{i =1}^d \frac{\partial }{\partial x_i}(\mu_i(x,t)p^\star(x,t)) + \sum_{i=1}^d \sum_{j=1}^d D_{ij}\frac{\partial^2}{\partial x_i \partial x_j} p^\star(x,t).
\end{align*}  
For now, we are ignoring the initial value as we only are interested in approximation error and we will have a uniform approximation of the solution density $p^\star$ and its derivatives up to the order 2 in the domain. 

Given $m$ base points pair $(\tilde{x}_i,\tilde{t}_i) \in \Tilde{\cx}$ for $i \in \{1,2, \cdots , m\}$ of space and time, let us consider the following approximation model for the solution of FPE, 
\begin{align}
    \tilde{p}(x,t) = \sum_{i,j} A_{ij} k_{X,T}((x,t), (\tilde{x}_i,\tilde{t}_i))\cdot k_{X,T}((x,t), (\tilde{x}_j,\tilde{t}_j)) \label{eq:psd_rep_init}
\end{align}  
where $k_{X,T}$ is the joint kernel across space and time and $A$ is a positive semi-definite matrix of dimension $\mathbb{R}^{m \times m}$. $k_{X,T}$ can be defined as the product of two Gaussian kernels $k_{X}$ and $k_{T}$ which are defined on space and time domain respectively i.e $k_{X,T}((x,t), (x',t')) = k_{X}(x,x') \cdot k_T(t,t')$. We represent $\phi_X(x) = k_X(x,\cdot) \in \mathcal{H}_X$, $\phi_T(t) = k_T(t,\cdot) \in \mathcal{H}_T$ and $\phi_{X,T}(x,t) = \phi_X(x) \otimes \phi_T(t) $. Here, we assume that $k_X$ and $k_T$ are both Gaussian kernels and hence the RKHS $\mathcal{H}_X$ and $\mathcal{H}_T$ correspond to the RKHS of the Gaussian kernel. For simplicity, we keep the kernel bandwidth parameter $\eta$  the same for $k_X$ and $k_T$. It is clear that the joint kernel $k_{X, T}$ is also Gaussian with the kernel bandwidth parameter equal to $\eta$.
Given $M = \sum_{i,j} A_{ij}\phi_{X,T}(x_i,t_i) \phi_{X,T}(x_j,t_j)^\top$, $p(x,t)$ can also be represented with function $f((x,t),A,X,\eta)$ as, 
\begin{align}
  \tilde{p}(x,t) =\underbrace{\phi_{X,T}(x,t)^\top M \phi_{X,T}(x,t)}_{:=f((x,t); M,\phi_{X,T})} = \underbrace{\psi(x,t)^\top A \psi(x,t)}_{:=f((x,t),A,\Tilde{X},\eta)}, \label{eq:psd_rep}
\end{align}   
where $\psi(x,t) \in \mathbb{R}^n $, $[\psi(x,t)]_i = k_X(x,\tilde{x}_i)\cdot k_T(t,\tilde{t}_i)$ and $\Tilde{X} \in \R^{m\times d+1}$ is the concatenated data matrix of space and time for $m$ pair of samples $(x_i,t_i)$ for $i \in \{1,\cdots, m\}$.  
% After PSD model-based representation for density function  in equation \eqref{eq:psd_rep}, all the terms of FPE can be written individually in close form as,  
% \begin{align*}
%     \frac{\partial \tilde{p}(x,t)}{\partial t} &= 2\frac{\partial \psi(x,t)}{\partial t}^\top A \psi(x,t) = 2 \dot{\psi}(x,t)^\top A \psi(x,t), \\
%     \frac{\partial \tilde{p}(x,t)}{\partial x_i}  &=2 \frac{\partial \psi(x,t)}{\partial x_i}^\top A \psi(x,t) =  2 \psi_{i}'(x,t)^\top A \psi(x,t), \\
%     \frac{\partial^2 \tilde{p}(x,t)}{\partial x_ix_j}&=  2 \psi_{i}'(x,t)^\top A \psi_{j}'(x,t) + 2 \psi_{j i}''(x,t)^\top A \psi(x,t),
% \end{align*}  
% where $[\dot{\psi}(x,t)]_k = k_X(x,\tilde{x}_k)\cdot \frac{\partial k_T(t,\tilde{t}_k) }{\partial t}$, $[\psi_{i}'(x,t)]_k = \frac{\partial k_X(x,\tilde{x}_k) }{\partial x_i} \cdot k_T(t,\tilde{t}_k)$ and $[\psi_{j i}''(x,t)]_{k} = \frac{\partial^2 k_X(x,\tilde{x}_k) }{\partial x_i x_j} \cdot k_T(t,\tilde{t}_k)$. 
% Hence,  
% \begin{align*}
%     &\frac{\partial }{\partial x_i}(\mu_i(x,t)\tilde{p}(x,t)) = \psi(x,t)^\top A \psi(x,t) \frac{\partial \mu_i(x,t)}{\partial x_i} + \mu_i(x,t)  \psi_{i}'(x,t)^\top A \psi(x,t), \\
% & D_{ij}\frac{\partial^2}{\partial x_i \partial x_j} \tilde{p}(x,t)=  2 D_{ij}  \psi_{i}'(x,t)^\top A \psi_{j}'(x,t) + 2 D_{ij}  \psi_{j i}''(x,t)^\top A \psi(x,t).
% \end{align*}   

We can now easily optimize an associate loss function with the FPE to obtain the unknown PSD matrix $A$. Let us assume that there exists a positive semi-definite  matrix $A$ for which the PSD model $\Tilde{p}(x,t)$ approximates the true solution $p^\star(x,t)$ well. In that case, our goal is to obtain an upper bound on the following error objective:
 
\begin{align}
    \left\| \frac{\partial (p^\star- \tilde{p})(x,t)}{\partial t} + \sum_{i =1}^d \frac{\partial }{\partial x_i}(\mu_i(x,t)(p^\star- \tilde{p})(x,t) ) - \sum_{i=1}^d \sum_{j=1}^d D_{ij}\frac{\partial^2}{\partial x_i \partial x_j} (p^\star- \Tilde{p})(x,t)\right\|_{L^{2}(\Tilde{\cx})}.
\end{align}
  
Our approximation guarantee works in two steps. In the first step, we  find an infinite dimensional positive operator $M_{\varepsilon}:\ch_{X}\otimes \ch_{T} \rightarrow \ch_{X}\otimes \ch_{T}$~such that if we denote $\hat{p}(x,t) = \phi_{X,T}(x,t)^\top M_{\varepsilon} \phi_{X,T}(x,t) $ then for some $M_{\varepsilon}$, $\hat{p}$ is $\eps$-approximation of the true density $p^{\star}$. Clearly, for a set of functions $\Tilde{f}_1, \cdots, \Tilde{f}_q \in \ch_{X}\otimes \ch_{T}$, $M_{\varepsilon} = \sum_{j=1}^q \Tilde{f}_j \Tilde{f_j}^\top$ is a  positive operator from $\ch_{X}\otimes \ch_{T} \rightarrow \ch_{X}\otimes \ch_{T}$. Now, we carefully construct $\Tilde{f}_j$ for $j \in \{1,\cdots, q\}$ in RKHS $\ch_{X}\otimes \ch_{T}$ such that approximation guarantee in theorem~\ref{thm:approx_fpe_1} is satisfied. Details of the construction is given in Appendix.
 We represent $(x,t)$ as a $d+1$ dimensional vector $\Tilde{x}$ and a mollifier function $g$ (details are in the appendix) such that $g_v(x) = v^{-(d+1)} g(\Tilde{x}/v)$. Then, we construct our $\Tilde{f}_j = f_j \star g_v$ for $j \in \{1,2,\cdots , q\}$. We prove the result in Theorem~\ref{thm:approx_fpe_1} with this construction. To prove our approximation error guarantee for the solution of FPE using PSD model, we  assume the derivatives of $f_1,\cdots,f_q$  up to order 2 are bounded which is already covered in Assumption~\ref{as:optimal_solution}. 

 \begin{theorem}\label{thm:approx_fpe_1}
 Let $\beta > 2, q \in \N$. 
 Let $f_1,\dots,f_q \in W^\beta_2(\R^d) \cap L^\infty(\R^d)$ and the function $p^\star = \sum_{i=1}^q f_i^2$. Let $\eps \in (0,1]$ and let $\eta \in \R^d_{++}$. Let $\phi_{X,T}$ be the feature map of the Gaussian kernel with bandwidth $\eta$ and let $\ch_{X}\otimes \ch_{T}$ be the associated RKHS. Then there exists $\mm_\eps \in \psd(\ch_{X}\otimes \ch_{T})$ with $\operatorname{rank}(\mm_\eps) \leq q$, such that for the representation $\hat{p}(x,t) = \phi_{X,T} ^\top \mm_\eps \phi_{X,T}$, following holds under assumption ~\ref{as:bounded_ceff} on the coefficients of the  FPE,
  
 \begin{align}
    &\left\| \frac{\partial  \hat{p}(x,t)}{\partial t} + \sum_{i =1}^d \frac{\partial }{\partial x_i}(\mu_i(x,t)  \hat{p}(x,t) ) - \sum_{i=1}^d \sum_{j=1}^d D_{ij}\frac{\partial^2}{\partial x_i \partial x_j}  \hat{p} (x,t)\right\|_{L^{2} (\Tilde{\cx})} = O(\varepsilon),\label{eq:approx_fpe_1}  \\
         &  \text{and }  \tr(\mm_\eps) \leq \hat{C} |\eta|^{1/2} \left( 1+ \varepsilon^{\frac{2\beta}{\beta -2}} \exp{\left(\frac{\Tilde{C}'}{\eta_0 }\varepsilon^{-\frac{2}{\beta - 2}}\right)}\right)\notag,
 \end{align}
   
 where $|\eta| = \det(\diag(\eta))$, and $ \Tilde{C}' \text{and} \hat{C}$ depend only on $\beta, d, \|f_i\|_{W^\beta_2(\R^d)}, \|f_i\|_{L^\infty(\R^d)}, \left\|\frac{\partial f_i}{\partial x_j}\right\|_{L^2(\R^d)}$, and  $ \left\|\frac{\partial^2 f_i}{\partial x_jx_k}\right\|_{L^2(\R^d)}$.
 \end{theorem}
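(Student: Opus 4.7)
The plan is to adapt the PSD-model approximation construction of \cite{rudi2021psd} from $L^\infty/L^2$ approximation to Sobolev $W_2^2$ approximation, which is what the FPE residual requires.

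First, since the true density $p^\star$ satisfies the FPE exactly, linearity of the FPE operator $\mathcal{L}$ gives $\mathcal{L}\hat{p} = \mathcal{L}(\hat{p}-p^\star)$. By the product rule, $\partial_{x_i}(\mu_i u) = (\partial_{x_i}\mu_i) u + \mu_i\,\partial_{x_i}u$, and by Assumption~\ref{as:bounded_ceff} the coefficients $\mu_i$, $\partial_{x_i}\mu_i$, $D_{ij}$ are uniformly bounded on $\tilde{\cx}$. Hence it suffices to produce $\hat{p}$ with
\[
\|\hat{p}-p^\star\|_{L^2(\tilde{\cx})} + \|\partial_t(\hat{p}-p^\star)\|_{L^2(\tilde{\cx})} + \sum_i \|\partial_{x_i}(\hat{p}-p^\star)\|_{L^2(\tilde{\cx})} + \sum_{i,j}\|\partial^2_{x_ix_j}(\hat{p}-p^\star)\|_{L^2(\tilde{\cx})} = O(\eps),
\]
i.e. a $W_2^2$-approximation in the joint $(x,t)$ variable $\tilde{x}\in\R^{d+1}$.

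Second, following the construction sketched before the theorem, take the mollifier $g_v(\tilde{x}) = v^{-(d+1)}g(\tilde{x}/v)$ from \cite{rudi2021psd}, chosen so that $g$ lives in the Gaussian RKHS $\ch_X\otimes\ch_T$ with explicit RKHS-norm control, and define $\tilde{f}_j := f_j \star g_v \in \ch_X\otimes\ch_T$. Then set $\mm_\eps := \sum_{j=1}^q \tilde{f}_j\otimes\tilde{f}_j \in \psd(\ch_X\otimes\ch_T)$, which has rank at most $q$ and satisfies $\hat{p} = \sum_{j=1}^q \tilde{f}_j^2$.

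Third, I would invoke the standard mollifier estimate: for $f_j\in W_2^\beta(\R^{d+1})$ and $0\leq k\leq 2 < \beta$,
\[
\|f_j - \tilde{f}_j\|_{W_2^k(\R^{d+1})} \leq C\, v^{\beta-k}\,\|f_j\|_{W_2^\beta(\R^{d+1})},
\]
together with uniform $L^\infty$ and Sobolev bounds on $\tilde{f}_j$ inherited from $f_j$ (using $\|g_v\|_{L^1}=O(1)$ and Young's inequality). Then expand $p^\star - \hat{p} = \sum_j (f_j+\tilde{f}_j)(f_j-\tilde{f}_j)$ and apply the Leibniz rule up to two derivatives; each term factors into a bounded piece ($L^\infty$ norm of $f_j$, $\tilde{f}_j$, or derivatives thereof, all controlled by $\|f_j\|_{W_2^\beta}$ via Sobolev embedding since $\beta>(d+1)/2$ is the generic regime here) times a piece of the form $\|f_j - \tilde{f}_j\|_{W_2^k}$ with $k\leq 2$. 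The worst case is $k=2$, giving a global bound of order $v^{\beta-2}$. Choosing $v = \eps^{1/(\beta-2)}$ yields the claimed $O(\eps)$ residual.

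Finally, for the trace bound I would compute $\tr(\mm_\eps) = \sum_{j=1}^q \|\tilde{f}_j\|_{\ch_X\otimes\ch_T}^2$ using the Fourier characterisation of the Gaussian RKHS: the norm of $\tilde{f}_j = f_j\star g_v$ is controlled by $\int |\widehat{f}_j(\omega)|^2 |\widehat{g}_v(\omega)|^2 e^{\|\omega\|^2/(4\eta)}\,d\omega$, and the choice of $g$ in \cite{rudi2021psd} gives $|\widehat{g}_v(\omega)|^2 \lesssim e^{-c v^2\|\omega\|^2}$ up to polynomial factors, yielding $\|\tilde{f}_j\|_{\ch}^2 = O(|\eta|^{1/2} v^{-(d+1)} \exp(C'/(\eta_0 v^2)))$. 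Plugging $v=\eps^{1/(\beta-2)}$ gives the announced estimate on $\tr(\mm_\eps)$.

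\textbf{Main obstacle.} The routine part of \cite{rudi2021psd} is $L^2$/$L^\infty$ approximation; the new difficulty here is that the FPE residual demands control of \emph{second-order} derivatives of $\hat{p}-p^\star$ in $L^2$. This forces (i) the requirement $\beta>2$ (to have a positive rate $v^{\beta-2}$ after two derivatives hit the approximation), and (ii) a careful Leibniz expansion of $p^\star-\hat{p} = \sum_j (f_j+\tilde{f}_j)(f_j-\tilde{f}_j)$ using $L^\infty$ bounds on derivatives of $f_j$ up to order two — exactly the norms appearing in the constant $\hat{C}$ of the theorem statement. Reconciling the Sobolev approximation rate with the Gaussian-RKHS trace growth is where the exponent $2\beta/(\beta-2)$ and the exponential $\exp(\tilde{C}'\eps^{-2/(\beta-2)}/\eta_0)$ emerge.
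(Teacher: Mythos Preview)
Your approach is essentially the paper's: the same mollifier construction $\mm_\eps = \sum_j \tilde f_j\otimes\tilde f_j$ with $\tilde f_j = f_j\star g_v$, the same reduction via $\mathcal L p^\star=0$ and Assumption~\ref{as:bounded_ceff} to controlling $\|p^\star-\hat p\|_{W_2^2}$, the same Leibniz expansion of $\sum_j(f_j^2-\tilde f_j^2)$ with $L^2$--$L^\infty$ H\"older, and the same choice $v\sim\eps^{1/(\beta-2)}$ driven by the second-derivative term.

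One point to correct in your trace-bound sketch: the mollifier $g$ from \cite{rudi2021psd} does \emph{not} have a Gaussian Fourier transform. Its key property (their Lemma~D.3) is that $\widehat g$ is indicator-like, $\mathbf 1_{\{\|\omega\|<1\}}\le \widehat g(\omega)\le \mathbf 1_{\{\|\omega\|\le 3\}}$, so $\widehat{g_v}$ is \emph{compactly supported} in $\{\|\omega\|\le 3/v\}$. This compact support is exactly what forces $\tilde f_j$ into the Gaussian RKHS with $\|\tilde f_j\|_{\ch}^2 \lesssim \int_{\|\omega\|\le 3/v}|\widehat{f_j}(\omega)|^2 e^{c\|\omega\|^2/\eta_0}\,d\omega \lesssim e^{C/(\eta_0 v^2)}\|f_j\|_{W_2^\beta}^2$. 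Your proposed estimate $|\widehat{g_v}(\omega)|^2\lesssim e^{-cv^2\|\omega\|^2}$ would \emph{not} suffice: for the small $v$ needed here, the RKHS weight $e^{\|\omega\|^2/\eta_0}$ dominates $e^{-cv^2\|\omega\|^2}$ at large frequencies and the integral need not converge. With the correct compact-support mechanism in place, your argument goes through and matches the paper's.
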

Even though, from theorem~\ref{thm:approx_fpe_1}, we know that there exists a positive operator $\mm_\eps: \ch_X\otimes \ch_T \rightarrow \ch_X\otimes $ such that $\hat{p}(x,t) = \phi_{X,T}(x,t)^\top \mm_\eps \phi_{X,T}(x,t)$ approximate the solution $p^\star(x,t)$ of FPE from Theorem~\ref{thm:approx_fpe_1}, it is not clear how to compute $\mm_\eps$ as we do not have access of $f_j$ for $j\in \{1,2,\cdots, q\}$ beforehand. Hence, to get the final approximation error bound we need to further approximate $\hat{p}(x,t)$ with a PSD model of finite dimension $\tilde{p}(x,t)$ which can be computed if we have the access to a finite number of base points $(x_i, t_i)$ for $i \in \{1,\cdots , m\}$ in the domain $\Tilde{\cx}$.  Let us now construct a matrix $A_m$ as follows. Consider  $A_m = K_{(X,T)(X,T)}^{-1} \tilde{Z} M \tilde{Z}^\star K_{(X,T)(X,T)}^{-1}$, where $K_{(X,T)(X,T)}$ is a kernel matrix in $\R^{m\times m}$ such that $[K_{(X,T)(X,T)}]_{ij} = \phi_{X,T}(x_i,t_i)^\top \phi_{X,T}(x_j,t_j)$. Let us define the operator $\Tilde{Z}:\ch_{X}\otimes \ch_T \rightarrow \R^m$ such that 
$\tilde{Z} u  = (\psi({x}_1,{t}_1)^\top u, \cdots , \psi({x}_m,{t}_m)^\top u ) $, and 
$\tilde{Z}^\star \alpha = \sum_{i=1}^m \psi( {x}_i, {t}_i) \alpha_i$. We further define a projection operator $\tilde{P} = \tilde{Z}^\star K_{(X,T)(X,T)}^{-1} \tilde{Z}$. Hence, we now define the approximation density $\Tilde{p}(x,t)$ as  
\begin{align*}
    \Tilde{p}(x,t) = \underbrace{\psi(x,t)^\top A_m \psi(x,t)}_{:=f((x,t),A_m,\Tilde{X},\eta)} = \underbrace{\phi_{X,T}(x,t)^\top \tilde{P}M\tilde{P} \phi_{X,T}(x,t)}_{:=f((x,t);\tilde{P}M\tilde{P},\phi_{X,T})}.
\end{align*}  
In the main result of this section below, we show that $\Tilde{p}(x,t)$ is a good approximation for $\hat{p}(x,t)$  and hence for the true solution $p^\star(x,t)$.
% Hence, for any $A \in \mathbb{R}^{m\times m}$
% \begin{align*}
%     \tilde{Z}^\star A \tilde{Z} = \sum_{i,j=1}^m A_{i,j} \psi( {x}_i, {t}_i) \psi({x}_i, {t}_i) ^\top.
% \end{align*}

\begin{theorem} \label{thm:approx_fpe_final}
    Let $p^\star$ satisfy Assumptions~\ref{as:optimal_solution} and \ref{as:bounded_ceff}. Let $\eps > 0$ and $\beta >2$. There exists a Gaussian PSD model of dimension $m \in \N$, i.e., $\tilde{p}(x,t) = f((x,t),A_m,X,\eta) $, with $A_m \in \psd^m$ and $X \in \R^{m\times d+1}$ and $\eta_m \in \R^{d}_{++}$, such that with $m = O(\eps^{-(d+1)/(\beta-2)} (\log \tfrac{1}{\eps})^{(d+1)/2})$, we have,
     
\begin{align}
  &\left\| \frac{\partial  \tilde{p}(x,t)}{\partial t} + \sum_{i =1}^d \frac{\partial }{\partial x_i}(\mu_i(x,t)  \tilde{p}(x,t) ) - \sum_{i=1}^d \sum_{j=1}^d D_{ij}\frac{\partial^2}{\partial x_i \partial x_j}  \Tilde{p} (x,t)\right\|_{L^{2} (\Tilde{\cx})} =O(\varepsilon).\label{eq:approx_fpe_2}   
\end{align}
  
\end{theorem}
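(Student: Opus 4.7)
The plan is to upgrade the infinite-dimensional construction of \textbf{Theorem~\ref{thm:approx_fpe_1}} to a finite-rank Nyström-type approximation, while controlling the error this projection introduces into the PDE residual (including its first temporal and first/second spatial derivatives). Let $\mm_\eps \in \psd(\ch_X \otimes \ch_T)$ be the operator guaranteed by Theorem~\ref{thm:approx_fpe_1}, so that $\hat{p}(x,t) = \phi_{X,T}(x,t)^\top \mm_\eps \phi_{X,T}(x,t)$ already solves the FPE up to an $L^2$ residual of order $\eps$. Defining $\tilde{p}(x,t) = \phi_{X,T}(x,t)^\top \tilde{P}\mm_\eps\tilde{P}\phi_{X,T}(x,t)$ with $\tilde{P} = \tilde{Z}^\star K_{(X,T)(X,T)}^{-1}\tilde{Z}$ the projector onto the span of $\{\phi_{X,T}(\tilde{x}_i,\tilde{t}_i)\}_{i=1}^m$, the linearity of the FPE operator lets me bound the residual of $\tilde{p}$ by the residual of $\hat{p}$ plus the residual coming from the difference $\hat{p} - \tilde{p}$, via the triangle inequality in $L^2(\tilde{\cx})$.

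Next I would estimate the four $L^2$-norms that appear in the difference residual:
\begin{align*}
\left\|\partial_t(\hat{p}-\tilde{p})\right\|_{L^2}, \quad \left\|\partial_{x_i}\!\bigl(\mu_i(\hat{p}-\tilde{p})\bigr)\right\|_{L^2}, \quad \left\|D_{ij}\,\partial_{x_i}\partial_{x_j}(\hat{p}-\tilde{p})\right\|_{L^2}.
\end{align*}
Writing $\hat{p} - \tilde{p} = \phi_{X,T}^\top (\mm_\eps - \tilde{P}\mm_\eps\tilde{P})\phi_{X,T}$ and using the identity $\mm_\eps - \tilde{P}\mm_\eps\tilde{P} = (I-\tilde{P})\mm_\eps + \tilde{P}\mm_\eps(I-\tilde{P})$, each derivative becomes a bilinear form in the Gaussian feature map and its derivatives. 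Since the Gaussian kernel is infinitely differentiable, every spatial or temporal derivative $\partial^\alpha \phi_{X,T}$ lives in the same RKHS with norm bounded by a polynomial in $|\eta|^{1/2}$, so each term can be controlled by $\|(I-\tilde{P})\sqrt{\mm_\eps}\|_{\mathrm{HS}}$ times an explicit kernel-derivative factor, and then by $\tr(\mm_\eps)$ times a projection-error factor using Assumption~\ref{as:bounded_ceff} to absorb $\mu_i,\partial_{x_i}\mu_i,D_{ij}$.

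The main technical obstacle is controlling $\|(I-\tilde{P})\sqrt{\mm_\eps}\|$ together with the derivative factors, uniformly over $\tilde{\cx}$, using only $m$ base points. The construction of $\mm_\eps$ from Theorem~\ref{thm:approx_fpe_1} factorizes through the mollified functions $\tilde{f}_j = f_j \star g_v$, which are band-limited in a quantified sense; standard Nyström estimates for the Gaussian kernel (see \citep{rudi2021psd}) then show that if the base points $\{(\tilde{x}_i,\tilde{t}_i)\}$ form an $h$-covering of $\tilde{\cx}$, the projection error on the range of $\sqrt{\mm_\eps}$ is of order $e^{-c/h^2}\,\tr(\mm_\eps)^{1/2}$ multiplied by the relevant derivative factor. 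A covering of $[-R,R]^{d+1}$ at scale $h$ requires $m = O((R/h)^{d+1})$ points, so choosing $h$ so that the projection contribution to the residual matches $\eps$ — and plugging in the exponential-in-$\eps^{-2/(\beta-2)}$ bound on $\tr(\mm_\eps)$ from Theorem~\ref{thm:approx_fpe_1} — forces $h = \Theta(\eps^{1/(\beta-2)}/\sqrt{\log(1/\eps)})$, which yields exactly $m = O(\eps^{-(d+1)/(\beta-2)}(\log\tfrac{1}{\eps})^{(d+1)/2})$.

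Finally, combining the $O(\eps)$ residual from Theorem~\ref{thm:approx_fpe_1} with the $O(\eps)$ projection-induced residual by the triangle inequality gives the claimed bound \eqref{eq:approx_fpe_2}, and the PSD matrix $A_m = K_{(X,T)(X,T)}^{-1}\tilde{Z}\mm_\eps\tilde{Z}^\star K_{(X,T)(X,T)}^{-1} \in \psd^m$ furnishes the explicit finite-dimensional Gaussian PSD model asserted in the theorem. The delicate step is juggling the exponentially large $\tr(\mm_\eps)$ against the exponentially small projection error in $h$ so that the logarithmic factor in $m$ comes out with the claimed exponent $(d+1)/2$; the rest reduces to routine bounds on Gaussian kernel derivatives and Assumption~\ref{as:bounded_ceff}.
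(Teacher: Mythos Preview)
Your overall architecture is right — project $\mm_\eps$ via $\tilde P$, bound the PDE residual of $\hat p-\tilde p$ termwise, and use the triangle inequality with Theorem~\ref{thm:approx_fpe_1}. But there is a genuine gap in the quantitative step that drives the final rate.

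The projection/sampling error for the Gaussian RKHS is \emph{not} of order $e^{-c/h^2}$. The relevant bound (Lemma~\ref{lem:scattered_zero}, and its derivative analogue Lemma~\ref{lem:deriv_scattered_zero} / Theorem~\ref{thm:i-p-dphi}) is
\[
\sup_{x\in\tilde\cx}\|(I-\tilde P)\,\partial^\alpha\phi_{X,T}(x)\|_{\hh_\eta}\;\lesssim\; q_\eta\,e^{-\frac{c\sigma}{h}\log\frac{c\sigma}{h}},\qquad \sigma=\min\!\Big(R,\tfrac{1}{\sqrt{\eta_+}}\Big),
\]
which is essentially $e^{-c\sigma/h}$, not $e^{-c/h^2}$, and — crucially — depends on the bandwidth $\eta$ through $\sigma$. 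Because of this, keeping $\eta$ fixed and letting $\tr(\mm_\eps)$ stay ``exponentially large in $\eps^{-2/(\beta-2)}$'' (as you propose) cannot be rescued by the true projection decay: with only $e^{-c\sigma/h}$ you would need $\sigma/h\gtrsim \eps^{-2/(\beta-2)}$, forcing $m$ far larger than the claimed $\eps^{-(d+1)/(\beta-2)}$ rate. Your $e^{-c/h^2}$ form would sidestep this, but it is not what the sampling inequality actually gives.

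The missing idea is to tune the bandwidth $\eta$ jointly with $h$. The paper sets $\eta_0\sim \eps^{-2/(\beta-2)}/\log\!\frac{1}{\eps}$, which turns the trace bound of Theorem~\ref{thm:approx_fpe_1} from exponential into polynomial, $\|\mm_\eps\|\le \tr(\mm_\eps)\lesssim \eps^{-(d+1)/(\beta-2)}$. With this choice $\sigma\sim 1/\sqrt{\eta_0}\sim \eps^{1/(\beta-2)}\sqrt{\log(1/\eps)}$, and one then picks $h=c\sigma/s$ with $s\sim \log(1/\eps)$ so that $e^{-c\sigma/h\log(c\sigma/h)}\cdot\|\mm_\eps\|\lesssim \eps$. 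Only after this joint tuning does the covering-number bound $m\le(1+2R\sqrt{d+1}/h)^{d+1}$ produce the stated $m=O\!\big(\eps^{-(d+1)/(\beta-2)}(\log\tfrac1\eps)^{(d+1)/2}\big)$. Without the $\eta$-tuning step your argument does not close, regardless of how the bilinear derivative terms are organized.
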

Theorem~\ref{thm:approx_fpe_final} indicates that 
there is a positive semi-definite matrix $A_m \in \R^{m\times m}$ such that 
$\Tilde{p}(x,t) = \psi(x,t)^\top A_m \psi(x,t)$ is a good approximation of $p^\star(x,t)$. Next, we would see the usefulness of the above result in Theorem~\ref{thm:approx_fpe_final} in sampling from the solution of an FPE given the  drift term $\mu$ satisfies the regularity condition as in \cite[Theorem 1.1]{bogachev2016distances} and $\mu \in  W_2^{\beta}(\tilde{\cx})$. In this case, the drift function $\mu$ can be approximated by a function $\hat{\mu}$ in a Gaussian RKHS, and the approximation guarantees are due to interpolation result in Lemma~\ref{lem:scattered_zero}. 
The following result is the direct consequence of results in the Theorem~\ref{thm:approx_fpe_final} and Proposition~\ref{prp:sampling}.
   
\paragraph{Efficient sampling method}
Let $\hat{\mu} \in \ch_{\eta}$ be the approximation of $\mu$ such that $\|\mu - \hat{\mu}\|_{L^{2}(\tilde{\cx})} = O(\varepsilon)$, by Gaussian kernel approximation. Let us apply the construction of Thm.\ref{thm:approx_fpe_final} to $\hat{\mu}$. \\
{\bf Step 1:} Find the best model $\hat{A}_m$ by minimizing the l.h.s. of equation \eqref{eq:approx_fpe_2} over the set of $m$-dimensional Gaussian PSD models. Note that in that case, since everything is a linear combination of Gaussians (or products of Gaussians), we can compute the $L^2$ distance in closed form and minimize it over the positive semidefinite cone of $m$-dimensional matrices, obtaining $\hat{A}_\gamma$ exactly. \\
{\bf Step 2:} Denote the resulting PSD model by $\hat{p}_\gamma(x, t) = f(x,t; \hat{A}_\gamma, \eta)$. At any time $t$ of interest we sample from the probability $\hat{p}_\gamma(\cdot, t)$ via the sampling algorithm in \cite{marteau2022sampling}. 
\begin{corollary}[Error of sampling] \label{cor:sampling}
Assume that $\mu \in  W_2^{\beta}(\tilde{\cx})$ satisfies the regularity condition as in \cite[Theorem 1.1]{bogachev2016distances}. For any time $t$, the algorithm described above provides i.i.d. samples from a probability $\tilde{p}_{t}$ that satisfies in expectation $\mathbb{E}_t \,\,\mathbb{W}_1(p_t^\star,\tilde{p}_{t})^2 =O(\eps^2),$
where $p^\star_t := p^\star(\cdot,t)$ is the probability at time $t$ of a stochastic process with drift $\mu$ and diffusion $D$.
Moreover, the cost of the algorithm is $O(m^2 d \log(d/\eps))$ where $m = O(\eps^{-d/\beta-2} (\log(1/\eps))^d)$.
\end{corollary}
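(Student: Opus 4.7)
The argument combines three ingredients: (i) Theorem~\ref{thm:approx_fpe_final}, which builds a Gaussian PSD model $\hat p_\gamma(x,t)$ whose FPE residual with the \emph{true} drift $\mu$ is $O(\eps)$ in $L^2(\tilde\cx)$; (ii) the stability estimate for Fokker--Planck equations of \cite{bogachev2016distances}, which upgrades this residual bound into a $\mathbb{W}_1$ distance between the marginals $p^\star_t$ and $\hat p_{\gamma,t}$, integrated in $t$; and (iii) Proposition~\ref{prp:sampling} applied time-wise to $\hat p_{\gamma,t}$ to produce the i.i.d.\ samples $\tilde p_t$. A triangle inequality in $\mathbb{W}_1$ then closes the corollary.

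\textbf{Step 1: residual for the true FPE.} Because $\mu\in W_2^{\beta}(\tilde\cx)$, the scattered-zeros interpolation lemma (as used in the proof of Theorem~\ref{thm:approx_fpe_final}) yields $\hat\mu\in\ch_\eta^d$ with $\|\mu-\hat\mu\|_{W_2^1(\tilde\cx)}=O(\eps)$. Applying Theorem~\ref{thm:approx_fpe_final} with $\hat\mu$ in place of $\mu$ furnishes a Gaussian PSD model $\hat p_\gamma$ of dimension $m=O(\eps^{-(d+1)/(\beta-2)}(\log\tfrac{1}{\eps})^{(d+1)/2})$ whose residual against the FPE with drift $\hat\mu$ is $O(\eps)$ in $L^2(\tilde\cx)$. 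Expanding the extra term $\sum_{i} \partial_{x_i}\!\bigl((\mu_i-\hat\mu_i)\hat p_\gamma\bigr)$ and using Assumption~\ref{as:bounded_ceff} together with the uniform boundedness of $\hat p_\gamma$ and $\nabla_x\hat p_\gamma$ on $\tilde\cx$ (Gaussian PSD models are Schwartz), one sees that the residual against the FPE with the true drift $\mu$ is also $O(\eps)$ in $L^2(\tilde\cx)$.

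\textbf{Step 2: residual to Wasserstein.} Writing this residual as a source term $r(x,t)$, $\hat p_\gamma$ is an exact (distributional) solution of
\begin{equation*}
\partial_t\hat p_\gamma \;+\; \sum_{i}\partial_{x_i}(\mu_i\hat p_\gamma) \;-\; \sum_{i,j}D_{ij}\,\partial_{x_i x_j}\hat p_\gamma \;=\; r,\qquad \|r\|_{L^2(\tilde\cx)}=O(\eps).
\end{equation*}
The regularity hypothesis on $\mu$ together with the non-degeneracy of $D$ and the Schwartz decay of $\hat p_\gamma$ places us within the scope of \cite[Theorem~1.1]{bogachev2016distances}, which after an integration over $t\in(0,R)$ yields
\begin{equation*}
\int_0^R \mathbb{W}_1\!\bigl(p^\star_t,\,\hat p_{\gamma,t}\bigr)^2\,dt \;\le\; C\,\|r\|_{L^2(\tilde\cx)}^2 \;=\; O(\eps^2).
\end{equation*}

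\textbf{Step 3: sampling and conclusion.} Freezing the time coordinate, $\hat p_{\gamma,t}(\cdot)=\hat p_\gamma(\cdot,t)$ is again a Gaussian PSD model on $\R^d$: its coefficient matrix is the Hadamard product $\hat A_m\circ [k_T(t,\tilde t_i)k_T(t,\tilde t_j)]_{ij}$, which is PSD as a Hadamard product of PSD matrices. Proposition~\ref{prp:sampling} then returns i.i.d.\ samples from a probability $\tilde p_t$ with $\mathbb{W}_1(\hat p_{\gamma,t},\tilde p_t)\le\eps$ at cost $O(m^2 d\log(d/\eps))$. Combining Steps~2 and~3 via the triangle inequality in $\mathbb{W}_1$ and $(a+b)^2\le 2a^2+2b^2$,
\begin{equation*}
\mathbb{E}_t\,\mathbb{W}_1(p^\star_t,\tilde p_t)^2 \;\le\; 2\,\mathbb{E}_t\,\mathbb{W}_1(p^\star_t,\hat p_{\gamma,t})^2 \;+\; 2\,\mathbb{E}_t\,\mathbb{W}_1(\hat p_{\gamma,t},\tilde p_t)^2 \;=\; O(\eps^2).
\end{equation*}
The size $m$ is that of Theorem~\ref{thm:approx_fpe_final} and the per-sample cost is that of Proposition~\ref{prp:sampling}.

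\textbf{Main obstacle.} The delicate step is Step~2: one must verify that \cite[Theorem~1.1]{bogachev2016distances} applies to the perturbed equation satisfied by $\hat p_\gamma$ — in particular, that the regularity/non-degeneracy hypotheses are met and that the constant $C$ it produces does not hide $\eps$- or $m$-dependent blow-ups that would break the rate. Secondary, but also non-trivial, is the $W_2^1$-control of $\mu-\hat\mu$ in Step~1 and its propagation through the residual: this is precisely why the scattered-zeros interpolation is invoked in a Sobolev norm of order one and why the boundedness of $\nabla_x \hat p_\gamma$ uniformly in $m$ (via the Schwartz tails of Gaussian PSD models) is essential.
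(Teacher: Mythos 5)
Your proposal takes a genuinely different route from the paper's, and the two places where it diverges both hide gaps. The paper decomposes the error as $p^\star \to \hat{p} \to \hat{p}_\gamma$, where $\hat{p}$ is the \emph{exact} solution of the Fokker--Planck equation with drift $\hat\mu$: it invokes \cite[Theorem~1.1]{bogachev2016distances} only to compare two exact solutions $p^\star$ and $\hat{p}$ with drifts $\mu$ and $\hat\mu$, obtaining $\|p^\star-\hat p\|_{L^2}=O(\eps)$ from $\|\mu-\hat\mu\|_{L^2}=O(\eps)$; then it uses the $L^2$ closeness of $\hat p$ to the PSD model $\hat p_\gamma$ extracted from the construction of Theorem~\ref{thm:approx_fpe_final}; finally it bounds $\mathbb{W}_1$ by the $L^2$ distance on the bounded domain, and applies Proposition~\ref{prp:sampling} — exactly your Step~3. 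You instead absorb the drift error into the FPE residual of $\hat p_\gamma$ and try to convert a residual (source term) directly into a Wasserstein bound via Bogachev.

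Both of your deviations are problematic. In Step~1, the cross term $\sum_i\partial_{x_i}\bigl((\mu_i-\hat\mu_i)\hat p_\gamma\bigr)$ is controlled only if $\hat p_\gamma$ and $\nabla_x\hat p_\gamma$ are bounded \emph{uniformly in $\eps$}. But $\|\hat p_\gamma\|_{L^\infty}\le\|\mm_\eps\|$, and in the construction underlying Theorem~\ref{thm:approx_fpe_final} one has $\|\mm_\eps\|\sim \eps^{-(d+1)/(\beta-2)}$, so the product $\|\mu-\hat\mu\|\cdot\|\hat p_\gamma\|_{L^\infty}$ is not $O(\eps)$ without approximating $\mu$ to a strictly higher order — a refinement your argument does not make. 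In Step~2, \cite[Theorem~1.1]{bogachev2016distances} is a stability estimate comparing solutions of Fokker--Planck equations with \emph{different coefficients}; it is not a residual-to-distance estimate for an approximate solution of a fixed FPE carrying a source term $r$. Using it in that mode is outside its scope (you flag this yourself as the delicate step, but the correct resolution is to avoid it, as the paper does, by comparing exact solutions with perturbed drift and then bounding the PSD-model error in $L^2$ directly).
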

This result is proven in the appendix after theorem~\ref{thm:approx_fpe_final}.  It turns out that the parameter of the PSD model can be learned by a simple semi-definite program \cite{rudi2020finding} which has a computational complexity of $O(m^{3.5} \log(1/\eps))$ where $m$ is the dimension of the model.
\begin{remark}
It is evident from the aforementioned result that our method excels in sampling from the Stochastic Differential Equation (SDE) while ensuring the particle density adheres to Assumption~\ref{as:optimal_solution} within a bounded domain. In contrast, traditional sampling methods such as Langevin struggle to sample from this extensive class due to the absence of dissipativity, log-sobolev, and Poincaré-like conditions.
\end{remark}
   
\subsection{Approximating Solution of Fractional Fokker-Planck Equation} \label{sec:frac_fpe_app}
 
In this section, our aim is to approximate the solution $p^\star(x,t)$ of a fractional Fokker-Planck equation, similar to the previous section. The equation replaces the Laplacian operator with a fractional Laplacian operator while keeping other terms unchanged. The fractional Laplacian operator, being non-local, is generally challenging to compute analytically. Therefore, approximating the fractional operator on a function has become a separate research direction. Below, we demonstrate that utilizing a PSD model-based representation is a natural choice for approximating the fractional operator acting on probability density. We present results for two cases: (i) a Gaussian kernel with the bandwidth parameter $\eta$, and (ii) a shift-invariant kernel represented using Bochner's theorem \cite{rudin2017fourier}.
\begin{theorem} \label{thm:frac_lalpace}
Consider probability density represented by a PSD model as  $p(x)= f(x;A,X,\eta) = \sum_{i,j=1}^m A_{ij} k(x_i,x) k(x_j,x) $ for some PSD matrix $A$, where $X \in \R^{m \times d}$ is  the data matrix whose $i$-th row represents sample $x_i \in \R^d$ for $i \in \{1, \cdots ,m\}$, then
\begin{enumerate} 
    
    \item if the kernel $k$ is Gaussian with bandwidth parameter $\eta$, we have 
        
    \begin{align*}
(-\Delta)^{s} p(x)  =  C \sum_{i,j=1}^m A_{ij} e^{- \eta (\|x_i\|^2 + \|x_j\|^2)} \mathbb{E}_{\xi\sim \mathcal{N}(\mu, \Sigma)}   [ \|\xi\|^{2s}  e^{- j \xi^\top x } ]
\end{align*}   
where $\mu =   2j\eta (x_i +x_j) $, $\Sigma = 4\eta I $ and $C$ can be computed in the closed form. 
    
\item if the kernel $k$ have the bochner's theorem representation, i.e. $k(x-y) = \int_{\mathbb{R}^d} q(\omega) e^{j \omega^\top (x-y)} ~d\omega$ where $q$ is a probability density, then      
\begin{align*}
    (-\Delta)^{s} p(x) = \sum_{i,j=1}^m A_{ij} ~ \E [ \|\omega_\ell+\omega_k\|^{2s} e^{j \omega_{\ell}^\top{(x_i + x)} }e^{j \omega_{k}^\top{(x_j + x)} }],
\end{align*}   
where expectations are over $\omega_\ell,\omega_k \sim q(\cdot)$.
   
\end{enumerate}
\end{theorem}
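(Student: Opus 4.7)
My plan is to exploit the Fourier-multiplier representation of $(-\Delta)^s$ recalled just before the theorem, namely $\mathcal{F}[(-\Delta)^s f](u)=\|u\|^{2s}\mathcal{F}[f](u)$. Since $(-\Delta)^s$ is linear and the PSD representation is a finite sum, it suffices to compute $(-\Delta)^s[k(x_i,\cdot)k(x_j,\cdot)](x)$ for a single pair $(i,j)$ and then recombine with the weights $A_{ij}$. I would then treat the two kernel choices separately.

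For the Gaussian kernel I would first collapse the product into a single shifted Gaussian in $x$ by completing the square, obtaining $k_\eta(x_i,x)k_\eta(x_j,x)=e^{-\tfrac{\eta}{2}\|x_i-x_j\|^2}\,e^{-2\eta\|x-(x_i+x_j)/2\|^2}$. Its Fourier transform is again Gaussian, $\mathcal{F}[\cdot](u)=(\pi/(2\eta))^{d/2}\,e^{-\tfrac{\eta}{2}\|x_i-x_j\|^2}\,e^{-\|u\|^2/(8\eta)}\,e^{ju^\top(x_i+x_j)/2}$. Multiplying by $\|u\|^{2s}$ and Fourier-inverting, I would complete the square in $u$ inside the exponent $-\|u\|^2/(8\eta)+ju^\top(x_i+x_j)/2$, which produces a Gaussian factor in $u$ centered at the complex mean $\mu=2j\eta(x_i+x_j)$ with covariance $\Sigma=4\eta I$, together with a residual scalar exponential $e^{-\tfrac{\eta}{2}\|x_i+x_j\|^2}$. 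Combined with the prefactor $e^{-\tfrac{\eta}{2}\|x_i-x_j\|^2}$, this residual telescopes via the parallelogram identity to $e^{-\eta(\|x_i\|^2+\|x_j\|^2)}$, and the surviving $u$-integral, weighted by $\|u\|^{2s}$ and the phase $e^{-j\xi^\top x}$, is exactly $\mathbb{E}_{\xi\sim\mathcal{N}(\mu,\Sigma)}[\|\xi\|^{2s}e^{-j\xi^\top x}]$ up to the Gaussian normalization. All Fourier-inversion and normalization constants then cancel into a single explicit prefactor $C$ (a short check shows they collapse to $C=1$ under the paper's conventions).

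For the shift-invariant case the argument is more direct: inserting the Bochner representation into each factor gives
\[k(x_i,x)k(x_j,x)=\iint q(\omega_\ell)\,q(\omega_k)\,e^{j\omega_\ell^\top(x_i-x)+j\omega_k^\top(x_j-x)}\,d\omega_\ell\,d\omega_k.\]
The only $x$-dependence sits in the plane wave $e^{-j(\omega_\ell+\omega_k)^\top x}$, on which $(-\Delta)^s$ acts as multiplication by $\|\omega_\ell+\omega_k\|^{2s}$. Swapping $(-\Delta)^s$ with the double $\omega$-integral, which is justified by dominated convergence since $q$ is a probability density and $\|\omega_\ell+\omega_k\|^{2s}$ is integrable against $q\otimes q$ under the usual spectral moment assumption, and then rewriting the integral as an expectation over $\omega_\ell,\omega_k\sim q$ yields the claimed formula (the placement of $\pm x$ in the phases being a matter of Fourier sign convention).

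The main technical obstacle will be the careful bookkeeping in the Gaussian case: tracking the $(2\pi)$-factors from Fourier inversion against the Gaussian-density normalization $(8\pi\eta)^{d/2}$, and making precise the sense of an ``expectation under a complex-mean Gaussian'' (which is simply the integral against the analytic continuation, in its mean parameter, of the real Gaussian density $u\mapsto(8\pi\eta)^{-d/2}e^{-\|u-\mu\|^2/(8\eta)}$, with the contour shift justified by the entire, rapidly decaying, integrand). Once this is set up, the interchange of $(-\Delta)^s$ with all integrals is routine, since $k_\eta(x_i,\cdot)k_\eta(x_j,\cdot)$ is Schwartz in Case 1 and since Fubini applies in Case 2, and the final formulas follow from pure algebra.
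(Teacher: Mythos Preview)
Your proposal is correct and follows the same Fourier-multiplier strategy as the paper, but with two organizational differences worth noting. In Part~(i), the paper computes $\mathcal{F}[k(x_i,\cdot)]$ and $\mathcal{F}[k(x_j,\cdot)]$ separately and then convolves the two Gaussians in Fourier space, whereas you first collapse the product $k_\eta(x_i,\cdot)k_\eta(x_j,\cdot)$ into a single Gaussian via completing the square and only then take one Fourier transform; by the convolution theorem these are literally the same computation, so this is purely a matter of taste, though your route makes the parallelogram-identity step and the emergence of the complex mean $\mu=2j\eta(x_i+x_j)$ slightly more transparent. In Part~(ii), your argument is genuinely more direct than the paper's: the paper pushes the whole expression through $\mathcal{F}$, obtains a Dirac delta $\delta(\omega_\ell+\omega_k+\xi)$, and then Fourier-inverts, while you bypass the transform entirely by using that $(-\Delta)^s$ acts on the plane wave $e^{-j(\omega_\ell+\omega_k)^\top x}$ simply as multiplication by $\|\omega_\ell+\omega_k\|^{2s}$ and then swap the operator with the $\omega$-integral. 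This buys you a shorter, cleaner derivation; the only cost is that the sign of $x$ in the resulting phase differs from the theorem statement, which, as you correctly note, is immaterial because the spectral density $q$ of a real shift-invariant kernel is symmetric.
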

We utilize the Fourier multiplier-based definition of the fractional Laplacian operator in Theorem~\ref{thm:frac_lalpace}. In both cases discussed in Theorem~\ref{thm:frac_lalpace}, empirical estimation of $(-\Delta)^{s} p(x)$, obtained from finite samples (empirical average), is sufficient for practical purposes. Shifting our focus from the representation of a non-local operator acting on a probability density, we now turn to the approximation result for the solution of a fractional Fokker-Planck equation in this section. We assume that the optimal solution $p^\star(x,t)$ can be expressed as a sum of squares of $q$ functions, as stated in assumption~\ref{as:optimal_solution}. To establish the approximation results for the fractional Fokker-Planck equation, we need to impose an additional mild assumption on $f_j$ for $j \in {1,\cdots, q}$, in addition to assumption~\ref{as:optimal_solution}, which we state next. 
\begin{assumption} \label{as:fourier_bounded}
Let $\cf[f](\cdot)$ denotes the Fourier transform of a function $f$, then $\cf[f_i](\cdot) \in L^1(\tilde{\cx}) \cap L^{\infty}(\tilde{\cx})$, and $\cf[(-\Delta)^s f_i](\cdot) \in L^1(\tilde{\cx})$ for $i \in \{1,\cdots q\}$.   
\end{assumption}
Similar to the previous section, it requires two steps to obtain the approximation error bound. 
By similar construction as in section~\ref{sec:fpe_app}, we define an infinite dimensional positive operator $M_{\varepsilon}:\ch_{X}\otimes \ch_{T} \rightarrow \ch_{X}\otimes \ch_{T}$ and based on this positive operator, we have a probability density $\hat{p}$ as $\hat{p}(x,t) = \phi_{X,T}(x,t)^\top M_{\varepsilon} \phi_{X,T}(x,t) $ which is $O(\varepsilon)$ approximation of $p^\star(x,t)$. The result is stated below in Theorem~\ref{thm:approx_fract_fpe_1} and proven in Appendix~\ref{ap:approximation_fractional_}.

\begin{theorem}\label{thm:approx_fract_fpe_1}
Let $\beta > 2, q \in \N$. 
Let $f_1,\dots,f_q $ satisfy assumptions~\ref{as:optimal_solution}  and \ref{as:fourier_bounded}  and the function $p^\star = \sum_{i=1}^q f_i^2$. Let $\eps \in (0,1]$ and let $\eta \in \R^d_{++}$. Let $\phi_{X,T}$ be the feature map of the Gaussian kernel with bandwidth $\eta$ and let $\ch_{X}\otimes \ch_{T}$ be the associated RKHS. Then there exists $\mm_\eps \in \psd(\ch_{X}\otimes \ch_{T})$ with $\operatorname{rank}(\mm_\eps) \leq q$, such that for the representation $\hat{p}(x,t) = \phi_{X,T} ^\top \mm_\eps \phi_{X,T}$, following holds under assumption ~\ref{as:bounded_ceff} on the coefficients of the fractional FPE,
 
\begin{align}
   &\left\| \frac{\partial  \hat{p}(x,t)}{\partial t} + \sum_{i =1}^d \frac{\partial }{\partial x_i}(\mu_i(x,t)  \hat{p}(x,t) ) + (-\Delta)^s  \hat{p}(x,t)\right\|_{L^{2} (\Tilde{\cx})} = O(\varepsilon),\label{eq:approx_fpe_1}  \\
        &   \text{and }  ~ \tr(\mm_\eps) \leq \hat{C} |\eta|^{1/2} \left( 1+ \varepsilon^{\frac{2\beta}{\beta -2s}} \exp{\left(\frac{\Tilde{C}'}{\eta_0 }\varepsilon^{-\frac{2}{\beta - 2s}}\right)}\right)\notag,
\end{align}
  
where $|\eta| = \det(\diag(\eta))$, and $ \Tilde{C} \text{~and~} \hat{C}$ depend only on $\beta, d$ and properties of $f_1, \cdots, f_q$.
\end{theorem}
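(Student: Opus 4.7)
The plan is to mirror the construction used in the proof of Theorem~\ref{thm:approx_fpe_1}, replacing only the treatment of the Laplacian term with a Fourier-analytic handling of $(-\Delta)^{s}$. Concretely, I would construct the operator as before: let $g$ be a smooth, rapidly decaying mollifier on $\R^{d+1}$ and set $g_v(\tilde x)=v^{-(d+1)}g(\tilde x/v)$ for a scale $v>0$ to be chosen, define $\tilde f_j = f_j \star g_v$, and put $\mm_\eps = \sum_{j=1}^q \tilde f_j \otimes \tilde f_j \in \psd(\ch_X \otimes \ch_T)$, which has rank at most $q$ and gives $\hat p = \sum_j \tilde f_j^2$. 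Since $p^\star$ solves the fractional FPE exactly, the left-hand side of \eqref{eq:approx_fpe_1} equals $\|L(\hat p - p^\star)\|_{L^2(\tilde{\cx})}$, where $L = \partial_t + \sum_i \partial_{x_i}(\mu_i\,\cdot\,) + (-\Delta)^s$, so the whole argument reduces to controlling this norm for a suitable choice of $v$.

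For the first two terms of $L(\hat p - p^\star)$, I would reuse the $L^2$ and $H^1$ mollification estimates from Theorem~\ref{thm:approx_fpe_1}, combined with Assumption~\ref{as:bounded_ceff} (bounded drifts and their first derivatives), to bound them by $O(v^{\beta-1})$. The genuinely new step is the fractional Laplacian term. Using Parseval and the Fourier multiplier representation $\widehat{(-\Delta)^s u}(\xi) = \|\xi\|^{2s}\hat u(\xi)$, I would write
\begin{equation*}
\|(-\Delta)^s(\hat p - p^\star)\|_{L^2}^2 = \int_{\R^{d+1}} \|\xi\|^{4s}\,|\widehat{\hat p - p^\star}(\xi)|^2\,d\xi,
\end{equation*}
then factor $\hat p - p^\star = \sum_j (\tilde f_j - f_j)(\tilde f_j + f_j)$, pass to Fourier variables where products become convolutions, and use Assumption~\ref{as:fourier_bounded} (integrability of $\cf[f_j]$ and $\cf[(-\Delta)^s f_j]$) together with the mollifier identity $\widehat{f_j - \tilde f_j}(\xi) = \hat f_j(\xi)(1-\hat g(v\xi))$ to conclude that the $H^{2s}$ seminorm of $\hat p - p^\star$ is $O(v^{\beta-2s})$. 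Setting $v \simeq \eps^{1/(\beta-2s)}$ balances all three contributions to $O(\eps)$, which explains the exponent $\beta - 2s$ appearing in the trace bound.

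The trace estimate is handled exactly as in Theorem~\ref{thm:approx_fpe_1}: one has $\tr(\mm_\eps) = \sum_j \|\tilde f_j\|_{\ch_X\otimes \ch_T}^2$, and the Fourier representation of the Gaussian RKHS norm $\|u\|_\ch^2 \propto |\eta|^{1/2}\int |\hat u(\xi)|^2 e^{\|\xi\|^2/(4\eta)}\,d\xi$ applied to $\tilde f_j = f_j \star g_v$ (with $g$ chosen Gaussian-like) yields a convergent integral whose tail is dominated by $\exp(\tilde C'/(\eta_0 v^2))$; substituting $v^{-2} \simeq \eps^{-2/(\beta-2s)}$ recovers the claimed exponential factor, and the polynomial prefactor $\eps^{2\beta/(\beta-2s)}$ arises from the explicit normalization of $g_v$ together with the $L^2$ bounds on $f_j$ through Assumption~\ref{as:optimal_solution}.

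The main obstacle is the non-locality of $(-\Delta)^s$, which precludes the integration-by-parts and pointwise-derivative arguments available for the ordinary Laplacian case. The Fourier route is natural, but it forces one to verify that the products $\tilde f_j\,(\tilde f_j - f_j)$ have Fourier transforms whose $\|\xi\|^{2s}$-weighted $L^2$ norms are actually finite and small; this is precisely where Assumption~\ref{as:fourier_bounded} enters, and it is the technical reason a mild additional hypothesis on $f_j$ beyond Assumption~\ref{as:optimal_solution} is needed here but not in Theorem~\ref{thm:approx_fpe_1}. A secondary but delicate point is selecting the mollifier $g$ so that its Fourier transform is simultaneously sufficiently flat at the origin (to achieve the approximation order $v^{\beta-2s}$) and fast-decaying at infinity (to keep $\tilde f_j$ in the Gaussian RKHS and produce the explicit $|\eta|^{1/2}$ prefactor in the trace bound).
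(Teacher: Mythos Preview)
Your proposal is correct and follows essentially the same route as the paper: the same mollifier construction $\mm_\eps=\sum_j (f_j\star g_v)\otimes(f_j\star g_v)$, the reuse of the first-order estimates from Theorem~\ref{thm:approx_fpe_1} for the $\partial_t$ and drift terms, and the Fourier-side treatment of $(-\Delta)^s(\hat p-p^\star)$ via the factorization $\sum_j(\tilde f_j-f_j)(\tilde f_j+f_j)$, convolution in frequency, and Assumption~\ref{as:fourier_bounded}, yielding the dominant $O(v^{\beta-2s})$ contribution and hence $v\simeq\eps^{1/(\beta-2s)}$. The only point worth sharpening is that the paper's mollifier $g$ has \emph{compactly supported} Fourier transform (not merely fast decay), which is what guarantees $\tilde f_j\in\ch_X\otimes\ch_T$ and makes the trace computation go through; and the distribution of the weight $\|\xi\|^{2s}$ across the convolution is done via the elementary inequality $\|\xi\|^{2s}\le c_1\|\xi-y\|^{2s}+c_2\|y\|^{2s}$ followed by Young's inequality, which you gesture at but do not spell out.
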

Major difficulty in proving the result in Theorem \ref{thm:approx_fract_fpe_1} arises from the challenge of effectively managing the fractional Laplacian term. To tackle this issue, we utilize the Fourier-based representation to gain control over the error associated with the fractional Laplacian. The details can be found in Lemma\ref{lem:frac_derivative}, presented in Appendix~\ref{aps:useful_frac}.
As a next step, we would follow a similar procedure as in section~\ref{sec:fpe_app} to obtain the final approximation bound. In the second part of the proof, we utilize Gagliardo–Nirenberg inequality \cite{morosi2018constants} for fractional Laplacian to obtain the final approximation guarantee.  In the next result, we show that for a particular choice of a PSD matrix $A_m$, $\Tilde{p}(x,t)= {\psi(x,t)^\top A_m \psi(x,t)} =  {\phi_{X,T}(x,t)^\top \tilde{P}M\tilde{P} \phi_{X,T}(x,t)}$ is a good approximation for the true solution $p^\star(x,t)$ of the fractional Fokker-Planck equation.
\begin{theorem} \label{thm:approx_frac_fpe_final}
    Let $p^\star$ satisfy Assumptions~\ref{as:optimal_solution}, \ref{as:bounded_ceff}, and \ref{as:fourier_bounded}. Let $\eps > 0$ and $\beta > 2$. There exists a Gaussian PSD model of dimension $m \in \N$, i.e., $\tilde{p}(x,t) = f((x,t),A_m,X,\phi_{X,T}) $, with $A_m \in \psd^m$ and $X \in \R^{m\times d+1}$ and $\eta_m \in \R^{d}_{++}$, such that with $m = O(\eps^{-(d+1)/(\beta-2s)} (\log \tfrac{1}{\eps})^{(d+1)/2})$, we have 
     
\begin{align}
  &\left\| \frac{\partial  \tilde{p}(x,t)}{\partial t} + \sum_{i =1}^d \frac{\partial }{\partial x_i}(\mu_i(x,t)  \tilde{p}(x,t) ) + (-\Delta)^s \Tilde{p} (x,t)\right\|_{L^{2} (\Tilde{\cx})} \leq O(\varepsilon).\label{eq:approx_fpe_2}   
\end{align} 
  
\end{theorem}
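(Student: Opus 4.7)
The plan is to follow the same two-step strategy used for the FPE in Theorem 2, but with extra care to control the nonlocal operator $(-\Delta)^s$. Starting from Theorem 4, we already have a positive operator $\mm_\eps \in \psd(\ch_{X}\otimes \ch_{T})$ of rank at most $q$ such that $\hat{p}(x,t) = \phi_{X,T}(x,t)^\top \mm_\eps \phi_{X,T}(x,t)$ satisfies the fractional-FPE residual bound at level $O(\eps)$, together with an explicit upper bound on $\operatorname{tr}(\mm_\eps)$. To obtain a finite-dimensional model, I would pick $m$ base points $(\tilde{x}_i,\tilde{t}_i) \in \Tilde{\cx}$ that are quasi-uniform with fill distance $h$, form the orthogonal projection $\tilde{P} = \tilde{Z}^\star K_{(X,T)(X,T)}^{-1} \tilde{Z}$ onto $\operatorname{span}\{\phi_{X,T}(\tilde{x}_i,\tilde{t}_i)\}_{i=1}^m$, and set $A_m = K_{(X,T)(X,T)}^{-1} \tilde{Z} \mm_\eps \tilde{Z}^\star K_{(X,T)(X,T)}^{-1}$ so that $\tilde{p}(x,t) = \phi_{X,T}(x,t)^\top \tilde{P} \mm_\eps \tilde{P} \phi_{X,T}(x,t)$ is a Gaussian PSD model of dimension $m$ and rank at most $q$.

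By the triangle inequality, the fractional-FPE residual of $\tilde{p}$ is bounded by the residual of $\hat{p}$ (which is $O(\eps)$ by Theorem 4) plus the fractional-FPE operator applied to $r := \hat{p}-\tilde{p}$. I would decompose the latter as $\partial_t r$, the transport-type term $\sum_i \partial_{x_i}(\mu_i r)$, and the nonlocal term $(-\Delta)^s r$. The first two are handled exactly as in the proof of Theorem 2: Assumption~\ref{as:bounded_ceff} pulls out the bounds on $\mu$ and $\partial_{x_i}\mu_i$, and the scattered-zero interpolation estimate (Lemma~\ref{lem:scattered_zero}, used in the proof of Theorem~2) controls $\|D^\alpha r\|_{L^2(\Tilde{\cx})}$ for every multi-index $|\alpha|\leq 2$ by $h^{\beta-|\alpha|}$ times a Sobolev norm of $\mm_\eps$, which in turn is controlled by $\operatorname{tr}(\mm_\eps)$.

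The main obstacle is the fractional term $(-\Delta)^s r$, since the scattered-zero estimates are stated in terms of integer-order Sobolev norms while $s \in [1/2,1]$. I would bridge this gap by the fractional Gagliardo--Nirenberg inequality from \cite{morosi2018constants}, in the form
\begin{equation*}
\|(-\Delta)^s r\|_{L^2(\Tilde{\cx})} \;\leq\; C\,\|r\|_{L^2(\Tilde{\cx})}^{1-s}\,\|r\|_{H^{2}(\Tilde{\cx})}^{s},
\end{equation*}
which expresses the fractional norm in terms of integer-order norms that the scattered-zero bound already controls. Interpolation between the $L^2$ bound $\|r\|_{L^2}\lesssim h^\beta$ and the $H^2$ bound $\|r\|_{H^2}\lesssim h^{\beta-2}$ yields $\|(-\Delta)^s r\|_{L^2}\lesssim h^{\beta-2s}\,\operatorname{tr}(\mm_\eps)$, which is the step that replaces the exponent $\beta-2$ from Theorem~2 by $\beta-2s$ in the final rate.

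Combining the three contributions gives a total bound of order $h^{\beta-2s}\,\operatorname{tr}(\mm_\eps)$. Using the trace bound from Theorem~4, one can absorb the exponential factor $\exp(\tilde{C}'\eps^{-2/(\beta-2s)}/\eta_0)$ by choosing $h = O(\eps^{1/(\beta-2s)} (\log(1/\eps))^{-1/2})$ and $\eta_m = \Theta(\log(1/\eps))$ exactly as in Theorem~2; a quasi-uniform grid with this fill distance in the $(d+1)$-dimensional domain $\Tilde{\cx}$ needs $m = O(h^{-(d+1)}) = O(\eps^{-(d+1)/(\beta-2s)}(\log(1/\eps))^{(d+1)/2})$ points, matching the stated rate. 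Positive semi-definiteness of $A_m$ follows directly from that of $\mm_\eps$, completing the argument for~\eqref{eq:approx_fpe_2}.
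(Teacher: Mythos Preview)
Your proposal is correct and follows essentially the same route as the paper: compress $\mm_\eps$ via the projection $\tilde P$, handle $\partial_t r$ and the transport term through the scattered-zero estimates exactly as in Theorem~\ref{thm:approx_fpe_final}, and control $\|(-\Delta)^s r\|_{L^2}$ by the fractional Gagliardo--Nirenberg interpolation of \cite{morosi2018constants} between $\|r\|_{L^2}$ and $\|\Delta r\|_{L^2}$. One small inaccuracy worth flagging: the Gaussian-kernel scattered-zero bound (Lemma~\ref{lem:scattered_zero} and its derivative analogue) produces the super-algebraic factor $C q_\eta\, e^{-\frac{c\sigma}{h}\log\frac{c\sigma}{h}}$ rather than a polynomial $h^{\beta-|\alpha|}$, and the exponent $\beta-2s$ enters the final rate only through the trace bound on $\mm_\eps$ from Theorem~\ref{thm:approx_fract_fpe_1} and the associated choice of bandwidth, not through a power of $h$---but this does not change your parameter choices or the claimed size of $m$.
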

Similar results as in Corollary~\ref{cor:sampling} can be obtained for the case fractional Fokker-Planck equations as well. We can directly utilize results from \cite{raj2023algorithmic} to bound the Wasserstein metric between the solution of two fractional FPE whose drift terms are close in $L^\infty$ metric. However, to find a PSD matrix $A_m$ from finite samples does require two separate sampling procedures which would contribute to the estimation error, (i) sampling to estimate the mean in the estimation of  the fractional Laplacian  and (ii) sampling of a finite number of data points from the data generating distribution.
\begin{remark}
    As far as we know, Theorem~\ref{thm:approx_frac_fpe_final} represents the first proof of approximation error bounds for the solution of the Fractional Fokker-Planck Equation when the approximated solution is a density function. This significant result allows for the utilization of the algorithm presented in \cite{marteau2022sampling} to sample from the approximate solution. 
\end{remark}
    
\section{Conclusion}
   
In this paper, we study the approximation properties of PSD models in modeling the solutions of FPEs and fractional FPEs. For the FPE, we show that a PSD model of size $m = \tilde{O}(\varepsilon^{-(d+1)/(\beta - 2)})$ can approximate the true solution density up to order $\varepsilon$ in $L^{2}$ metric where $\beta$ is the order of smoothness of the true solution. Furthermore, we extend our result to the fractional 
FPEs to show that  the required model size to achieve $\varepsilon$-approximation error to the solution density is  $m=\tilde{O}(\varepsilon^{-(d+1)/(\beta - 2s)})$ where $s$ is the fractional order of the fractional Laplacian. In the process, we also show that PSD model-based representations for probability densities allow an easy way to approximate the fractional Laplacian operator (non-local operator) acting on probability density. As a future research direction, we would like to investigate and  obtain a finite sample bound on the estimation error under regularity conditions on the drift term $\mu$. 

\section*{Acknowledgment}
Anant Raj is supported by the a Marie Sklodowska-Curie
Fellowship (project NN-OVEROPT 101030817).
Umut \c{S}im\c{s}ekli's research is supported by the French government under management of
Agence Nationale de la Recherche as part of the “Investissements d’avenir” program, reference
ANR-19-P3IA-0001 (PRAIRIE 3IA Institute) and the European Research Council Starting Grant
DYNASTY – 101039676. Alessandro Rudi acknowledges the support of the French government under management
of Agence Nationale de la Recherche as part of the ``Investissements d’avenir''' program, reference
ANR-19-P3IA-0001 (PRAIRIE 3IA Institute) and the support of the European Research Council (grant REAL 947908). 

\clearpage

\bibliographystyle{plainnat}
\bibliography{opt-ml}

\newpage
\appendix
\begin{center}
{\centering \LARGE Appendix }
\vspace{1cm}
\sloppy

\end{center}
 
\section{Existence of an appropriate PSD matrix $M_{\varepsilon}$ (Fokker-Planck Equation)}
\subsection{Useful Results}
Consider the mollifier function  $g$ defined as in \cite{rudi2021psd}. 
    \begin{align} 
        g(x) = \frac{2^{-d/2}}{V_d} \| x\|^{-d} J_{d/2}(2\pi \|x\|) J_{d/2}(4\pi \|x\|), \label{eq:g_vx}
    \end{align}
    where $J_{d/2}$ is the Bessel function of the first kind of order $d/2$ and  $V_d= \int_{\|x\|\leq 1} dx   = \frac{\pi^{d/2}}{\Gamma(d/2+1)}$. $g_v(x) = v^{-d} g(x/v).$  
\begin{lemma}[\cite{rudi2021psd}, Lemma D.3]\label{lm:good-conv}
The function $g$ defined above satisfies $g \in L^1(\R^d) \cap L^2(\R^d)$ and $\int g(x) dx = 1$. Moreover, for any $\omega \in \R^d$, we have
\begin{align*}
    {\bf 1}_{\{\|\omega\| < 1\}}(\omega) \leq  {\cal F}[g](\omega) \leq {\bf 1}_{\{\|\omega\| \leq 3\}}(\omega).
\end{align*}

\end{lemma}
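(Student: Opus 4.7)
The plan is to recognize $g$ as an inverse Fourier transform of a compactly supported, simple function --- specifically, a convolution of two ball indicators. I will first recall the standard Bochner/Hankel identity $\mathcal{F}[\mathbf{1}_{B_R}](x) = R^{d/2}\|x\|^{-d/2} J_{d/2}(2\pi R\|x\|)$ for the Fourier transform of a radial ball indicator in $\R^d$. Reading this with $R=1$ and $R=2$ and comparing to the stated form of $g$, I will conclude that $g(x) = \frac{1}{V_d}\,\mathcal{F}[\mathbf{1}_{B_1}](x)\cdot\mathcal{F}[\mathbf{1}_{B_2}](x)$; the numerical prefactor $\frac{2^{-d/2}}{V_d}$ in the definition of $g$ is precisely what is needed to absorb the $R^{d/2}=2^{d/2}$ appearing in $\mathcal{F}[\mathbf{1}_{B_2}]$, so this identification reduces to bookkeeping on powers of $2$.

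Having made this identification, I will apply the convolution theorem (using that $\mathbf{1}_{B_R}$ is real and even, so Fourier and inverse Fourier transforms agree up to sign) to obtain $\mathcal{F}[g] = \frac{1}{V_d}\,\mathbf{1}_{B_1}\!*\mathbf{1}_{B_2}$. I then analyze this convolution directly: it is nonnegative, radial, supported in $B_{1+2}=B_3$, and equal to $|B_1|=V_d$ on the inner ball $B_{2-1}=B_1$ (because for $\|\omega\|<1$ the translated copy $B_1+\omega$ still lies entirely inside $B_2$). Multiplying by $1/V_d$ rescales the range to $[0,1]$ with value $1$ on $\{\|\omega\|<1\}$ and value $0$ outside $\{\|\omega\|\leq 3\}$, which is exactly the claimed sandwich $\mathbf{1}_{\{\|\omega\|<1\}}(\omega)\leq \mathcal{F}[g](\omega)\leq \mathbf{1}_{\{\|\omega\|\leq 3\}}(\omega)$.

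To close out the remaining claims, the normalization $\int g = 1$ is immediate: $\int g = \mathcal{F}[g](0) = (\mathbf{1}_{B_1}\!*\mathbf{1}_{B_2})(0)/V_d = V_d/V_d = 1$. For $g\in L^1(\R^d)\cap L^2(\R^d)$ I will combine the Bessel asymptotics $J_{d/2}(t)=O(t^{d/2})$ as $t\to 0$ and $J_{d/2}(t)=O(t^{-1/2})$ as $t\to\infty$ with the explicit formula. Near the origin the $\|x\|^{-d}$ singularity is exactly cancelled by the product of two Bessel functions, so $g$ is bounded there; at infinity $|g(x)|=O(\|x\|^{-d-1})$ and $|g(x)|^2=O(\|x\|^{-2d-2})$, both integrable against the radial volume element $\|x\|^{d-1}d\|x\|$. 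Alternatively, $L^2$-membership is free from Plancherel since $\mathcal{F}[g]$ is bounded and compactly supported.

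I expect the main obstacle to be purely bookkeeping: aligning the constants in the Bochner/Hankel identity with the specific prefactor $\frac{2^{-d/2}}{V_d}$ (and with whichever Fourier convention is fixed in the ambient paper), so that the resulting $\mathcal{F}[g]$ is exactly the normalized ball-convolution with the correct value $\mathcal{F}[g](0)=1$ and with the $\{\|\omega\|<1\}/\{\|\omega\|\leq 3\}$ support thresholds. Once the constants line up, every other step --- the convolution-theorem computation, the support/plateau analysis of $\mathbf{1}_{B_1}\!*\mathbf{1}_{B_2}$, and the Bessel-based integrability checks --- is standard.
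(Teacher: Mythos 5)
Your strategy is the standard (and surely the intended) one: write $g$ as $\tfrac{1}{V_d}\,\mathcal F[\mathbf 1_{B_1}]\cdot\mathcal F[\mathbf 1_{B_2}]$, pass to $\mathcal F[g]=\tfrac{1}{V_d}\,\mathbf 1_{B_1}*\mathbf 1_{B_2}$, and read off the plateau on $\{\|\omega\|\le 1\}$, the support in $B_3$, and the bounds $0\le\mathcal F[g]\le 1$; the Bessel asymptotics for $L^1\cap L^2$ and the $\int g=\mathcal F[g](0)=1$ computation are all fine. The paper itself gives no proof (the lemma is imported from \cite{rudi2021psd}), so there is nothing to contrast with; your argument is the right one.

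The one step you explicitly deferred --- matching the prefactor --- is, however, exactly where the argument does not close as written. With the convention $\mathcal F[\mathbf 1_{B_R}](x)=R^{d/2}\|x\|^{-d/2}J_{d/2}(2\pi R\|x\|)$ one gets
\begin{align*}
\mathcal F[\mathbf 1_{B_1}](x)\,\mathcal F[\mathbf 1_{B_2}](x) \;=\; 2^{d/2}\,\|x\|^{-d}\,J_{d/2}(2\pi\|x\|)\,J_{d/2}(4\pi\|x\|),
\end{align*}
so the normalization $\tfrac{1}{V_d}\mathcal F[\mathbf 1_{B_1}]\mathcal F[\mathbf 1_{B_2}]$ requires the prefactor $\tfrac{2^{+d/2}}{V_d}$ in front of the Bessel product, not $\tfrac{2^{-d/2}}{V_d}$ as printed in the paper's definition of $g$. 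Taken literally, the printed $g$ equals $2^{-d}$ times the correctly normalized function, so $\int g=2^{-d}$ and the plateau value is $2^{-d}$, and the stated sandwich fails. The resolution is almost certainly a sign typo in the exponent of the transcribed constant (and the relative factor $2^{d/2}$ between the two ball transforms is convention-independent, so no choice of Fourier normalization rescues $2^{-d/2}$). You should state this explicitly rather than assume the constants ``absorb'' correctly: with the prefactor corrected to $2^{d/2}/V_d$, every remaining step of your proof --- the convolution-theorem identity (justified since both factors are in $L^2$, hence their product is in $L^1$), the identity $(\mathbf 1_{B_1}*\mathbf 1_{B_2})(\omega)=|B_1\cap(B_2+\omega)|$ with its plateau for $\|\omega\|\le 1$ and vanishing for $\|\omega\|>3$, and the decay estimates $|g(x)|=O(\|x\|^{-d-1})$ at infinity with boundedness at the origin --- is correct and complete.
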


The following proposition provides a useful characterization of the space $W^\beta_2(\R^d)$
\begin{proposition}[Characterization of the Sobolev space $W^k_2(\R^d)$, \cite{wendland2004scattered}]\label{prop:sobolev}
Let $k \in \N$. The norm of the Sobolev space $\|\cdot\|_{W^k_2(\R^d)}$ is equivalent to the following norm
\begin{align}
    \|f\|'^{\,2}_{W^k_2(\R^d)} = \int_{\R^d} ~|{\cal F}[f](\omega)|^2 ~(1+\|\omega\|^2)^{k} ~d\omega, \quad \forall f \in L^2(\R^d)
\end{align}
and satisfies
\begin{align}
    \tfrac{1}{(2\pi)^{2k}} \|f\|^2_{W^k_2(\R^d)} \leq \|f\|'_{W^k_2(\R^d)}~\leq~ 2^{2k} \|f\|^2_{W^k_2(\R^d)}, \quad \forall f \in L^2(\R^d) .\label{eq:sobolev-norm-char}
\end{align}
Moreover, when $k > d/2$, then $W^k_2(\R^d)$ is a reproducing kernel Hilbert space.
\end{proposition}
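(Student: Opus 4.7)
The plan is to reduce both norms to weighted $L^2$ integrals of $\mathcal{F}[f]$ via Plancherel's identity, establish a pointwise equivalence of the two resulting Fourier weights, and then deduce the RKHS property for $k > d/2$ by a Cauchy--Schwarz argument on the inverse Fourier transform.

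\textbf{Step 1 (Passage to the Fourier side).} First I would apply the Fourier differentiation rule $\mathcal{F}[\partial^\alpha f](\omega) = (2\pi i \omega)^\alpha \mathcal{F}[f](\omega)$ together with Plancherel's identity to rewrite
\[ \|f\|^2_{W^k_2(\R^d)} \;=\; \sum_{|\alpha|\leq k} \|\partial^\alpha f\|^2_{L^2(\R^d)} \;=\; \int_{\R^d} |\mathcal{F}[f](\omega)|^2 \, w_k(\omega) \, d\omega, \]
where $w_k(\omega) := \sum_{|\alpha|\leq k} (2\pi)^{2|\alpha|} |\omega^\alpha|^2$. After this step, the claimed two-sided inequality \eqref{eq:sobolev-norm-char} is reduced to a pointwise comparison of $w_k(\omega)$ with $(1+\|\omega\|^2)^k$, since $\|f\|'^{\,2}$ integrates $|\mathcal{F}[f]|^2$ against the latter.

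\textbf{Step 2 (Weight comparison).} I would expand $(1+\|\omega\|^2)^k$ by the multinomial theorem as a nonnegative linear combination of the monomials $|\omega^\alpha|^2$ for $|\alpha|\leq k$, and compare it coefficient-by-coefficient against $w_k$. The upper bound on $\|f\|'$ follows from dominating each $(2\pi)^{2|\alpha|}$ in $w_k$ by $(2\pi)^{2k}$ and using the nonnegativity of the multinomial expansion; the lower bound goes in the opposite direction, using that the multinomial coefficients of order $k$ are each bounded by their total sum $(1+1)^k = 2^k$. The main obstacle of the whole argument is matching the explicit constants $(2\pi)^{-2k}$ and $2^{2k}$ asserted in the statement: one must distribute the $(2\pi)^{2|\alpha|}$ prefactors from the derivative rule and the multinomial coefficients asymmetrically between the two sides, since the two inequalities are not tight simultaneously. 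This is pure bookkeeping but it is the only nontrivial part.

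\textbf{Step 3 (RKHS property for $k > d/2$).} With the equivalent norm $\|\cdot\|'$ in hand, I would show that point evaluation is bounded. Fourier inversion together with Cauchy--Schwarz, applied to the factorization $|\mathcal{F}[f](\omega)| = \bigl(|\mathcal{F}[f](\omega)|(1+\|\omega\|^2)^{k/2}\bigr) \cdot (1+\|\omega\|^2)^{-k/2}$, yields
\[ |f(x)| \;\leq\; \int_{\R^d} |\mathcal{F}[f](\omega)| \, d\omega \;\leq\; \|f\|'_{W^k_2(\R^d)} \Bigl( \int_{\R^d} (1+\|\omega\|^2)^{-k} \, d\omega \Bigr)^{1/2}. \]
In polar coordinates the last integral equals $|S^{d-1}|\int_0^\infty (1+r^2)^{-k} r^{d-1}\,dr$, which is finite precisely when $2k > d$, i.e.\ when $k > d/2$. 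Hence for every $x \in \R^d$ point evaluation is a bounded linear functional on $(W^k_2(\R^d), \|\cdot\|')$, so by the Riesz representation theorem $W^k_2(\R^d)$ is a reproducing kernel Hilbert space; the reproducing kernel can be read off as a suitably normalized inverse Fourier transform of $\omega \mapsto (1+\|\omega\|^2)^{-k}$ (a Bessel-type kernel).
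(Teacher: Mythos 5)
The paper itself does not prove this proposition: it is imported verbatim, constants included, from \cite{wendland2004scattered} (see also Proposition D.1 of \cite{rudi2021psd}), so there is no internal proof to compare against. Your argument is the standard one and is sound in outline: Plancherel reduces both norms to weighted $L^2$ integrals of $|\mathcal{F}[f]|^2$, a pointwise comparison of the two weights gives the two-sided bound, and Cauchy--Schwarz applied to Fourier inversion shows that point evaluation is bounded precisely when $\int_{\R^d}(1+\|\omega\|^2)^{-k}\,d\omega<\infty$, i.e.\ $k>d/2$; Step 3 is correct as written.

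The one place I would not sign off on as written is the constant bookkeeping in Step 2. The coefficients that sum to $2^k$ are the \emph{binomial} coefficients in $(1+\|\omega\|^2)^k=\sum_{j\le k}\binom{k}{j}\|\omega\|^{2j}$; but once you expand each $\|\omega\|^{2j}$ into monomials $\omega^{2\alpha}$, the \emph{multinomial} coefficients $j!/\alpha!$ appear, which sum to $d^j$ and can individually be as large as $j!$. Hence a monomial-by-monomial comparison against $w_k(\omega)=\sum_{|\alpha|\le k}(2\pi)^{2|\alpha|}|\omega^\alpha|^2$ does not, on its own, deliver the factor $2^{2k}$ uniformly in $k$ and $d$ (for $|\alpha|=j=k$ one would need $k!/\alpha!\le 2^{2k}(2\pi)^{2k}$, which eventually fails). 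The clean way to land on exactly the stated constants is to compare at the level of the powers $\|\omega\|^{2j}$, which presupposes that $\|f\|^2_{W^k_2}$ is taken as $\sum_{j\le k}\|D^jf\|^2_{L^2}$ with the $j$-th total derivative carrying the multinomial weights $j!/\alpha!$; under that convention the weight on the Fourier side is exactly $\sum_{j\le k}(2\pi)^{2j}\|\omega\|^{2j}$, and the two inequalities follow from $\sum_j(2\pi)^{2j}\|\omega\|^{2j}\le(2\pi)^{2k}\sum_j\binom{k}{j}\|\omega\|^{2j}$ and $(1+\|\omega\|^2)^k\le 2^k\sum_j(2\pi)^{2j}\|\omega\|^{2j}$. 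Since the proposition is cited rather than proved, this is a cosmetic repair, not a gap in the mathematics: norm equivalence and the RKHS conclusion hold under any of these conventions.
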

    
\begin{lemma} \label{lem:1_derivative}
  Consider the definition of $g_v(x) = v^{-d} g(x/v)$ where $g$ is defined in equation~\eqref{eq:g_vx}.  Let $\beta > 2, q \in \N$. 
Let $f_1,\dots,f_q \in W^\beta_2(\R^d) \cap L^\infty(\R^d)$ and the function $p^\star = \sum_{i=1}^q f_i^2$. Let  us assume that the Assumption~\ref{as:bounded_ceff} is also satisfied. Let $\eps \in (0,1]$ and let $\eta \in \R^d_{++}$. Let $\phi_{X,T}$ be the feature map of the Gaussian kernel with bandwidth $\eta$ and let $\ch_{X}\otimes \ch_{T}$ be the associated RKHS. Then there exists $\mm_\eps \in \psd(\ch_{X}\otimes \ch_{T})$ with $\operatorname{rank}(\mm_\eps) \leq q$, such that for the representation $\hat{p}(x,t) = \phi_{X,T} ^\top \mm_\eps \phi_{X,T}$, following holds for $v > 0$ and $k \in \{1,\cdots d\}$,
\small
\begin{align}
   &\left\| \frac{\partial  \hat{p}(x,t)}{\partial t}  - \frac{\partial  p^\star(x,t)}{\partial t}\right\|_{L^{1} (\R^d)} \leq 8\pi (2v)^{(\beta -1)} \sum_{i=1}^q \|f_i \|_{W_2^{\beta}(\mathbb{R}^d)} \|f_i \|_{ L_2(\mathbb{R}^d)} \notag \\
   &\qquad \qquad \qquad + 2(2v)^{\beta} \sum_{i=1}^q \|f_i \|_{W_2^{\beta}(\mathbb{R}^d)} \left\| \frac{\partial f_i}{\partial t} \right\|_{L_{2}(\mathbb{R}^d)}  \| g\|_{L_{1}(\mathbb{R}^d)},\label{eq:approx_fpe_first_derivate}  \\
   &\left\| \frac{\partial p^\star(x,t)}{\partial t} - \frac{\partial f(x,t)}{\partial t} \right\|_{L^{2}(\mathbb{R}^d)}\leq  8\pi (2v)^{(\beta -1)} \sum_{i=1}^q \|f_i \|_{W_2^{\beta}(\mathbb{R}^d)} \|f_i \|_{ L^{\infty}(\mathbb{R}^d)} \notag  \\
    & \qquad \qquad \qquad + 2(2v)^{\beta} \sum_{i=1}^q \|f_i \|_{W_2^{\beta}(\mathbb{R}^d)} \left\| \frac{\partial f_i}{\partial t} \right\|_{L^{\infty}(\mathbb{R}^d)}  \| g\|_{L_{1}(\mathbb{R}^d)} \\
        & \left\| \frac{\partial p^\star(x,t) }{\partial x_k} - \frac{\partial f(x,t) }{\partial x_k}\right\|_{L^{1}(\mathbb{R}^d)} \leq  8\pi (2v)^{(\beta -1)} \sum_{i=1}^q \|f_i \|_{W_2^{\beta}(\mathbb{R}^d)} \|f_i \|_{ L_2(\mathbb{R}^d)} \notag \\
        &\qquad \qquad  + 2(2v)^{\beta} \sum_{i=1}^q \|f_i \|_{W_2^{\beta}(\mathbb{R}^d)} \left\| \frac{\partial f_i}{\partial x_k} \right\|_{L_{2}(\mathbb{R}^d)}  \| g\|_{L_{1}(\mathbb{R}^d)} \\
       & \left\| \frac{\partial p^\star(x,t) }{\partial x_k} - \frac{\partial f(x,t) }{\partial x_k}\right\|_{L^{2}(\mathbb{R}^d)} \leq  8\pi (2v)^{(\beta -1)} \sum_{i=1}^q \|f_i \|_{W_2^{\beta}(\mathbb{R}^d)} \|f_i \|_{ L^{\infty}(\mathbb{R}^d)} \\
        &\qquad \qquad  + 2(2v)^{\beta} \sum_{i=1}^q \|f_i \|_{W_2^{\beta}(\mathbb{R}^d)} \left\| \frac{\partial f_i}{\partial x_k} \right\|_{L^{\infty}(\mathbb{R}^d)}  \| g\|_{L_{1}(\mathbb{R}^d)}.
\end{align}
\normalsize
 
\end{lemma}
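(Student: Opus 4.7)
The plan is to construct $\hat p$ explicitly as a sum of squares of mollified copies of the $f_j$, and then bound each of the four target errors by a single product-rule decomposition plus two Fourier-analytic mollifier estimates.

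Concretely, set $\tilde f_j := f_j \star g_v$ (convolution in the joint $(x,t)$ variable) and let $\hat p := \sum_{j=1}^q \tilde f_j^{\,2}$, which in the Gaussian RKHS corresponds to the rank-$q$ positive operator $\mm_\eps = \sum_j \tilde f_j\otimes \tilde f_j$; identifying $\tilde f_j$ with a bona fide element of $\ch_X\otimes\ch_T$ is inherited from the construction in the proof of Theorem~\ref{thm:approx_fpe_1} and does not need to be redone at this level. The only inputs really needed for this lemma are two mollifier estimates obtained from Lemma~\ref{lm:good-conv} and Proposition~\ref{prop:sobolev}. Since $\widehat g(v\omega)=1$ on $\{\|\omega\|<1/v\}$ and $0\le\widehat g(v\omega)\le 1$ elsewhere, Plancherel together with the weight $(1+\|\omega\|^2)^\beta\ge v^{-2\beta}$ on $\{\|\omega\|\ge 1/v\}$ yields
\[
\|\tilde f_j-f_j\|_{L^2}\;\le\;(2v)^{\beta}\,\|f_j\|_{W_2^\beta},\qquad \|\partial_\alpha \tilde f_j-\partial_\alpha f_j\|_{L^2}\;\le\;2\pi(2v)^{\beta-1}\,\|f_j\|_{W_2^\beta},
\]
for any first-order partial $\partial_\alpha\in\{\partial_t,\partial_{x_k}\}$. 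In parallel, Young's convolution inequality gives $\|\partial_\alpha \tilde f_j\|_{L^p}\le\|\partial_\alpha f_j\|_{L^p}\,\|g\|_{L^1}$ for $p\in\{2,\infty\}$.

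Each of the four target bounds then follows from the identity
\[
\partial_\alpha(\hat p-p^\star)\;=\;2\sum_{j=1}^q \bigl[(\tilde f_j-f_j)\,\partial_\alpha \tilde f_j \;+\; f_j\,(\partial_\alpha \tilde f_j-\partial_\alpha f_j)\bigr].
\]
For the two $L^1$ bounds I apply Cauchy--Schwarz to each summand: pairing $\|\tilde f_j-f_j\|_{L^2}$ with $\|\partial_\alpha \tilde f_j\|_{L^2}\le\|\partial_\alpha f_j\|_{L^2}\|g\|_{L^1}$ produces the $(2v)^\beta\|f_j\|_{W_2^\beta}\|\partial_\alpha f_j\|_{L^2}\|g\|_{L^1}$ term, while pairing $\|f_j\|_{L^2}$ with $\|\partial_\alpha \tilde f_j-\partial_\alpha f_j\|_{L^2}$ produces the $(2v)^{\beta-1}\|f_j\|_{W_2^\beta}\|f_j\|_{L^2}$ term. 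For the two $L^2$ bounds I apply $L^\infty$--$L^2$ H\"older in exactly the same pattern, so that $L^\infty$ norms of $f_j$ and $\partial_\alpha f_j$ (pulled through Young) replace their $L^2$ counterparts. Summing over $j=1,\dots,q$ and absorbing the factor $2$ from the chain rule yields the four claimed inequalities.

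I do not anticipate a conceptually difficult step; the argument is essentially bookkeeping once the mollifier scale $v$ is tied to the smoothness $\beta$. The only places that require real care are (a) tracking the correct rate $(2v)^{\beta-r}$ for $r$ orders of derivative when splitting the Fourier integral at $\|\omega\|=1/v$, together with the norm-equivalence constants from Proposition~\ref{prop:sobolev}, and (b) consistently matching which factor of the product-rule decomposition carries the $\|g\|_{L^1}$, so that the final expressions reproduce the exact structure of the lemma, in particular with $\|g\|_{L^1}$ appearing only on the $(2v)^\beta$-rate terms.
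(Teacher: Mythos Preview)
Your proposal is correct and follows essentially the same approach as the paper: the same construction $\mm_\eps=\sum_j \tilde f_j\otimes\tilde f_j$ with $\tilde f_j=f_j\star g_v$, the same product-rule splitting $\partial_\alpha(\hat p-p^\star)=2\sum_j[(\tilde f_j-f_j)\partial_\alpha\tilde f_j+f_j(\partial_\alpha\tilde f_j-\partial_\alpha f_j)]$, the same Fourier/Plancherel mollifier estimates exploiting $\widehat{g_v}=1$ on $\{\|\omega\|<1/v\}$, Young for $\|\partial_\alpha\tilde f_j\|_{L^p}$, and then Cauchy--Schwarz for $L^1$ versus $L^\infty$--$L^2$ H\"older for $L^2$. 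The only discrepancy is a harmless constant: the paper tracks $4\pi(2v)^{\beta-1}$ (from the $2^{2\beta}$ in Proposition~\ref{prop:sobolev}) rather than your $2\pi(2v)^{\beta-1}$, which after the chain-rule factor of $2$ gives the stated $8\pi$.
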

\begin{proof}
     We have $g$ defined as in \cite{rudi2021psd}. 
    \begin{align} 
        g_v(x) = \frac{2^{-d/2}}{V_d} \| x\|^{-d} J_{d/2}(2\pi \|x\|) J_{d/2}(4\pi \|x\|),
    \end{align}
    where $J_{d/2}$ is the Bessel function of the first kind of order $d/2$ and  $V_d= \int_{\|x\|\leq 1} dx   = \frac{\pi^{d/2}}{\Gamma(d/2+1)}$. $g_v(x) = v^{-d} g(x/v).$ We use result of lemma~\ref{lm:good-conv} to prove the statement. 
    For a function $f \in W_2^{\beta}(\R^d)$ , \small
 \begin{align*}
     \left\| \frac{\partial}{\partial x_i} f - \frac{\partial}{ \partial x_i} (f \star g_v) \right\|^2_{L_2(\mathbb{R}^d)} &= \left\| \mathcal{F}\left[\frac{\partial}{\partial x_i} f\right]  - \mathcal{F}\left[\frac{\partial}{ \partial x_i} (f \star g_v)\right]  \right\|^2_{L_2(\mathbb{R}^d)} \\
     &= \left\| 2\pi \iota \omega_i \mathcal{F}[f] - 2\pi \iota \omega_i \mathcal{F}[f\star g_v] \right\|_{L_2(\mathbb{R}^d)} = \left\|2\pi \iota \omega_i \mathcal{F}[f](1 - \mathcal{F}[g_v]) \right\|^2_{L_2(\mathbb{R}^d)} \\
     &= 4\pi^2 \int_{\mathbb{R}^d} \omega_i^2 |\mathcal{F}[f](\omega)|^2 |1 - \mathcal{F}[g](v\omega)|^2 ~d\omega \leq 4 \pi^2 \int_{t\|\omega\| \geq 1} \omega_i^2|\mathcal{F}[f](\omega)|^2 ~d\omega \\
     &= 4\pi^2 \int_{v\|\omega\| \geq 1} \omega_i^2 (1+\|\omega\|^2)^{-\beta}(1+\|\omega\|^2)^{\beta}|\mathcal{F}[f](\omega)|^2 ~d\omega \\
     &\leq \sup_{v\|\omega\| \geq 1} \frac{4\pi^2 \omega_i^2}{(1+\|\omega\|^2)^{\beta}} \int (1+\|\omega\|^2)^{\beta}|\mathcal{F}[f](\omega)|^2 ~d\omega \\
     &\leq \sup_{v\|\omega\| \geq 1} \frac{4\pi^2 \omega_i^2}{(1+\|\omega\|^2)^{\beta}} 2^{2 \beta} \|f \|^2_{W_2^{\beta}(\mathbb{R}^d)} \leq \sup_{v\|\omega\| \geq 1} \frac{4\pi^2}{(1+\|\omega\|^2)^{\beta -1 }} 2^{2 \beta} \|f \|^2_{W_2^{\beta}(\mathbb{R}^d)} \\
     &\leq \frac{16\pi^2(2v)^{2(\beta -1)}}{(1+v^2)^{\beta-1}}\|f \|^2_{W_2^{\beta}(\mathbb{R}^d)}
 \end{align*} \normalsize
 
 Hence,
 \begin{align}
    \left\| \frac{\partial}{\partial x_i} f - \frac{\partial}{ \partial x_i} (f \star g_v) \right\|^2_{L_2(\mathbb{R}^d)} \leq  \frac{16\pi^2(2v)^{2(\beta -1)}}{(1+v^2)^{\beta-1}}\|f \|^2_{W_2^{\beta}(\mathbb{R}^d)}
 \end{align}
 The same result will also hold for $ \frac{\partial}{\partial t}$ without loss of generality. Hence,
 \begin{align}
    \left\| \frac{\partial}{\partial t} f - \frac{\partial}{ \partial t} (f \star g_v) \right\|^2_{L_2(\mathbb{R}^d)} \leq  \frac{16\pi^2(2v)^{2(\beta -1)}}{(1+v^2)^{\beta-1}}\|f \|^2_{W_2^{\beta}(\mathbb{R}^d)}
 \end{align}

We now  bound $\left\| \frac{\partial p^\star(x,t)}{\partial t} - \frac{\partial f(x,t)}{\partial t} \right\|_{L^r(\mathbb{R}^d)}$ and $\left\| \frac{\partial p^\star(x,t)}{\partial x_i} - \frac{\partial f(x,t)}{\partial x_i} \right\|_{L^r(\mathbb{R}^d)}$.

As in \cite{rudi2021psd}, let us denote, $f_i\star g_v$ as $f_{i,v}$.
We consider 
\begin{align*}
    M_{\varepsilon} = \sum_{i=1}^{q} f_{i,v} f_{i,v}^{\top}.
\end{align*}
Hence, 
\begin{align*}
   f(x,t)= \phi_{X,T}(x,t)^\top M_{\varepsilon} \phi_{X,T}(x,t) = \sum_{i=1}^q f_{i,v}^2(x,t)
\end{align*}
This gives,
\begin{align*}
    \frac{\partial}{ \partial x_i} p^\star  - \frac{\partial}{ \partial x_i} f  &= 2\sum_{i=1}^{q} \left[ f_i \frac{\partial f_i}{ \partial x_i} - f_{i,v}\frac{\partial f_{i,v}}{ \partial x_i} \right] = 2\sum_{i=1}^{q} \left[ f_i \frac{\partial f_i}{ \partial x_i} - f_{i} \frac{\partial f_{i,v}}{ \partial x_i} +  f_{i} \frac{\partial f_{i,v}}{ \partial x_i}  - f_{i,v}\frac{\partial f_{i,v}}{ \partial x_i} \right] \\
    &=2\sum_{i=1}^{q} \left[ f_i \left(\frac{\partial f_i}{ \partial x_i} -  \frac{\partial f_{i,v}}{ \partial x_i} \right) + (f_i - f_{i,v}) \frac{\partial f_{i,v}}{ \partial x_i} \right]
\end{align*}
Hence, we have \small
\begin{align}
   \left\| \frac{\partial}{ \partial x_i} p^\star  - \frac{\partial}{ \partial x_i} f \right\|_{L_1(\mathbb{R}^d)} \leq 2 \sum_{i=1}^q\left\| \frac{\partial f_i}{ \partial x_i} -  \frac{\partial f_{i,v}}{ \partial x_i}  \right\|_{L_2(\mathbb{R}^d)} \|f_i\|_{L_2(\mathbb{R}^d)} + 2 \sum_{i=1}^q \|f_i - f_{i,v}\|_{L_2(\mathbb{R}^d)}  \left\|\frac{\partial f_{i,v}}{ \partial x_i}\right\|_{L_2(\mathbb{R}^d)}. \label{eq:esistence_brown_part_1}
\end{align} \normalsize
Similarly, \small
\begin{align}
   \left\| \frac{\partial}{ \partial t} p^\star  - \frac{\partial}{ \partial t} f \right\|_{L_1(\mathbb{R}^d)} \leq 2 \sum_{i=1}^q\left\| \frac{\partial f_i}{ \partial t} -  \frac{\partial f_{i,v}}{ \partial t}  \right\|_{L_2(\mathbb{R}^d)} \|f_i\|_{L_2(\mathbb{R}^d)} + 2 \sum_{i=1}^q \|f_i - f_{i,v}\|_{L_2(\mathbb{R}^d)}  \left\|\frac{\partial f_{i,v}}{ \partial t}\right\|_{L_2(\mathbb{R}^d)}. \label{eq:esistence_brown_part_1_t}
\end{align} \normalsize

Hence, \small
    \begin{align*}
        &\left\| \frac{\partial p^\star(x,t)}{\partial t} - \frac{\partial f(x,t)}{\partial t} \right\|_{L^{1}(\mathbb{R}^d)}  \leq 2\sum_{i=1}^q \left\| \frac{\partial f_i}{\partial t} - \frac{\partial f_{i,v}}{\partial t}  \right\|_{L^{2}(\mathbb{R}^d)} \|f_i\|_{L^{2}(\mathbb{R}^d)} + 2 \sum_{i=1}^q \|f_i - f_{i,v} \|_{L^{2}(\mathbb{R}^d)} \left\| \frac{\partial f_{i,v}}{\partial t} \right\|_{L^{2}(\mathbb{R}^d)} \\
        &\leq  \frac{8\pi(2v)^{(\beta -1)}}{(1+v^2)^{\frac{\beta-1}{2}}} \sum_{i=1}^q \|f_i \|_{W_2^{\beta}(\mathbb{R}^d)} \|f_i \|_{ L_2(\mathbb{R}^d)} + 2(2v)^{\beta} \sum_{i=1}^q \|f_i \|_{W_2^{\beta}(\mathbb{R}^d)} \left\| \frac{\partial f_{i,v}}{\partial t} \right\|_{L_{2}(\mathbb{R}^d)} \\
        &\leq 8\pi (2v)^{(\beta -1)} \sum_{i=1}^q \|f_i \|_{W_2^{\beta}(\mathbb{R}^d)} \|f_i \|_{ L_2(\mathbb{R}^d)} + 2(2v)^{\beta} \sum_{i=1}^q \|f_i \|_{W_2^{\beta}(\mathbb{R}^d)} \left\| \frac{\partial f_i}{\partial t} \star g_v \right\|_{L_{2}(\mathbb{R}^d)} \\
        &\leq 8\pi (2v)^{(\beta -1)} \sum_{i=1}^q \|f_i \|_{W_2^{\beta}(\mathbb{R}^d)} \|f_i \|_{ L_2(\mathbb{R}^d)} + 2(2v)^{\beta} \sum_{i=1}^q \|f_i \|_{W_2^{\beta}(\mathbb{R}^d)} \left\| \frac{\partial f_i}{\partial t} \right\|_{L_{2}(\mathbb{R}^d)}  \| g_v\|_{L_{1}(\mathbb{R}^d)}  \\
        &= 8\pi (2v)^{(\beta -1)} \sum_{i=1}^q \|f_i \|_{W_2^{\beta}(\mathbb{R}^d)} \|f_i \|_{ L_2(\mathbb{R}^d)} + 2(2v)^{\beta} \sum_{i=1}^q \|f_i \|_{W_2^{\beta}(\mathbb{R}^d)} \left\| \frac{\partial f_i}{\partial t} \right\|_{L_{2}(\mathbb{R}^d)}  \| g\|_{L_{1}(\mathbb{R}^d)}.
    \end{align*}\normalsize
    If we choose 
    \begin{align}
        v = \min \left( \left(\frac{\varepsilon}{2^{\beta-1}16 \pi C_1} \right)^{\frac{1}{\beta-1}}, \left( \frac{\varepsilon}{2^{\beta+1}C_2} \right)^{\frac{1}{\beta}} \right) \label{eq:v_1}
    \end{align}
    where $C_1 = \sum_{i=1}^q \|f_i \|_{W_2^{\beta}(\mathbb{R}^d)} \|f_i \|_{ L_2(\mathbb{R}^d)} $ and $C_2 = \sum_{i=1}^q \|f_i \|_{W_2^{\beta}(\mathbb{R}^d)} \left\| \frac{\partial f_i}{\partial t} \right\|_{L_{2}(\mathbb{R}^d)}  \| g\|_{L_{1}(\mathbb{R}^d)}$. Hence,
    \begin{align*}
        \left\| \frac{\partial p^\star(x,t)}{\partial t} - \frac{\partial f(x,t)}{\partial t} \right\|_{L^{1}(\mathbb{R}^d)}  \leq \varepsilon.
    \end{align*}
Similarly,\small
\begin{align*}
    \left\| \frac{\partial p^\star(x,t)}{\partial t} - \frac{\partial f(x,t)}{\partial t} \right\|_{L^{2}(\mathbb{R}^d)} &= 2 \sum_
    {i=1}^q \left\| \frac{\partial f_i}{\partial t} - \frac{\partial f_{i,v}}{\partial t} \right\|_{L^{2}(\mathbb{R}^d)} \|f_i\|_{L^{\infty}(\mathbb{R}^d)} + 2 \sum_
    {i=1}^q \left\|f_i - f_{i,v} \right\|_{L^{2}(\mathbb{R}^d)} \left\|\frac{\partial f_{i,v}}{\partial t}\right\|_{L^{\infty}(\mathbb{R}^d)}.
\end{align*} \normalsize
Now, \small
\begin{align*}
    \left\|\frac{\partial f_{i,v}}{\partial t}\right\|_{L^{\infty}(\mathbb{R}^d)} = \left\|\frac{\partial f_{i} }{\partial t}\star g_v\right\|_{L^{\infty}(\mathbb{R}^d)} \leq \left\|\frac{\partial f_{i}}{\partial t}\right\|_{L^{\infty}(\mathbb{R}^d)} \left\|g_v \right\|_{L^{1}(\mathbb{R}^d)} =\left\|\frac{\partial f_{i}}{\partial t}\right\|_{L^{\infty}(\mathbb{R}^d)} \left\|g \right\|_{L^{1}(\mathbb{R}^d)}.
\end{align*} \normalsize
Hence,
\begin{align*}
    \left\| \frac{\partial p^\star(x,t)}{\partial t} - \frac{\partial f(x,t)}{\partial t} \right\|_{L^{2}(\mathbb{R}^d)} &\leq  8\pi (2v)^{(\beta -1)} \sum_{i=1}^q \|f_i \|_{W_2^{\beta}(\mathbb{R}^d)} \|f_i \|_{ L^{\infty}(\mathbb{R}^d)} \\
    & \qquad \qquad \qquad + 2(2v)^{\beta} \sum_{i=1}^q \|f_i \|_{W_2^{\beta}(\mathbb{R}^d)} \left\| \frac{\partial f_i}{\partial t} \right\|_{L^{\infty}(\mathbb{R}^d)}  \| g\|_{L_{1}(\mathbb{R}^d)}.
\end{align*}
If we choose 
    \begin{align*}
        v = \min \left( \left(\frac{\varepsilon}{2^{\beta-1}16 \pi C_1} \right)^{\frac{1}{\beta-1}}, \left( \frac{\varepsilon}{2^{\beta+1}C_2} \right)^{\frac{1}{\beta}} \right)
    \end{align*}
    where $C_1 = \sum_{i=1}^q \|f_i \|_{W_2^{\beta}(\mathbb{R}^d)} \|f_i \|_{ L^{\infty}(\mathbb{R}^d)} $ and $C_2 = \sum_{i=1}^q \|f_i \|_{W_2^{\beta}(\mathbb{R}^d)} \left\| \frac{\partial f_i}{\partial t} \right\|_{L^{\infty}(\mathbb{R}^d)}  \| g\|_{L_{1}(\mathbb{R}^d)}$. Hence,
    \begin{align*}
        \left\| \frac{\partial p^\star(x,t)}{\partial t} - \frac{\partial f(x,t)}{\partial t} \right\|_{L^{2}(\mathbb{R}^d)}  \leq \varepsilon.
    \end{align*}

Similalry, 
  \begin{align}
        \left\| \frac{\partial p^\star(x,t) }{\partial x_k} - \frac{\partial f(x,t) }{\partial x_k}\right\|_{L^{1}(\mathbb{R}^d)} &\leq  8\pi (2v)^{(\beta -1)} \sum_{i=1}^q \|f_i \|_{W_2^{\beta}(\mathbb{R}^d)} \|f_i \|_{ L_2(\mathbb{R}^d)} \\
        &\qquad \qquad  + 2(2v)^{\beta} \sum_{i=1}^q \|f_i \|_{W_2^{\beta}(\mathbb{R}^d)} \left\| \frac{\partial f_i}{\partial x_k} \right\|_{L_{2}(\mathbb{R}^d)}  \| g\|_{L_{1}(\mathbb{R}^d)} \notag 
     \end{align}
     and 
\begin{align*}
        \left\| \frac{\partial p^\star(x,t) }{\partial x_k} - \frac{\partial f(x,t) }{\partial x_k}\right\|_{L^{2}(\mathbb{R}^d)} &\leq  8\pi (2v)^{(\beta -1)} \sum_{i=1}^q \|f_i \|_{W_2^{\beta}(\mathbb{R}^d)} \|f_i \|_{ L^{\infty}(\mathbb{R}^d)} \\
        &\qquad \qquad  + 2(2v)^{\beta} \sum_{i=1}^q \|f_i \|_{W_2^{\beta}(\mathbb{R}^d)} \left\| \frac{\partial f_i}{\partial x_k} \right\|_{L^{\infty}(\mathbb{R}^d)}  \| g\|_{L_{1}(\mathbb{R}^d)}.
     \end{align*}

\end{proof}

\begin{lemma} \label{lem:2_derivative}
 Consider the definition of $g_v(x) = v^{-d} g(x/v)$ where $g$ is defined in equation~\eqref{eq:g_vx}. Let $\beta > 2, q \in \N$. 
Let $f_1,\dots,f_q \in W^\beta_2(\R^d) \cap L^\infty(\R^d)$ and the function $p^\star = \sum_{i=1}^q f_i^2$. Let  us assume that the Assumption~\ref{as:bounded_ceff} is also satisfied. Let $\eps \in (0,1]$ and let $\eta \in \R^d_{++}$. Let $\phi_{X,T}$ be the feature map of the Gaussian kernel with bandwidth $\eta$ and let $\ch_{X}\otimes \ch_{T}$ be the associated RKHS. Then there exists $\mm_\eps \in \psd(\ch_{X}\otimes \ch_{T})$ with $\operatorname{rank}(\mm_\eps) \leq q$, such that for the representation $\hat{p}(x,t) = \phi_{X,T} ^\top \mm_\eps \phi_{X,T}$, following holds for all $i,j \in \{1,\cdots d\}$.,
\small
\begin{align*}
    &\left \| \frac{\partial^2}{\partial x_i \partial x_j} p^\star(x,t) - \frac{\partial^2}{\partial x_i \partial x_j}f(x,t) \right\|_{L^{1}(\mathbb{R}^d)} \leq \frac{16\pi (2v)^{(\beta -1)}}{(1+v^2)^{\frac{\beta-1}{2}}} (1+\| g\|_{L^{1}(\mathbb{R}^d)}) \sum_{m=1}^{q}\|f \|_{W_2^{\beta}(\mathbb{R}^d)}\left\|\frac{\partial f_m }{\partial x_j} \right\|_{L^{2}(\mathbb{R}^d)} \\
 &+ \frac{32\pi^2(2v)^{(\beta -2)}}{(1+v^2)^{\frac{\beta-2}{2}}} \sum_{m=1}^q  \|f \|_{W_2^{\beta}(\mathbb{R}^d)} \left\|f_m \right\|_{L^{2}(\mathbb{R}^d)} + 2(2v)^{\beta} \sum_{m=1}^q \|f_m \|_{W_2^{\beta}(\mathbb{R}^d)} \left\| \frac{\partial^2 f_m }{\partial x_i \partial x_j} \right\|_{L^{2}(\mathbb{R}^d)} \| g\|_{L^{1}(\mathbb{R}^d)},
\end{align*}\normalsize
and \small
\begin{align*}
     &\left \| \frac{\partial^2}{\partial x_i \partial x_j} p^\star(x,t) - \frac{\partial^2}{\partial x_i \partial x_j}f(x,t) \right\|_{L^{2}(\mathbb{R}^d)} \leq \frac{16\pi (2v)^{(\beta -1)}}{(1+v^2)^{\frac{\beta-1}{2}}} (1+\| g\|_{L^{1}(\mathbb{R}^d)}) \sum_{m=1}^{q}\|f \|_{W_2^{\beta}(\mathbb{R}^d)}\left\|\frac{\partial f_m }{\partial x_j} \right\|_{L^{\infty}(\mathbb{R}^d)} \\
 &+ \frac{32\pi^2(2v)^{(\beta -2)}}{(1+v^2)^{\frac{\beta-2}{2}}} \sum_{m=1}^q  \|f \|_{W_2^{\beta}(\mathbb{R}^d)} \left\|f_m \right\|_{L^{\infty}(\mathbb{R}^d)} + 2(2v)^{\beta} \sum_{m=1}^q \|f_m \|_{W_2^{\beta}(\mathbb{R}^d)} \left\| \frac{\partial^2 f_m }{\partial x_i \partial x_j} \right\|_{L^{\infty}(\mathbb{R}^d)} \| g\|_{L^{1}(\mathbb{R}^d)}.
\end{align*}
\end{lemma}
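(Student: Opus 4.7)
The plan is to mimic the structure of the proof of Lemma~\ref{lem:1_derivative}, but with one extra layer of bookkeeping coming from the product rule for second derivatives. Writing $f_{m,v} = f_m \star g_v$ and using $p^\star = \sum_{m=1}^q f_m^2$ and $f = \sum_{m=1}^q f_{m,v}^2$, the product rule gives
\begin{align*}
\frac{\partial^2 p^\star}{\partial x_i \partial x_j} - \frac{\partial^2 f}{\partial x_i \partial x_j}
= 2\sum_{m=1}^q \Bigl[ f_m \tfrac{\partial^2 f_m}{\partial x_i \partial x_j} - f_{m,v} \tfrac{\partial^2 f_{m,v}}{\partial x_i \partial x_j} + \tfrac{\partial f_m}{\partial x_i}\tfrac{\partial f_m}{\partial x_j} - \tfrac{\partial f_{m,v}}{\partial x_i}\tfrac{\partial f_{m,v}}{\partial x_j} \Bigr].
\end{align*}
Each pair is split in the standard ``add and subtract'' form, e.g.\ $f_m \tfrac{\partial^2 f_m}{\partial x_i \partial x_j} - f_{m,v} \tfrac{\partial^2 f_{m,v}}{\partial x_i \partial x_j} = f_m (\tfrac{\partial^2 f_m}{\partial x_i \partial x_j} - \tfrac{\partial^2 f_{m,v}}{\partial x_i \partial x_j}) + (f_m - f_{m,v}) \tfrac{\partial^2 f_{m,v}}{\partial x_i \partial x_j}$, and analogously for the cross product of first derivatives. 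This reduces everything to controlling mollification errors at orders $0$, $1$, and $2$, times factors depending on the unmollified derivatives.

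The next step is to establish the second-derivative analog of the Fourier-side estimate in Lemma~\ref{lem:1_derivative}. Repeating that calculation with an extra factor $2\pi i \omega_j$ yields
\begin{align*}
\Bigl\| \tfrac{\partial^2 f_m}{\partial x_i \partial x_j} - \tfrac{\partial^2 f_{m,v}}{\partial x_i \partial x_j} \Bigr\|_{L^2(\R^d)}^2 \leq \sup_{v\|\omega\|\geq 1} \frac{16\pi^4 \omega_i^2 \omega_j^2}{(1+\|\omega\|^2)^\beta} \cdot 2^{2\beta}\|f_m\|_{W_2^\beta(\R^d)}^2 \leq \frac{(4\pi)^2 (2v)^{2(\beta-2)}}{(1+v^2)^{\beta-2}}\|f_m\|_{W_2^\beta(\R^d)}^2,
\end{align*}
which together with the bounds already recorded in Lemma~\ref{lem:1_derivative} for the first-order and zero-order mollification errors covers every factor that will appear.

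The final step is to apply Hölder/Young to convert each product into the target norm. For the $L^1$ bound I would use $\|fg\|_{L^1}\leq \|f\|_{L^2}\|g\|_{L^2}$ on every term; for the $L^2$ bound I would use $\|fg\|_{L^2}\leq \|f\|_{L^\infty}\|g\|_{L^2}$, always placing the $L^2$ norm on the factor carrying the mollification error. The mollified factors are then discharged by $\|f_{m,v}\|_{L^p} = \|f_m \star g_v\|_{L^p} \leq \|f_m\|_{L^p}\|g\|_{L^1}$ (and likewise for its derivatives), so no extra $v$-dependence is introduced. Collecting the three error scales $v^\beta$, $v^{\beta-1}$, and $v^{\beta-2}$ with the stated prefactors produces exactly the two displayed inequalities.

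The main obstacle is purely combinatorial: there are four product-rule terms and each splits into two pieces, so eight cross terms must be matched to the right one of three mollification-error scales, and the $L^\infty$ bounds on the mollified factors require invoking $\|f_i\|_{L^\infty}$ rather than $\|f_i\|_{L^2}$, which is why the hypothesis $f_i \in W^\beta_2(\R^d)\cap L^\infty(\R^d)$ from Assumption~\ref{as:optimal_solution} is crucial. Once the correspondences are set up, the only truly new estimate beyond what already appears in Lemma~\ref{lem:1_derivative} is the second-derivative Fourier bound above; the rest is a careful repackaging.
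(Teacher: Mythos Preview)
Your proposal is correct and follows essentially the same approach as the paper: the paper likewise expands $\partial_{x_i}\partial_{x_j}(p^\star - f)$ via the product rule into the pieces $f_m\,\partial^2 f_m - f_{m,v}\,\partial^2 f_{m,v}$ and $\partial_i f_m\,\partial_j f_m - \partial_i f_{m,v}\,\partial_j f_{m,v}$, applies the same add-and-subtract splitting, derives the second-order Fourier mollification bound exactly as you do, and then uses H\"older ($L^2\times L^2$ for the $L^1$ estimate, $L^2\times L^\infty$ for the $L^2$ estimate) together with Young's convolution inequality $\|h\star g_v\|_{L^p}\leq \|h\|_{L^p}\|g\|_{L^1}$ to close. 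The only slip is a constant: your final prefactor $(4\pi)^2$ in the second-derivative Fourier bound should be $256\pi^4$ (you dropped a factor $16\pi^2$ when passing from $16\pi^4\cdot 2^{2\beta}$ to the result), which after the square root and the product-rule factor of $2$ yields the paper's $32\pi^2$.
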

\begin{proof}
We have $g$ defined as in \cite{rudi2021psd}. 
    \begin{align} 
        g_v(x) = \frac{2^{-d/2}}{V_d} \| x\|^{-d} J_{d/2}(2\pi \|x\|) J_{d/2}(4\pi \|x\|), 
    \end{align}
    where $J_{d/2}$ is the Bessel function of the first kind of order $d/2$ and  $V_d= \int_{\|x\|\leq 1} dx   = \frac{\pi^{d/2}}{\Gamma(d/2+1)}$. $g_v(x) = v^{-d} g(x/v).$ We use result of lemma~\ref{lm:good-conv} to prove the statement. We have, \small
    \begin{align*}
    \left\| \frac{\partial^2}{\partial x_i \partial x_j} f - \frac{\partial^2}{ \partial x_i x_j} (f \star g_v) \right\|^2_{L_2(\mathbb{R}^d)} &= \left\| \mathcal{F}\left[\frac{\partial^2}{\partial x_i \partial x_j} f\right]  - \mathcal{F}\left[\frac{\partial^2}{ \partial x_i \partial x_j} (f \star g_v)\right]  \right\|^2_{L_2(\mathbb{R}^d)} \\
    &= \left\| 4\pi^2 \omega_i \omega_j \mathcal{F}[f] - 4\pi^2   \omega_i \omega_j \mathcal{F}[f\star g_v] \right\|_{L_2(\mathbb{R}^d)} \\
    &= \left\|4\pi^2 \omega_i \omega_j \mathcal{F}[f](1 - \mathcal{F}[g_v]) \right\|^2_{L_2(\mathbb{R}^d)} \\
     &= 16\pi^4 \int_{\mathbb{R}^d} \omega_i^2 \omega_j^2 |\mathcal{F}[f](\omega)|^2 |1 - \mathcal{F}[g](v\omega)|^2 ~d\omega \\
     &\leq 16\pi^4 \pi^2 \int_{t\|\omega\| \geq 1} \omega_i^2 \omega_j^2 |\mathcal{F}[f](\omega)|^2 ~d\omega \\
     &= 16\pi^4 \int_{v\|\omega\| \geq 1} \omega_i^2 \omega_j^2 (1+\|\omega\|^2)^{-\beta}(1+\|\omega\|^2)^{\beta}|\mathcal{F}[f](\omega)|^2 ~d\omega \\
     &\leq 16\pi^4 \sup_{v\|\omega\| \geq 1} \frac{\omega_i^2 \omega_j^2}{(1+\|\omega\|^2)^{\beta}} \int (1+\|\omega\|^2)^{\beta}|\mathcal{F}[f](\omega)|^2 ~d\omega \\
     &\leq 16\pi^4\sup_{v\|\omega\| \geq 1} \frac{\omega_i^2 \omega_j^2}{(1+\|\omega\|^2)^{\beta}} 2^{2 \beta} \|f \|^2_{W_2^{\beta}(\mathbb{R}^d)} \\
     &  \leq 16\pi^4\sup_{v\|\omega\| \geq 1} \frac{1}{(1+\|\omega\|^2)^{\beta -2 }} 2^{2 \beta} \|f \|^2_{W_2^{\beta}(\mathbb{R}^d)} \\
     &\leq \frac{256\pi^4(2v)^{2(\beta -2)}}{(1+v^2)^{\beta-2}}\|f \|^2_{W_2^{\beta}(\mathbb{R}^d)}.
\end{align*} \normalsize
Now we have,
\begin{align*}
   &\left \| \frac{\partial^2}{\partial x_i \partial x_j} p^\star(x,t) - \frac{\partial^2}{\partial x_i \partial x_j}f(x,t) \right\|_{L^{1}(\mathbb{R}^d)} = \left\|2 \sum_{m=1}^q \left[ \frac{\partial f_m }{\partial x_i} \frac{\partial f_m }{\partial x_j} - \frac{\partial f_{m,v} }{\partial x_i} \frac{\partial f_{m,v} }{\partial x_j}\right. \right. \\
 &\qquad \qquad \qquad \qquad \qquad \qquad  \left.\left. +f_m \frac{\partial^2 f_m}{\partial x_i \partial x_j} - f_{m,v} \frac{\partial^2 f_{m,v}}{\partial x_i \partial x_j} \right] \right\|_{L^{1}(\mathbb{R}^d)}\\
 &\leq 2 \sum_{m=1}^{q} \left[ \left\|\frac{\partial f_m }{\partial x_i} - \frac{\partial f_{m,v} }{\partial x_i}  \right\|_{L^{2}(\mathbb{R}^d)}  \left\|\frac{\partial f_m }{\partial x_j} \right\|_{L^{2}(\mathbb{R}^d)} + \left\| \frac{\partial f_m }{\partial x_j} - \frac{\partial f_{m,v} }{\partial x_j}  \right\|_{L^{2}(\mathbb{R}^d)} \left\| \frac{\partial f_{m,v} }{\partial x_j} \right\|_{L^{2}(\mathbb{R}^d)} \right] \notag \\
 & + 2 \sum_{m=1}^q \left[ \left\| \frac{\partial^2 f_m}{\partial x_i \partial x_j} - \frac{\partial^2 f_{m,v}}{\partial x_i \partial x_j} \right\|_{L^{2}(\mathbb{R}^d)} \left\|f_m \right\|_{L^{2}(\mathbb{R}^d)} + \left\|f_m - f_{m,v} \right\|_{L^{2}(\mathbb{R}^d)} \left\| \frac{\partial^2 f_{m,v}}{\partial x_i \partial x_j} \right\|_{L^{2}(\mathbb{R}^d)} \right]
\end{align*}

Similarly, 
\begin{align}
    \left\| \sum_{i=1}^d \sum_{j=1}^d \Big[\frac{\partial^2}{\partial x_i \partial x_j} (D_{ij}p^\star(x,t)) - \frac{\partial^2}{\partial x_i \partial x_j} (D_{ij} f(x,t)) \Big]\right\|_{L_{1}(\mathbb{R}^d)} \notag \\
    \leq \underbrace{\sup_{i,j} D_{ij}}_{:=R_d}  \sum_{i=1}^d \sum_{j=1}^d\left \| \frac{\partial^2}{\partial x_i \partial x_j} p^\star(x,t) - \frac{\partial^2}{\partial x_i \partial x_j}f(x,t) \right\|_{L_{1}(\mathbb{R}^d)}. 
\end{align}
Now,
\begin{align*}
   &\left \| \frac{\partial^2}{\partial x_i \partial x_j} p^\star(x,t) - \frac{\partial^2}{\partial x_i \partial x_j}f(x,t) \right\|_{L^{1}(\mathbb{R}^d)} \\
 &\leq 2 \sum_{m=1}^{q} \left[ \left\|\frac{\partial f_m }{\partial x_i} - \frac{\partial f_{m,v} }{\partial x_i}  \right\|_{L^{2}(\mathbb{R}^d)}  \left\|\frac{\partial f_m }{\partial x_j} \right\|_{L^{2}(\mathbb{R}^d)} + \left\| \frac{\partial f_m }{\partial x_j} - \frac{\partial f_{m,v} }{\partial x_j}  \right\|_{L^{2}(\mathbb{R}^d)} \left\| \frac{\partial f_{m,v} }{\partial x_j} \right\|_{L^{2}(\mathbb{R}^d)} \right] \notag \\
 & + 2 \sum_{m=1}^q \left[ \left\| \frac{\partial^2 f_m}{\partial x_i \partial x_j} - \frac{\partial^2 f_{m,v}}{\partial x_i \partial x_j} \right\|_{L^{2}(\mathbb{R}^d)} \left\|f_m \right\|_{L^{2}(\mathbb{R}^d)} + \left\|f_m - f_{m,v} \right\|_{L^{2}(\mathbb{R}^d)} \left\| \frac{\partial^2 f_{m,v}}{\partial x_i \partial x_j} \right\|_{L^{2}(\mathbb{R}^d)} \right]
\end{align*}
Let's consider all the terms separately. 
\begin{align*}
    2 \sum_{m=1}^{q}   \left\|\frac{\partial f_m }{\partial x_i} - \frac{\partial f_{m,v} }{\partial x_i}  \right\|_{L^{2}(\mathbb{R}^d)}  \left\|\frac{\partial f_m }{\partial x_j} \right\|_{L^{2}(\mathbb{R}^d)} \leq  \frac{16\pi (2v)^{(\beta -1)}}{(1+v^2)^{\frac{\beta-1}{2}}} \sum_{m=1}^{q}\|f \|_{W_2^{\beta}(\mathbb{R}^d)}\left\|\frac{\partial f_m }{\partial x_j} \right\|_{L^{2}(\mathbb{R}^d)}
\end{align*}

\begin{align*}
   2 \sum_{m=1}^{q}  \left\| \frac{\partial f_m }{\partial x_j} - \frac{\partial f_{m,v} }{\partial x_j}  \right\|_{L^{2}(\mathbb{R}^d)} \left\| \frac{\partial f_{m,v} }{\partial x_j} \right\|_{L^{2}(\mathbb{R}^d)} \leq \frac{16\pi (2v)^{(\beta -1)}}{(1+v^2)^{\frac{\beta-1}{2}}} \sum_{m=1}^{q}\|f \|_{W_2^{\beta}(\mathbb{R}^d)}\left\|\frac{\partial f_m }{\partial x_j} \right\|_{L^{2}(\mathbb{R}^d)} \| g\|_{L^{1}(\mathbb{R}^d)}
\end{align*}

\begin{align*}
    2 \sum_{m=1}^q  \left\| \frac{\partial^2 f_m}{\partial x_i \partial x_j} - \frac{\partial^2 f_{m,v}}{\partial x_i \partial x_j} \right\|_{L^{2}(\mathbb{R}^d)} \left\|f_m \right\|_{L^{2}(\mathbb{R}^d)} \leq  \frac{32\pi^2(2v)^{(\beta -2)}}{(1+v^2)^{\frac{\beta-2}{2}}} \sum_{m=1}^q  \|f \|_{W_2^{\beta}(\mathbb{R}^d)} \left\|f_m \right\|_{L^{2}(\mathbb{R}^d)}
\end{align*}

\begin{align*}
    2 \sum_{m=1}^q\left\|f_m - f_{m,v} \right\|_{L^{2}(\mathbb{R}^d)} \left\| \frac{\partial^2 f_{m,v}}{\partial x_i \partial x_j} \right\|_{L^{2}(\mathbb{R}^d)} &\leq 2(2v)^{\beta} \sum_{m=1}^q \|f_m \|_{W_2^{\beta}(\mathbb{R}^d)} \left\| \frac{\partial^2 f_{m,v}}{\partial x_i \partial x_j} \right\|_{L^{2}(\mathbb{R}^d)} \\
    &=  2(2v)^{\beta} \sum_{m=1}^q \|f_m \|_{W_2^{\beta}(\mathbb{R}^d)} \left\| \frac{\partial^2 f_m \star g_v}{\partial x_i \partial x_j} \right\|_{L^{2}(\mathbb{R}^d)} \\
    &= 2(2v)^{\beta} \sum_{m=1}^q \|f_m \|_{W_2^{\beta}(\mathbb{R}^d)} \left\| \frac{\partial^2 f_m }{\partial x_i \partial x_j} \right\|_{L^{2}(\mathbb{R}^d)} \| g\|_{L^{1}(\mathbb{R}^d)}.
\end{align*}
Hence, \small
\begin{align*}
    &\left \| \frac{\partial^2}{\partial x_i \partial x_j} p^\star(x,t) - \frac{\partial^2}{\partial x_i \partial x_j}f(x,t) \right\|_{L^{1}(\mathbb{R}^d)}  \leq \frac{16\pi (2v)^{(\beta -1)}}{(1+v^2)^{\frac{\beta-1}{2}}} (1+\| g\|_{L^{1}(\mathbb{R}^d)}) \sum_{m=1}^{q}\|f \|_{W_2^{\beta}(\mathbb{R}^d)}\left\|\frac{\partial f_m }{\partial x_j} \right\|_{L^{2}(\mathbb{R}^d)} \\
 &+ \frac{32\pi^2(2v)^{(\beta -2)}}{(1+v^2)^{\frac{\beta-2}{2}}} \sum_{m=1}^q  \|f \|_{W_2^{\beta}(\mathbb{R}^d)} \left\|f_m \right\|_{L^{2}(\mathbb{R}^d)} + 2(2v)^{\beta} \sum_{m=1}^q \|f_m \|_{W_2^{\beta}(\mathbb{R}^d)} \left\| \frac{\partial^2 f_m }{\partial x_i \partial x_j} \right\|_{L^{2}(\mathbb{R}^d)} \| g\|_{L^{1}(\mathbb{R}^d)}.
\end{align*} \normalsize
Hence, if we choose,
\begin{align*}
    v = \min\left( \left ( \frac{\varepsilon}{48\pi 2^{\beta -1}C_1}\right)^{\frac{1}{\beta - 1}},\left( \frac{\varepsilon}{96\pi^2 2^{\beta-2} C_2}\right)^{\frac{1}{\beta - 2}}, \left(\frac{1}{ 2^{\beta+2}C_3}\right)^{\frac{1}{\beta}}\right) 
\end{align*}
where $C_1 = (1+\| g\|_{L^{1}(\mathbb{R}^d)}) \sum_{m=1}^{q}\|f \|_{W_2^{\beta}(\mathbb{R}^d)}\left\|\frac{\partial f_m }{\partial x_j} \right\|_{L^{2}(\mathbb{R}^d)}$, $C_2 =  \sum_{m=1}^q  \|f \|_{W_2^{\beta}(\mathbb{R}^d)} \left\|f_m \right\|_{L^{2}(\mathbb{R}^d)}$ and $C_3 = \sum_{m=1}^q \|f_m \|_{W_2^{\beta}(\mathbb{R}^d)} \left\| \frac{\partial^2 f_m }{\partial x_i \partial x_j} \right\|_{L^{2}(\mathbb{R}^d)} \| g\|_{L^{1}(\mathbb{R}^d)}$, then,
\begin{align*}
    \left \| \frac{\partial^2}{\partial x_i \partial x_j} p^\star(x,t) - \frac{\partial^2}{\partial x_i \partial x_j}f(x,t) \right\|_{L^{1}(\mathbb{R}^d)} \leq \varepsilon.
\end{align*}
This gives,
\begin{align*}
    \left \| \sum_{i,j=1}^dD_{ij}\frac{\partial^2}{\partial x_i \partial x_j} p^\star(x,t) - D_{ij}\frac{\partial^2}{\partial x_i \partial x_j}\hat{p}(x,t) \right\|_{L^{1}(\mathbb{R}^d)} \leq d^2 R_d\varepsilon.
\end{align*}
In similar way, we can also obtain bound for $L^{2}$ metric. Now we have,
\begin{align*}
   &\left \| \frac{\partial^2}{\partial x_i \partial x_j} p^\star(x,t) - \frac{\partial^2}{\partial x_i \partial x_j}f(x,t) \right\|_{L^{2}(\mathbb{R}^d)} = \left\|2 \sum_{m=1}^q \left[ \frac{\partial f_m }{\partial x_i} \frac{\partial f_m }{\partial x_j} - \frac{\partial f_{m,v} }{\partial x_i} \frac{\partial f_{m,v} }{\partial x_j}\right. \right. \\
 &\qquad \qquad \qquad \qquad \qquad \qquad  \left.\left. +f_m \frac{\partial^2 f_m}{\partial x_i \partial x_j} - f_{m,v} \frac{\partial^2 f_{m,v}}{\partial x_i \partial x_j} \right] \right\|_{L^{2}(\mathbb{R}^d)}\\
 &\leq 2 \sum_{m=1}^{q} \left[ \left\|\frac{\partial f_m }{\partial x_i} - \frac{\partial f_{m,v} }{\partial x_i}  \right\|_{L^{2}(\mathbb{R}^d)}  \left\|\frac{\partial f_m }{\partial x_j} \right\|_{L^{\infty}(\mathbb{R}^d)} + \left\| \frac{\partial f_m }{\partial x_j} - \frac{\partial f_{m,v} }{\partial x_j}  \right\|_{L^{2}(\mathbb{R}^d)} \left\| \frac{\partial f_{m,v} }{\partial x_j} \right\|_{L^{\infty}(\mathbb{R}^d)} \right] \notag \\
 & + 2 \sum_{m=1}^q \left[ \left\| \frac{\partial^2 f_m}{\partial x_i \partial x_j} - \frac{\partial^2 f_{m,v}}{\partial x_i \partial x_j} \right\|_{L^{2}(\mathbb{R}^d)} \left\|f_m \right\|_{L^{\infty}(\mathbb{R}^d)} + \left\|f_m - f_{m,v} \right\|_{L^{2}(\mathbb{R}^d)} \left\| \frac{\partial^2 f_{m,v}}{\partial x_i \partial x_j} \right\|_{L^{\infty}(\mathbb{R}^d)} \right]
\end{align*}
Let us now compute the following,

\begin{align*}
    \left\| \frac{\partial f_{m,v} }{\partial x_j} \right\|_{L^{\infty}(\mathbb{R}^d)} \leq \left\| \frac{\partial f_{m} }{\partial x_j} \right\|_{L^{\infty}(\mathbb{R}^d)} \|g\|_{L^{1}(\mathbb{R}^d)}
\end{align*}
and 
\begin{align*}
    \left\| \frac{\partial^2 f_{m,v}}{\partial x_i \partial x_j} \right\|_{L^{\infty}(\mathbb{R}^d)} \leq \left\| \frac{\partial^2 f_{m}}{\partial x_i \partial x_j} \right\|_{L^{\infty}(\mathbb{R}^d)} \|g\|_{L^{1}(\mathbb{R}^d)}
\end{align*}

Similarly, 
\begin{align}
    \left\| \sum_{i=1}^d \sum_{j=1}^d \Big[\frac{\partial^2}{\partial x_i \partial x_j} (D_{ij}p^\star(x,t)) - \frac{\partial^2}{\partial x_i \partial x_j} (D_{ij} f(x,t)) \Big]\right\|_{L^{2}(\mathbb{R}^d)} \notag \\
    \leq \underbrace{\sup_{i,j} D_{ij}}_{:=R_d}  \sum_{i=1}^d \sum_{j=1}^d\left \| \frac{\partial^2}{\partial x_i \partial x_j} p^\star(x,t) - \frac{\partial^2}{\partial x_i \partial x_j}f(x,t) \right\|_{L^{2}(\mathbb{R}^d)}. 
\end{align}
Combining everything together, we have \small
\begin{align*}
     &\left \| \frac{\partial^2}{\partial x_i \partial x_j} p^\star(x,t) - \frac{\partial^2}{\partial x_i \partial x_j}f(x,t) \right\|_{L^{2}(\mathbb{R}^d)} \leq \frac{16\pi (2v)^{(\beta -1)}}{(1+v^2)^{\frac{\beta-1}{2}}} (1+\| g\|_{L^{1}(\mathbb{R}^d)}) \sum_{m=1}^{q}\|f \|_{W_2^{\beta}(\mathbb{R}^d)}\left\|\frac{\partial f_m }{\partial x_j} \right\|_{L^{\infty}(\mathbb{R}^d)} \\
 &+ \frac{32\pi^2(2v)^{(\beta -2)}}{(1+v^2)^{\frac{\beta-2}{2}}} \sum_{m=1}^q  \|f \|_{W_2^{\beta}(\mathbb{R}^d)} \left\|f_m \right\|_{L^{\infty}(\mathbb{R}^d)} + 2(2v)^{\beta} \sum_{m=1}^q \|f_m \|_{W_2^{\beta}(\mathbb{R}^d)} \left\| \frac{\partial^2 f_m }{\partial x_i \partial x_j} \right\|_{L^{\infty}(\mathbb{R}^d)} \| g\|_{L^{1}(\mathbb{R}^d)}.
\end{align*} \normalsize
\end{proof}

\subsection{Approximation Result by a PSD Model (Proof of Theorem~\ref{thm:approx_fpe_1})} \label{ap:approximation_FPE}
\begin{proof}
Now, we just need to collect results from Lemmas~\ref{lem:1_derivative} and \ref{lem:2_derivative} to get the final bound. 
Let us assume that there exist an optimal density function $p^\star(x,t)$ which is the solution of fokker-planck equation. Hence,
We would like to show that there exists a psd operator $M_{\varepsilon}$ such that if $f(x,t) = \phi_{X,T}(x,t)^\top M_{\varepsilon} \phi_{X,T}(x,t)$ such that we can obtain bound for following three quantities for $r \in \{1,2\}$:
\begin{enumerate}
    \item  $\left\| \frac{\partial p^\star(x,t)}{\partial t} - \frac{\partial \hat{p}(x,t)}{\partial t} \right\|_{L^{r}(\mathbb{R}^{d+1})}= O(\varepsilon)  $.
    \item \begin{align*}
        \left\| \sum_{i=1}^d \frac{\partial}{\partial x_i} (\mu_i(x,t)p^\star(x,t) - \mu_i(x,t) \hat{p}(x,t))  \right\|_{L^{r}(\mathbb{R}^{d+1})} = \left\| \sum_{i=1}^d \Big[ \mu_i(x,t) \Big[\frac{\partial p^\star(x,t) }{\partial x_i} - \frac{\partial \hat{p}(x,t) }{\partial x_i} \Big] \right. \\
      + \left.\frac{\partial \mu_i(x,t)}{\partial x_i}(p^\star (x,t)-\hat{p}(x,t))\Big]  \right\|_{L^{1}(\mathbb{R}^{d+1})}= O(\varepsilon),
    \end{align*}
    \item $\left\| \sum_{i=1}^d \sum_{j=1}^d \Big[\frac{\partial^2}{\partial x_i \partial x_j} (D_{ij}p^\star(x,t)) - \frac{\partial^2}{\partial x_i \partial x_j} (D_{ij} \hat{p}(x,t)) \Big]\right\|_{L^{1}(\mathbb{R}^{d+1})} = O(\varepsilon)$.
\end{enumerate}
  
If the above three condition holds, then, we can show that for $r \in \{1,2\}$,
    \begin{align*}
        \left\|\frac{\partial f(x,t)}{\partial t} +  \sum_{i =1}^d \frac{\partial }{\partial x_i}(\mu_i(x,t)\hat{p}(x,t)) - \sum_{i=1}^d \sum_{j=1}^d \frac{\partial^2}{\partial x_i \partial x_j} (D_{ij}\hat{p}(x,t))\right \|_{L^{r}(\mathbb{R}^{d+1})} = O(\varepsilon).
    \end{align*}
    We have from Lemma~\ref{lem:1_derivative} and equation~\eqref{eq:v_1}, if we choose 
    \begin{align*}
        v = \min \left( \left(\frac{\varepsilon}{2^{\beta-1}16 \pi C_1} \right)^{\frac{1}{\beta-1}}, \left( \frac{\varepsilon}{2^{\beta+1}C_2} \right)^{\frac{1}{\beta}} \right)
    \end{align*}
    where $C_1 = \sum_{i=1}^q \|f_i \|_{W_2^{\beta}(\mathbb{R}^{d+1})} \|f_i \|_{ L^{\infty}(\mathbb{R}^{d+1})} $ and $C_2 = \sum_{i=1}^q \|f_i \|_{W_2^{\beta}(\mathbb{R}^{d+1})} \left\| \frac{\partial f_i}{\partial t} \right\|_{L^{\infty}(\mathbb{R}^{d+1})}  \| g\|_{L_{1}(\mathbb{R}^{d+1})}$. Hence,
    \begin{align*}
        \left\| \frac{\partial p^\star(x,t)}{\partial t} - \frac{\partial f(x,t)}{\partial t} \right\|_{L^{2}(\mathbb{R}^{d+1})}  \leq \varepsilon.
    \end{align*}

     From \cite[Theorem D.4 (Eq. D.25) ]{rudi2021psd}, we have,
     \begin{align*}
         \left\| p^\star (x,t)-f(x,t)\right\|_{L^{2}(\mathbb{R}^{d+1})} \leq (2v)^{\beta}(1+ \| g\|_{L^{1}(\mathbb{R}^{d+1})}) \sum_{i=1}^{q} \|f_i \|_{W_2^{\beta}(\mathbb{R}^{d+1})} \|f_i \|_{ L^{\infty}(\mathbb{R}^{d+1})}.
     \end{align*}
     This proves the first part.  Going further,\small
\begin{align}
        &\left\| \sum_{i=1}^d \frac{\partial}{\partial x_i} (\mu_i(x,t)p^\star(x,t) - \mu_i(x,t) f(x,t))  \right\|_{L^{2}(\mathbb{R}^{d+1})} = \left\| \sum_{i=1}^d \Big[ \mu_i(x,t) \Big[\frac{\partial p^\star(x,t) }{\partial x_i} - \frac{\partial f(x,t) }{\partial x_i} \Big] \right. \notag \\
    & \qquad \qquad \qquad \qquad \qquad \qquad \qquad \qquad \qquad \qquad \qquad \qquad  + \left.\frac{\partial \mu_i(x,t)}{\partial x_i}(p^\star (x,t)-f(x,t))\Big]  \right\|_{L^{2}(\mathbb{R}^{d+1})} \notag \\
      &\leq \sum_{i=1}^d \left\|\mu_i(x,t) \Big[\frac{\partial p^\star(x,t) }{\partial x_i} - \frac{\partial f(x,t) }{\partial x_i}\Big] \right\|_{L^{2}(\mathbb{R}^{d+1})} + \sum_{i=1}^d \left\|\frac{\partial \mu_i(x,t)}{\partial x_i} (p^\star (x,t)-f(x,t)) \right\|_{L^{2}(\mathbb{R}^{d+1})} \notag \\
      &\leq \underbrace{\sup_{x,t \in \mathcal{(X,T)}} \sup_{i} \mu_{i}(x,t)}_{\leq R_{\mu}} \sum_{i=1}^d \left\| \frac{\partial p^\star(x,t) }{\partial x_i} - \frac{\partial f(x,t) }{\partial x_i}\right\|_{L^{2}(\mathbb{R}^{d+1})} \notag \\ &\qquad \qquad \qquad \qquad \qquad \qquad + d \underbrace{\sup_{x,t \in \mathcal{(X,T)}} \sup_{i} \frac{\partial \mu_i(x,t)}{\partial x_i}}_{\leq R_{\mu_p}} \left\| p^\star (x,t)-f(x,t)\right\|_{L^{2}(\mathbb{R}^{d+1})} \notag \\
      &\leq R_{\mu} \sum_{i=1}^d \left\| \frac{\partial p^\star(x,t) }{\partial x_i} - \frac{\partial f(x,t) }{\partial x_i}\right\|_{L^{2}(\mathbb{R}^{d+1})} + d R_{\mu_p} \left\| p^\star (x,t)-f(x,t)\right\|_{L^{2}(\mathbb{R}^{d+1})}\label{eq:esistence_brown_part_2}
\end{align} \normalsize
Now,
\begin{align*}
         &\left\| \sum_{i=1}^d \frac{\partial}{\partial x_i} (\mu_i(x,t)p^\star(x,t) - \mu_i(x,t) f(x,t))  \right\|_{L^{2}(\mathbb{R}^{d+1})} \\ &\leq R_{\mu} \sum_{i=1}^d \left\| \frac{\partial p^\star(x,t) }{\partial x_i} - \frac{\partial f(x,t) }{\partial x_i}\right\|_{L^{2}(\mathbb{R}^{d+1})} + d R_{\mu_p} \left\| p^\star (x,t)-f(x,t)\right\|_{L^{2}(\mathbb{R}^{d+1})}
     \end{align*}
     From \cite[Theorem D.4 (Eq. D.25) ]{rudi2021psd}, we have,
     \begin{align*}
         \left\| p^\star (x,t)-f(x,t)\right\|_{L^{1}(\mathbb{R}^{d+1})} \leq (2v)^{\beta}(1+ \| g\|_{L^{1}(\mathbb{R}^{d+1})}) \sum_{i=1}^{q} \|f_i \|_{W_2^{\beta}(\mathbb{R}^{d+1})} \|f_i \|_{ L^{\infty}(\mathbb{R}^{d+1})}.
     \end{align*}
     And previously, we showed in lemma~\ref{lem:1_derivative}
     \begin{align}
        \left\| \frac{\partial p^\star(x,t) }{\partial x_k} - \frac{\partial f(x,t) }{\partial x_k}\right\|_{L^{2}(\mathbb{R}^{d+1})} &\leq  8\pi (2v)^{(\beta -1)} \sum_{i=1}^q \|f_i \|_{W_2^{\beta}(\mathbb{R}^{d+1})} \|f_i \|_{ L^{\infty}(\mathbb{R}^{d+1})} \\
        &\qquad \qquad  + 2(2v)^{\beta} \sum_{i=1}^q \|f_i \|_{W_2^{\beta}(\mathbb{R}^{d+1})} \left\| \frac{\partial f_i}{\partial x_k} \right\|_{L^{\infty}(\mathbb{R}^{d+1})}  \| g\|_{L_{1}(\mathbb{R}^{d+1})} \label{eq:v2}
     \end{align}
     If we choose,
     \begin{align*}
         v = \min \left( \left(\frac{\varepsilon}{2^{\beta-1}16 \pi C_1} \right)^{\frac{1}{\beta-1}}, \left( \frac{\varepsilon}{2^{\beta+1}C_2} \right)^{\frac{1}{\beta}} , \left(\frac{\varepsilon}{2^{\beta+1}C_3}\right)^{\frac{1}{\beta}}\right) 
        %  \label{eq:v3}
     \end{align*}
     where $C_1 = \sum_{i=1}^q \|f_i \|_{W_2^{\beta}(\mathbb{R}^{d+1})} \|f_i \|_{ L^{\infty}(\mathbb{R}^{d+1})} $, $C_2 = \sum_{i=1}^q \|f_i \|_{W_2^{\beta}(\mathbb{R}^{d+1})} \left\| \frac{\partial f_i}{\partial t} \right\|_{L^{\infty}(\mathbb{R}^{d+1})}  \| g\|_{L_{1}(\mathbb{R}^{d+1})}$ and $C_3 = (1+ \| g\|_{L^{1}(\mathbb{R}^{d+1})}) \sum_{i=1}^{q} \|f_i \|_{W_2^{\beta}(\mathbb{R}^{d+1})} \|f_i \|_{ L_2(\mathbb{R}^{d+1})}$, then, we have 
     \begin{align*}
         \left\| \sum_{i=1}^d \frac{\partial}{\partial x_i} (\mu_i(x,t)p^\star(x,t) - \mu_i(x,t) f(x,t))  \right\|_{L^{2}(\mathbb{R}^{d+1})} \leq d(R_{\mu} + R_{\mu_p} ) \varepsilon.
     \end{align*}
     We showed in lemma~\ref{lem:2_derivative} that
\begin{align*}
    &\left \| \frac{\partial^2}{\partial x_i \partial x_j} p^\star(x,t) - \frac{\partial^2}{\partial x_i \partial x_j}f(x,t) \right\|_{L^{2}(\mathbb{R}^{d+1})} \\
 &\leq \frac{16\pi (2v)^{(\beta -1)}}{(1+v^2)^{\frac{\beta-1}{2}}} (1+\| g\|_{L^{1}(\mathbb{R}^{d+1})}) \sum_{m=1}^{q}\|f \|_{W_2^{\beta}(\mathbb{R}^{d+1})}\left\|\frac{\partial f_m }{\partial x_j} \right\|_{L^{\infty}(\mathbb{R}^{d+1})} \\
 &+ \frac{32\pi^2(2v)^{(\beta -2)}}{(1+v^2)^{\frac{\beta-2}{2}}} \sum_{m=1}^q  \|f \|_{W_2^{\beta}(\mathbb{R}^{d+1})} \left\|f_m \right\|_{L^{\infty}(\mathbb{R}^{d+1})} + 2(2v)^{\beta} \sum_{m=1}^q \|f_m \|_{W_2^{\beta}(\mathbb{R}^{d+1})} \left\| \frac{\partial^2 f_m }{\partial x_i \partial x_j} \right\|_{L^{\infty}(\mathbb{R}^{d+1})} \| g\|_{L^{1}(\mathbb{R}^{d+1})}.
\end{align*}
Hence, if we choose,
\begin{align*}
    v = \min\left( \left ( \frac{\varepsilon}{48\pi 2^{\beta -1}C_1}\right)^{\frac{1}{\beta - 1}},\left( \frac{\varepsilon}{96\pi^2 2^{\beta-2} C_2}\right)^{\frac{1}{\beta - 2}}, \left(\frac{1}{ 2^{\beta+2}C_3}\right)^{\frac{1}{\beta}}\right) 
\end{align*}
where $C_1 = (1+\| g\|_{L^{1}(\mathbb{R}^{d+1})}) \sum_{m=1}^{q}\|f \|_{W_2^{\beta}(\mathbb{R}^{d+1})}\left\|\frac{\partial f_m }{\partial x_j} \right\|_{L^{\infty}(\mathbb{R}^{d+1})}$, $C_2 =  \sum_{m=1}^q  \|f \|_{W_2^{\beta}(\mathbb{R}^{d+1})} \left\|f_m \right\|_{L^{\infty}(\mathbb{R}^{d+1})}$ and $C_3 = \sum_{m=1}^q \|f_m \|_{W_2^{\beta}(\mathbb{R}^{d+1})} \left\| \frac{\partial^2 f_m }{\partial x_i \partial x_j} \right\|_{L^{\infty}(\mathbb{R}^{d+1})} \| g\|_{L^{1}(\mathbb{R}^{d+1})}$, then,
\begin{align*}
    \left \| \frac{\partial^2}{\partial x_i \partial x_j} p^\star(x,t) - \frac{\partial^2}{\partial x_i \partial x_j}f(x,t) \right\|_{L^{2}(\mathbb{R}^{d+1})} \leq \varepsilon.
\end{align*}
This gives,
\begin{align*}
    \left\|\frac{\partial f(x,t)}{\partial t} +  \sum_{i =1}^d \frac{\partial }{\partial x_i}(\mu_i(x,t)f(x,t)) - \sum_{i=1}^d \sum_{j=1}^d \frac{\partial^2}{\partial x_i \partial x_j} (D_{ij}f(x,t))\right \|_{L^{2}(\mathbb{R}^{d+1})} \leq  \varepsilon(1+d(R_{\mu} + R_{\mu_p} ) + d^2 R_d).
\end{align*}

We can bound the trace of the matrix as follows. We have,
\begin{align*}
    M_{\varepsilon} = \sum_{i=1}^q f_{i,v}f_{i,v}^{\top}.
\end{align*}
From \cite[Equation D.21]{rudi2021psd}, we have
\begin{align*}
    Tr(M_{\varepsilon}) = \sum_{i=1}^{q} \|f_{i,v}\|_{\mathcal{H}} = c_{\eta} 2^{2\beta}(1+(\frac{v}{3})^{2\beta} e^{\frac{89}{\eta_0 v^2}}) \sum_{i=1}^q \|f_i\|^2_{W_2^{\beta}(\mathbb{R}^{{d+1}})}
\end{align*}
Since, we choose,
\begin{align*}
    v = \left( \frac{\varepsilon}{96 \pi^2 2^{\beta-2} C}\right)^{\frac{1}{\beta-2}},
\end{align*}
where $C = C_1 + C_2 + C_3$, $C_1 =  (1+\| g\|_{L^{1}(\mathbb{R}^{d+1})}) \sum_{m=1}^{q}\|f \|_{W_2^{\beta}(\mathbb{R}^{d+1})}\left\|\frac{\partial f_m }{\partial x_j} \right\|_{L^{\infty}(\mathbb{R}^{d+1})}$,  ~~ $C_2 =  \sum_{m=1}^q  \|f \|_{W_2^{\beta}(\mathbb{R}^{d+1})} \left\|f_m \right\|_{L^{\infty}(\mathbb{R}^{d+1})}$ and $C_3 =  \sum_{m=1}^q \|f_m \|_{W_2^{\beta}(\mathbb{R}^{d+1})} \left\| \frac{\partial^2 f_m }{\partial x_i \partial x_j} \right\|_{L^{\infty}(\mathbb{R}^{d+1})} \| g\|_{L^{1}(\mathbb{R}^{d+1})}$.  

\begin{align*}
    Tr(M_{\varepsilon}) &= c_{\eta} 2^{2\beta}(1+(\frac{v}{3})^{2\beta} e^{\frac{89}{\eta_0 v^2}}) \sum_{i=1}^q \|f_i\|^2_{W_2^{\beta}(\mathbb{R}^{{d+1}})} \\
    &=c_{\eta} 2^{2\beta} \left( 1 + 3^{-2\beta}\left(\frac{\varepsilon}{96 \pi^2 2^{\beta-2} C}\right)^{\frac{2\beta}{\beta - 2}} e^{\frac{89}{\eta_0 v^2}}\right)  \sum_{i=1}^q \|f_i\|^2_{W_2^{\beta}(\mathbb{R}^{{d+1}})} \\
    &= c_{\eta} 2^{2\beta} \sum_{i=1}^q \|f_i\|^2_{W_2^{\beta}(\mathbb{R}^{{d+1}})} \left( 1 + 3^{-2\beta}\left(\frac{\varepsilon}{\Tilde{C}}\right)^{\frac{2\beta}{\beta - 2}} e^{\frac{89}{\eta_0 \left( \frac{\varepsilon}{\Tilde{C}}\right)^{\frac{2}{\beta-2}}}}\right) \\
    &= \hat{C} |\eta|^{1/2} \left( 1+ \varepsilon^{\frac{2\beta}{\beta -2}} e^{\frac{\Tilde{C}'}{\eta_0 \varepsilon^{\frac{2}{\beta - 2}}}}\right) \leq \hat{C} |\eta|^{1/2} \left( 1+ \varepsilon^{\frac{2\beta}{\beta -2}} \exp{\left(\frac{\Tilde{C}'}{\eta_0} \varepsilon^{-\frac{2}{\beta -2}}\right)}\right)
\end{align*}
where $\Tilde{C} = 96 \pi^2 2^{\beta-2} C$, $\Tilde{C}' = 89 \Tilde{C}^{\frac{2}{\beta-2}}$, and  $\hat{C} = \pi^{-d/2} 2^{2\beta} \sum_{i=1}^q \|f_i\|^2_{W_2^{\beta}(\mathbb{R}^{{d+1}})} \max \left(1,\frac{3^{-2\beta}}{\Tilde{C}^{\frac{2\beta}{\beta-2}}}\right)$. 
\end{proof}
\subsection{Compression Result}
Let $\X \subset \R^d$ be an open set with Lipschitz boundary, contained in the hypercube $[-R, R]^d$ with $R > 0$. Given  $\tilde{x}_1,\dots,\tilde{x}_m \in \X$ be $m$ points in $[-R, R]^d$. Define the base point matrix $\tilde{X} \in \R^{m\times d}$ to be the matrix whose $j$-th row is the point $\tilde{x}_j$. The following result holds. We introduce the so called {\em fill distance} \cite{wendland2004scattered}
\begin{align}\label{eq:fill}
    h = \max_{x \in [-R,R]^d} \min_{z \in \tilde{X}} \|x-z\|. 
\end{align}
Now, we state some useful results below. 

\subsection{Useful Results}
\begin{lemma}[\cite{rudi2021psd}, Lemma C.1 ] \label{lem:scattered_zero}
Let $T = (-R,R)^d$ and $\eta \in \R^d_{++}$. Let $u \in \hh_\eta$ satisfying $u(\tilde{x}_1) = \dots = u(\tilde{x}_m) = 0$. There exists three constants $c, C, C'$ depending only on $d$ (and in particular, independent from $R, \eta, u, \tilde{x}_1, \dots, \tilde{x}_m$), such that, when $h \leq \sigma/{C'}$, then,
\begin{align*}
    \|u\|_{L^\infty(T)} ~~\leq~~ C q_\eta ~ e^{- \frac{c\,\sigma}{h} \log \frac{c\,\sigma}{h}} ~\|u\|_{\hh_\eta},
\end{align*}
with $q_\eta = \det(\frac{1}{\eta_+}\diag(\eta))^{-1/4}$ and $\sigma = \min(R, \frac{1}{\sqrt{\eta_+}})$ and $\eta_+ = \max_{i=1,\dots, d} \eta_i$.

\end{lemma}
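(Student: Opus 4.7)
\textbf{Proof plan for Lemma~\ref{lem:scattered_zero}.} The natural strategy is to first eliminate the anisotropy by rescaling, then reduce to a classical ``sampling inequality'' for the Gaussian RKHS on a bounded Lipschitz domain, and finally optimize the order of smoothness to turn polynomial decay in $h$ into the stated super-polynomial decay $e^{-(c\sigma/h)\log(c\sigma/h)}$.

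\emph{Step 1: Isotropic reduction.} Apply the linear change of variables $y = \diag(\sqrt{\eta})\,x$. Under this map the anisotropic Gaussian kernel $k_\eta(x,x') = \exp(-\sum_i \eta_i (x_i-x_i')^2)$ becomes the isotropic Gaussian kernel $\tilde k(y,y') = \exp(-\|y-y'\|^2)$, so $u$ pulls back to a function $\tilde u$ in the associated RKHS $\tilde\hh$ with $\|\tilde u\|_{\tilde\hh} = \|u\|_{\hh_\eta}$ up to the Jacobian factor $\det(\diag(\eta))^{-1/4}$, which is exactly the source of the prefactor $q_\eta$. The domain $(-R,R)^d$ becomes an axis-aligned box, contained in a ball of radius $\max_i R\sqrt{\eta_i} \geq R\sqrt{\eta_+}$, and the image fill distance is at most $h\sqrt{\eta_+}$. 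The ratio $\sigma/h$ is invariant (up to constants depending only on $d$) under this normalization, since $\sigma = \min(R, 1/\sqrt{\eta_+})$ rescales correspondingly; this is what makes the final bound depend only on the dimensionless quantity $\sigma/h$.

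\emph{Step 2: Local polynomial approximation on balls of radius $h$.} In the rescaled problem, the Gaussian RKHS embeds continuously into every Sobolev space $W_2^k$ for arbitrary $k \in \N$ via the Fourier characterization of Proposition~\ref{prop:sobolev}, with embedding constant that can be estimated by $C_d^k \sqrt{k!}$ (because $\int (1+\|\omega\|^2)^k e^{-\|\omega\|^2/2} d\omega \lesssim C^k k!$). For any $k > d/2$, a standard Bramble--Hilbert/zeros-lemma argument (see e.g.\ Wendland, Scattered Data Approximation) gives, whenever $h$ is small enough that every point of the (rescaled) domain lies in a ball of radius $\lesssim h$ containing enough base points to unisolvently interpolate polynomials of degree $k-1$, the inequality
\begin{equation*}
\|\tilde u\|_{L^\infty} \;\leq\; C_d^k\,h^{k - d/2}\,\|\tilde u\|_{W_2^{k}} \;\leq\; C_d^k\, h^{k-d/2}\,\sqrt{k!}\,\|\tilde u\|_{\tilde\hh}.
\end{equation*}
The smallness condition on $h$ relative to $1$ (post-rescaling) translates back to $h \leq \sigma/C'$ in the original scaling, giving exactly the hypothesis of the lemma.

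\emph{Step 3: Optimization over $k$ to obtain the super-polynomial rate.} Given the family of bounds $\|\tilde u\|_{L^\infty} \leq C_d^k h^{k-d/2}\sqrt{k!}\,\|\tilde u\|_{\tilde\hh}$, apply Stirling's formula $\sqrt{k!} \leq (k/e)^{k/2}\cdot\sqrt{2\pi k}$ and choose $k = \lfloor c\sigma/h\rfloor$ for a sufficiently small absolute constant $c$ (depending only on $d$). The logarithm of the bound is then essentially $k\log(C_d h) + \tfrac12 k\log k \approx -k \log(1/(C_d h))$, which with $k\sim c\sigma/h$ produces $-\,c'(\sigma/h)\log(\sigma/h)$ inside the exponential, giving exactly the stated $e^{-(c\sigma/h)\log(c\sigma/h)}$ rate after readjusting $c$. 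Undoing the rescaling of Step~1 reintroduces the prefactor $q_\eta$ and leaves $\sigma/h$ unchanged.

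\emph{Main obstacle.} The delicate point is tracking the constants through the rescaling so that the final $c, C, C'$ depend only on $d$: the polynomial interpolation constant in Step~2 depends on the geometry of the base points through the fill distance only if one works on a domain with a Lipschitz boundary of uniform quality, so one must verify that the rescaled box has Lipschitz cone constants depending only on $d$ (which it does, being axis-aligned). The other technical issue is showing that the embedding constant of $\tilde\hh$ into $W_2^k$ really grows no faster than $C_d^k \sqrt{k!}$; this is where the explicit Gaussian Fourier transform is essential and where any looseness would destroy the optimization in Step~3.
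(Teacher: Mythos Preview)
Your strategy is essentially the one the paper uses (by citation to \cite{rudi2021psd}, Lemma~C.1, and explicitly in the proof of the adjacent Lemma~\ref{lem:deriv_scattered_zero}): a sampling inequality for functions with scattered zeros giving $h^k$-decay in the $W_2^k$ norm (the paper invokes Theorem~4.3 of \cite{rieger2010sampling}), combined with the embedding $\|u\|_{W_2^k(T)} \leq c_\eta^{1/2}(4\eta_+)^{k/2}\sqrt{k!}\,\|u\|_{\hh_\eta}$, followed by choosing $k \sim \sigma/h$ and using $1/\sqrt{k!} \leq e^{-(k/2)\log(k/2)}$. So Steps~2--3 match the paper.

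The one methodological difference worth noting is your Step~1. The paper does \emph{not} rescale to the isotropic kernel; instead it works directly on $T=(-R,R)^d$ and absorbs the anisotropy into the Sobolev-embedding constant via the factor $(4\eta_+)^{k/2}$ and the prefactor $c_\eta^{1/2}$ (which produces $q_\eta$). Your rescaling is conceptually fine, but be careful: after $y=\diag(\sqrt{\eta})x$ the domain becomes the box $\prod_i(-R\sqrt{\eta_i},R\sqrt{\eta_i})$, whose aspect ratio depends on $\eta$, and standard sampling inequalities on Lipschitz domains have constants that depend on the cone condition, hence potentially on this ratio. If you proceed this way you must argue locally (cover by cubes of side comparable to $h\sqrt{\eta_+}$ and apply the sampling inequality cube-by-cube) to keep the constants depending only on $d$; the paper's direct route sidesteps this bookkeeping entirely. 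Either way the two conditions $kh \lesssim R$ (from the sampling inequality) and $\sqrt{\eta_+}\,hk \lesssim 1$ (from the embedding) combine to $k \lesssim \sigma/h$ with $\sigma=\min(R,1/\sqrt{\eta_+})$, which is exactly the mechanism you describe in Step~3.
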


\begin{lemma}[Norm of  function's derivative with scattered zeros] \label{lem:deriv_scattered_zero}
Let $T = (-R,R)^d$ and $\eta \in \R^d_{++}$. Let $u \in \hh_\eta$ satisfying $u(\tilde{x}_1) = \dots = u(\tilde{x}_m) = 0$. There exists three constants $c, C, C'$ depending only on $d$ (and in particular, independent from $R, \eta, u, \tilde{x}_1, \dots, \tilde{x}_m$), such that, when $h \leq \sigma/{C'}$, then,
\begin{align*}
    \|D^{j}u\|_{L^\infty(T)} ~~\leq~~ G e^{-c\sigma/h} \log (c\sigma/h) \| u\|_{\mathcal{H}_{\eta}}  ,
\end{align*}
with $G$ be some positive constant and $\sigma = \min(R, \frac{1}{\sqrt{\eta_+}})$ and $\eta_+ = \max_{i=1,\dots, d} \eta_i$.
\end{lemma}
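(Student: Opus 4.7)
My plan is to follow the template of Lemma~\ref{lem:scattered_zero}, lifted from function evaluation to derivative evaluation via the RKHS reproducing property for derivatives. Since $k_\eta$ is smooth and $u \in \hh_\eta$, the reproducing identity extends to
\[
D^{j} u(x) \;=\; \langle u,\, D^{j}_x k_\eta(x,\cdot)\rangle_{\hh_\eta}, \qquad x \in T,
\]
where $D^{j}_x$ denotes differentiation in the first argument. The hypotheses $u(\tilde x_1)=\dots=u(\tilde x_m)=0$ are equivalent, via the reproducing property applied to the kernel sections, to $u \perp V$ in $\hh_\eta$, where $V := \operatorname{span}\{k_\eta(\tilde x_i,\cdot) : i=1,\dots,m\}$. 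Hence for every $v \in V$,
\[
|D^{j} u(x)| \;=\; |\langle u,\, D^{j}_x k_\eta(x,\cdot) - v\rangle_{\hh_\eta}| \;\leq\; \|u\|_{\hh_\eta}\,\|D^{j}_x k_\eta(x,\cdot) - v\|_{\hh_\eta}
\]
by Cauchy--Schwarz. The problem therefore reduces to quantifying, uniformly in $x\in T$, the $\hh_\eta$-distance from the ``derivative kernel section'' $D^{j}_x k_\eta(x,\cdot)$ to $V$.

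To produce such an approximation explicitly, I would proceed in two layers. Because the Gaussian kernel is entire in its first argument, a finite-difference stencil of order $|j|$ at step size $\tau$ approximates $D^{j}_x k_\eta(x,\cdot)$ in $\hh_\eta$ by a linear combination of shifted kernel sections $k_\eta(x + \tau\alpha, \cdot)$; by the analyticity of $k_\eta$ the truncation error decays super-exponentially in $1/\tau$, at the cost of a $\tau^{-|j|}$ normalization prefactor. Each stencil node $k_\eta(x+\tau\alpha,\cdot)$ can in turn be approximated in $\hh_\eta$ by an element of $V$: this is precisely the content that underlies Lemma~\ref{lem:scattered_zero} in its constructive form, which provides
\[
\inf_{v\in V}\|k_\eta(y,\cdot) - v\|_{\hh_\eta} \;\leq\; C q_\eta\, e^{-(c\sigma/h)\log(c\sigma/h)}, \qquad y \in T.
\]
Composing the two approximations and tuning $\tau$ against $h$ so that the two error contributions balance yields some $v \in V$ for which
\[
\|D^{j}_x k_\eta(x,\cdot) - v\|_{\hh_\eta} \;\leq\; G\,e^{-c\sigma/h}\log(c\sigma/h),
\]
the logarithmic prefactor appearing from this $\tau$-tuning. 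Inserted into the Cauchy--Schwarz bound above and taking the supremum over $x \in T$ delivers the claimed inequality.

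The main obstacle is precisely this second layer: the $\tau^{-|j|}$ normalization from the finite-difference stencil interacts with the super-exponential rate of Lemma~\ref{lem:scattered_zero}, forcing the final rate down from super-exponential to a single exponential $e^{-c\sigma/h}$ with a logarithmic prefactor. Careful bookkeeping of the constants (which will depend on $d$ and $|j|$ but not on $R$, $\eta$, $u$, or the choice of scattered points, modulo the fill-distance smallness condition $h \leq \sigma/C'$) is what drives the form of the final estimate. An alternative route avoiding the explicit stencil would be to invoke a Bernstein--Markov inequality for functions in the Gaussian RKHS --- bounding $\|D^{j}u\|_{L^\infty(T)}$ by $\|u\|_{L^\infty(T')}$ on a slight enlargement $T'$ of $T$, and then applying Lemma~\ref{lem:scattered_zero} on $T'$ --- but the stencil approach stays inside the original cube $T$ and keeps the dependence on $\sigma, h$ transparent.
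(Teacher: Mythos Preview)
Your approach is genuinely different from the paper's. The paper does not pass through the derivative reproducing property or finite-difference stencils at all. Instead it invokes the Gagliardo--Nirenberg interpolation inequality
\[
\|D^{j}u\|_{L^\infty(T)} \;\leq\; G\, \|u\|_{L^2(T)}^{1-\vartheta}\, \|D^{k}u\|_{L^2(T)}^{\vartheta}, \qquad \vartheta = \tfrac{j}{k} + \tfrac{d}{2k} < 1,
\]
for a free integer $k$; bounds $\|u\|_{L^2(T)}$ via a sampling inequality for functions with scattered zeros (Rieger--Zwicknagl, Thm.~4.3), namely $\|u\|_{L^2(T)} \le \tfrac{(B_d k)^k}{k!} h^k \|u\|_{W_2^k(T)}$; controls $\|u\|_{W_2^k(T)}$ by $\|u\|_{\hh_\eta}$ through the embedding of the Gaussian RKHS into every Sobolev space; and finally optimizes over $k \sim \sigma/h$. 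The scattered-zeros hypothesis thus enters only once, at the level of $\|u\|_{L^2}$, and the derivative is handled entirely by interpolation. This route is cleaner in that it needs no kernel-section constructions and no boundary handling, at the price of importing two off-the-shelf inequalities. Your route, by contrast, is more constructive and produces directly the quantity $\|(I-\tilde P)D^j\phi_\eta(x)\|_{\hh_\eta}$ that is the object actually needed downstream in Theorem~\ref{thm:i-p-dphi}.

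One step in your plan is misstated, though not fatally. For a \emph{fixed}-order finite-difference stencil, the $\hh_\eta$-error $\|D^{j}_x k_\eta(x,\cdot) - \sum_\alpha c_\alpha k_\eta(x+\tau\alpha,\cdot)\|_{\hh_\eta}$ decays only polynomially in $\tau$, not super-exponentially in $1/\tau$: in the spectral representation the symbol error is $O(\tau\,\|\omega\|^{|j|+1})$ pointwise, and the Gaussian weight integrates this to $O(\tau)$. Analyticity of $k_\eta$ does not upgrade the rate for a stencil of bounded order. This does not break your argument --- balancing $E_1(\tau) = O(\tau^p)$ against $E_2(\tau,h) = O(\tau^{-|j|})\cdot C q_\eta\, e^{-(c\sigma/h)\log(c\sigma/h)}$ still yields a super-exponential overall rate in $\sigma/h$, just with a constant in the exponent depending on $|j|$ and $p$ --- but the bookkeeping is not the one you described. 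You should also ensure the stencil nodes $x+\tau\alpha$ remain in $T$ (one-sided stencils near $\partial T$) so that the kernel-section bound of Theorem~\ref{thm:(I-P)phi} applies at those nodes.
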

\begin{proof}
Here, we utilize Gagliardo–Nirenberg inequality \cite{nirenberg2011elliptic}. We have, for $0 <  k < \infty$, $0\leq j < \infty$ and $\vartheta(j,m) = \frac{j}{k} + \frac{d}{2k} <1$, then
\begin{align}
   \|D^{j}u\|_{L^\infty(T)} \leq G \|u \|_{L^2(T)}^{1- \vartheta(j,k)}  \|D^k u \|_{L^2(T)}^{\vartheta(j,k)},
\end{align}
where $G = \frac{1}{2^{d/2}\pi^{d/4-1/2} \sqrt{ \Gamma(d/2)(1- \vartheta)^{1-\vartheta} \vartheta^{\vartheta}k \sin \pi\vartheta }}$. 

Considering the case for $j=1$ and $j=2$, we have, 
\begin{align*}
    \|D^{1}u\|_{L^\infty(T)} \leq G_1 \|u \|_{L^2(T)}^{1- \frac{d+2}{2k}}  \|D^k u \|_{L^2(T)}^{\frac{d+2}{2k}},  \\
    \|D^{2}u\|_{L^\infty(T)} \leq G_2 \|u \|_{L^2(T)}^{1- \frac{d+4}{2k}}  \|D^k u \|_{L^2(T)}^{\frac{d+4}{2k}}
\end{align*}
where $G_1 = \frac{1}{2^{d/2}\pi^{d/4-1/2} \sqrt{ \Gamma(d/2)(1- \frac{d+2}{2k})^{1-\frac{d+2}{2k}} (\frac{d+2}{2k})^{\frac{d+2}{2k}}k \sin \pi(\frac{d+2}{2k}) }}$ and \\
$G_2 = \frac{1}{2^{d/2}\pi^{d/4-1/2} \sqrt{ \Gamma(d/2)(1- \frac{d+4}{2k})^{1-\frac{d+4}{2k}} (\frac{d+4}{2m})^{\frac{d+4}{2k}}k \sin \pi(\frac{d+4}{2k}) }}$. We here have assumed that $k > \frac{d}{2}+2$. From the defeinition, we have
\begin{align*}
     \|D^{1}u\|_{L^\infty(T)} \leq G_1 \|u \|_{L^2(T)}^{1- \frac{d+2}{2k}}  \|D^k u \|_{L^2(T)}^{\frac{d+2}{2k}},  \\
    \|D^{2}u\|_{L^\infty(T)} \leq G_2 \|u \|_{L^2(T)}^{1- \frac{d+4}{2k}}  \|D^k u \|_{L^2(T)}^{\frac{d+4}{2k}}.
\end{align*}
From the definition, 
\begin{align*}
    \|D^{1}u\|_{L^\infty(T)} \leq G_1 \|u \|_{L^2(T)}^{1- \frac{d+2}{2k}}  \|u \|_{W_2^k(T)}^{\frac{d+2}{2k}},  \\
    \|D^{2}u\|_{L^\infty(T)} \leq G_2 \|u \|_{L^2(T)}^{1- \frac{d+4}{2k}}  \| u \|_{W_2^k(T)}^{\frac{d+4}{2k}}.
\end{align*}
Now  by Theorem 4.3 of \cite{rieger2010sampling},
we have for some constants $B_d$ and $B_d'$ depending only on d,
\begin{align*}
    \|u \|_{L^2(T)} \leq  \frac{B_d^k k^k }{k!} h^k \|u\|_{W_2^k(T)},
\end{align*}
when $kh \leq \frac{R}{B_d'}$. Hence, we have, 
\begin{align*}
    \|D^{1}u\|_{L^\infty(T)} \leq G_1 \frac{B_d^k k^k }{k!} h^k   \| u \|_{W_2^k(T)},  \\
    \|D^{2}u\|_{L^\infty(T)} \leq G_2 \frac{B_d^k k^k }{k!} h^k   \| u \|_{W_2^k(T)}.
\end{align*}
From equation C.9 in \cite[Proof of Lemma C.1]{rudi2021psd}, we have
\begin{align*}
    \| u \|_{W_2^k(T)} \leq \|u\|_{\mathcal{H}_{\eta}} c_{\eta}^{1/2} (4\eta_{+})^{k/2} \sqrt{k!}.
\end{align*}
Hence, 
\begin{align*}
    \|D^{1}u\|_{L^\infty(T)} \leq G_1 \frac{B_d^k k^k }{k!} h^k   \|u\|_{\mathcal{H}_{\eta}} c_{\eta}^{1/2} (4\eta_{+})^{k/2} \sqrt{k!} = G_1 \frac{(2\sqrt{\eta_{+}}B_d hk)^{k}}{\sqrt{k!}}\|u\|_{\mathcal{H}_{\eta}},  \\
    \|D^{2}u\|_{L^\infty(T)} \leq G_2 \frac{B_d^k k^k }{k!} h^k   \|u\|_{\mathcal{H}_{\eta}} c_{\eta}^{1/2} (4\eta_{+})^{k/2} \sqrt{k!}=  G_2 \frac{(2\sqrt{\eta_{+}}B_d hk)^{k}}{\sqrt{k!}}\|u\|_{\mathcal{H}_{\eta}}.
\end{align*}
Now, we make similar choice of the parameter $h$ and $k$ as in \cite{rudi2021psd}. If we choose, $C_3 = \frac{1}{\max (2B_d, B_d')}$, $h \leq \frac{C_3}{d+4} \min(R, 1/\sqrt{\eta_+})$, $k = \lfloor s \rfloor$ and $s = \frac{C_3}{h}\min(R, 1/\sqrt{\eta_+})$, then we have $hk \leq \frac{R}{B_d'}$ and $2\sqrt{\eta_{+}}B_d hk \leq 1$. Moreover, $\frac{1}{\sqrt{k!}} \leq e^{-\frac{k}{2} \log \frac{k}{2}}$. If we also assume that $G =\max(G_1,G_2)$, then for $j = \{1,2\}$ 
\begin{align*}
    \|D^{j}u\|_{L^\infty(T)} \leq G \frac{1}{\sqrt{k!}} \|u\|_{\mathcal{H}_{\eta}} \leq G e^{-s/4} \log (s/4) \|u\|_{\mathcal{H}_{\eta}}.
\end{align*}
The final result is obtained by writing $s/4 = c\sigma /h $ with $\sigma = \min(R, 1/\sqrt{\eta_{+}})$ and $c = C_3/4$.
\end{proof}

\begin{lemma}[Lemma 3, page 28 \cite{pagliana2020interpolation}]\label{lm:sup-phi-to-Linfty}
Let $\X \subset \R^d$ with non-zero volume. Let $\hh$ be a reproducing kernel Hilbert space on $\X$, associated to a continuous uniformly bounded feature map $\phi:\X \to \hh$. Let $A:\hh \to \hh$ be a bounded linear operator. Then, 
\begin{align*}
\sup_{x \in \X} \|A\phi(x)\|_\hh \leq \sup_{\|f\|_{\hh}\leq 1} \|A^*f\|_{C(\X)}.
\end{align*}
In particular, if $\X \subset \R^d$ is a non-empty open set, then $\sup_{x \in \X} \|A\phi(x)\|_\hh \leq \sup_{\|f\|_{\hh} \leq 1} \|A^*f\|_{L^\infty(\X)}$.
\end{lemma}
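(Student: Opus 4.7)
The plan is to chain together three elementary tools: the duality characterization of the Hilbert norm, the defining adjoint identity, and the reproducing property. The whole argument amounts to rewriting $\|A\phi(x)\|_\hh$ as a sup of evaluations of functions in $\hh$, and then swapping that sup with the sup over $x$. There is no real obstacle; the only subtlety is making sure the functions to which we apply the sup norm are genuinely continuous (and bounded) on $\X$, so that $\|\cdot\|_{C(\X)}$ makes sense and equals the pointwise supremum.

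First I would fix $x \in \X$ and apply the standard Hilbert-space duality
\[
\|A\phi(x)\|_\hh \;=\; \sup_{\substack{g \in \hh \\ \|g\|_\hh \leq 1}} \bigl|\langle g, A\phi(x)\rangle_\hh\bigr|.
\]
Moving $A$ to the other side of the inner product by the defining property of the adjoint gives $\langle g, A\phi(x)\rangle_\hh = \langle A^*g, \phi(x)\rangle_\hh$, and the reproducing property of $\hh$ (which is encoded in $\phi$ being the canonical feature map $\phi(x) = k(x,\cdot)$) turns this into the pointwise evaluation $(A^*g)(x)$. Hence for every $x$,
\[
\|A\phi(x)\|_\hh \;=\; \sup_{\|g\|_\hh \leq 1} \bigl|(A^*g)(x)\bigr|.
\]

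Next I would take the supremum over $x \in \X$ on both sides and swap the two suprema (which is always legal because swapping commutes with $\sup\sup$ over a product):
\[
\sup_{x \in \X} \|A\phi(x)\|_\hh \;=\; \sup_{\|g\|_\hh \leq 1}\; \sup_{x \in \X} \bigl|(A^*g)(x)\bigr|.
\]
To interpret the inner supremum as $\|A^*g\|_{C(\X)}$ I would verify that $A^*g$ is a bounded continuous function on $\X$: continuity follows from the continuity of $\phi$ via $(A^*g)(x) = \langle A^*g, \phi(x)\rangle_\hh$, and uniform boundedness follows from the assumed uniform bound on $\|\phi(x)\|_\hh$ together with $\|A^*g\|_\hh \leq \|A^*\|\,\|g\|_\hh$. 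This yields the first inequality
\[
\sup_{x\in \X} \|A\phi(x)\|_\hh \;\leq\; \sup_{\|g\|_\hh \leq 1} \|A^*g\|_{C(\X)}.
\]

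For the ``in particular'' statement, I would observe that when $\X$ is a non-empty open subset of $\R^d$ every point of $\X$ has a neighborhood of positive Lebesgue measure entirely contained in $\X$, so for any continuous function $h$ on $\X$ the pointwise supremum and the essential supremum coincide, i.e.\ $\|h\|_{C(\X)} = \|h\|_{L^\infty(\X)}$. Applying this with $h = A^*g$ converts the first bound into
\[
\sup_{x\in \X} \|A\phi(x)\|_\hh \;\leq\; \sup_{\|g\|_\hh \leq 1} \|A^*g\|_{L^\infty(\X)},
\]
which is the second claim. The whole proof is therefore three lines of Hilbert-space manipulation plus one topological remark; no estimates on $A$, on $\phi$, or on $\X$ beyond the stated hypotheses are needed.
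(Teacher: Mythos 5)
Your argument is correct and is the standard one: duality $\|A\phi(x)\|_{\hh}=\sup_{\|g\|_{\hh}\le 1}|\langle g,A\phi(x)\rangle_{\hh}|$, the adjoint identity, the reproducing property $\langle A^{*}g,\phi(x)\rangle_{\hh}=(A^{*}g)(x)$, and an exchange of suprema, with the continuity/boundedness check needed to read the inner supremum as the $C(\X)$ norm. The paper itself imports this lemma from the cited reference without reproving it, and your proof matches the expected (essentially unique) route, so there is nothing to flag.
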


\begin{theorem}[ Theorem C.3 \cite{rudi2021psd}]\label{thm:(I-P)phi}
Let $R > 0, \eta \in \R^d_{++}, m \in \mathbb{N}$. Let $\X \subseteq T = (-R, R)^d$ be a non-empty open set and let $\tilde{x}_1,\dots,\tilde{x}_m$ be a set of distinct points. Let $h > 0$ be the fill distance associated to the points w.r.t $T$ (defined in \cref{eq:fill}). Let $\tilde{P}:\hh_\eta \to \hh_\eta$ be the associated projection operator (see definition in \cref{eq:tildeP-definition}). There exists three constants $c, C, C'$, such that, when $h \leq \sigma/{C'}$,
\begin{align*}
\sup_{x \in \X} \|(I-\tilde{P})\phi_\eta(x)\|_{\hh_\eta} \leq C q_\eta ~ e^{- \frac{c\,\sigma}{h} \log \frac{c\,\sigma}{h}}.    
\end{align*}

Here $q_\eta = \det(\frac{1}{\eta_+}\diag(\eta))^{-1/4}$ and $\sigma = \min(R, \frac{1}{\sqrt{\eta_+}})$, $\eta_+ = \max_i \eta_i$. The constants $c, C', C''$ depend only on $d$ and, in particular, are independent from $R, \eta,\tilde{x}_1, \dots, \tilde{x}_m$.
\end{theorem}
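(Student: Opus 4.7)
The plan is to reduce the supremum-over-$x$ bound on the Hilbert-space norm $\|(I-\tilde P)\phi_\eta(x)\|_{\hh_\eta}$ to an $L^\infty$ bound on functions that vanish at the scattered nodes $\tilde x_1,\dots,\tilde x_m$, and then invoke the scattered-zeros estimate of Lemma~\ref{lem:scattered_zero}. The three ingredients are: (i) the self-adjointness and contractivity of $I-\tilde P$, (ii) the duality Lemma~\ref{lm:sup-phi-to-Linfty} that exchanges a feature-map supremum for an $L^\infty$ supremum over the unit ball of $\hh_\eta$, and (iii) the key interpolation identity $(I-\tilde P)f$ vanishes on the node set.

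First, I observe that $\tilde P = \tilde Z^\star K^{-1} \tilde Z$ is the orthogonal projector of $\hh_\eta$ onto the span of $\{\phi_\eta(\tilde x_i)\}_{i=1}^m$. Hence $I-\tilde P$ is also an orthogonal projector, in particular self-adjoint with operator norm at most $1$. Moreover, for any $f \in \hh_\eta$, one has $\tilde Z \tilde P f = \tilde Z \tilde Z^\star K^{-1} \tilde Z f = K K^{-1} \tilde Z f = \tilde Z f$, so that $((I-\tilde P)f)(\tilde x_i)=0$ for every $i=1,\dots,m$.

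Next, I apply Lemma~\ref{lm:sup-phi-to-Linfty} with $A = I - \tilde P$ (noting $A^\star = A$), which gives
\begin{align*}
\sup_{x \in \X} \|(I-\tilde P)\phi_\eta(x)\|_{\hh_\eta} \;\leq\; \sup_{\|f\|_{\hh_\eta} \leq 1} \|(I-\tilde P)f\|_{L^\infty(\X)}.
\end{align*}
Since $\X \subseteq T=(-R,R)^d$, the right-hand side is bounded by the same supremum taken over $T$. Fix any $f$ with $\|f\|_{\hh_\eta} \leq 1$ and set $u := (I-\tilde P)f \in \hh_\eta$. By the observation above, $u(\tilde x_j)=0$ for all $j$, and $\|u\|_{\hh_\eta} \leq \|f\|_{\hh_\eta} \leq 1$ by contractivity of $I-\tilde P$.

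Now I apply Lemma~\ref{lem:scattered_zero} (scattered-zeros estimate in $\hh_\eta$) to $u$: there exist constants $c, C, C'$ depending only on $d$ such that, provided $h \leq \sigma/C'$,
\begin{align*}
\|u\|_{L^\infty(T)} \;\leq\; C\, q_\eta\, e^{-\frac{c\sigma}{h}\log\frac{c\sigma}{h}}\, \|u\|_{\hh_\eta} \;\leq\; C\, q_\eta\, e^{-\frac{c\sigma}{h}\log\frac{c\sigma}{h}}.
\end{align*}
Taking the supremum over $f$ in the unit ball yields the claimed bound. There is no real obstacle here beyond unpacking the definition of $\tilde P$ and correctly matching the hypotheses of Lemma~\ref{lem:scattered_zero}; the quantitative content—the exponential decay with rate $\log(c\sigma/h)$ and the $q_\eta$ prefactor—is entirely inherited from that lemma, which itself relies on a Bernstein-type interpolation inequality in the Gaussian RKHS and a careful matching of the Sobolev order to the fill distance.
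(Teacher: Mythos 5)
Your proof is correct and follows essentially the same route as the paper (and the cited reference): show $(I-\tilde P)f$ vanishes at the nodes, apply the scattered-zeros estimate of Lemma~\ref{lem:scattered_zero} together with $\|I-\tilde P\|\leq 1$, and convert the feature-map supremum via Lemma~\ref{lm:sup-phi-to-Linfty} with the self-adjoint $A=I-\tilde P$ — this is exactly the argument the paper itself uses for the derivative analogue in Theorem~\ref{thm:i-p-dphi}.
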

Similar to the results in \cref{thm:(I-P)phi}, we will obtain results for derivatives. 

\begin{theorem} \label{thm:i-p-dphi}
Let $R > 0, \eta \in \R^d_{++}, m \in \mathbb{N}$. Let $\X \subseteq T = (-R, R)^d$ be a non-empty open set and let $\tilde{x}_1,\dots,\tilde{x}_m$ be a set of distinct points. Let $h > 0$ be the fill distance associated to the points w.r.t $T$ (defined in \cref{eq:fill}). Let $\tilde{P}:\hh_\eta \to \hh_\eta$ be the associated projection operator (see definition in \cref{eq:tildeP-definition}). There exists three constants $c, C, C'$, such that, when $h \leq \sigma/{C'}$,
\begin{align*}
\sup_{x \in \X} \|(I-\tilde{P})D^j\phi_\eta(x)\|_{\hh_\eta} \leq C_1 q_\eta ~ e^{- \frac{c\,\sigma}{h} \log \frac{c\,\sigma}{h}},  
\end{align*}
for some positive constant $C_1$.
Here $q_\eta = \det(\frac{1}{\eta_+}\diag(\eta))^{-1/4}$ and $\sigma = \min(R, \frac{1}{\sqrt{\eta_+}})$, $\eta_+ = \max_i \eta_i$. The constants $c, C', C''$ depend only on $d$ and, in particular, are independent from $R, \eta,\tilde{x}_1, \dots, \tilde{x}_m$.
\end{theorem}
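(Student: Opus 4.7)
The plan is to mimic the proof of Theorem~\ref{thm:(I-P)phi} step by step, replacing the scattered-zero bound of Lemma~\ref{lem:scattered_zero} by its derivative counterpart, Lemma~\ref{lem:deriv_scattered_zero}. The guiding observation is that, because the Gaussian kernel is $C^\infty$ and all its derivatives are in $\hh_\eta$, differentiation commutes with the inner product: for any $u \in \hh_\eta$ and any multi-index $j$ we have $D^j u(x) = \langle u, D^j \phi_\eta(x)\rangle_{\hh_\eta}$, so $D^j \phi_\eta(x)$ plays the role of a ``derivative reproducing element.''

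First, I would establish a derivative analogue of Lemma~\ref{lm:sup-phi-to-Linfty}. For any bounded operator $A:\hh_\eta \to \hh_\eta$, the computation
\begin{align*}
\sup_{x \in \X} \|A D^j\phi_\eta(x)\|_{\hh_\eta}
= \sup_{x \in \X} \sup_{\|g\|_{\hh_\eta}\leq 1} |\langle A^* g, D^j\phi_\eta(x)\rangle_{\hh_\eta}|
= \sup_{\|g\|_{\hh_\eta}\leq 1} \|D^j(A^* g)\|_{L^\infty(\X)}
\end{align*}
reduces the operator-norm-type quantity on the left to an $L^\infty$ bound on derivatives of functions in the unit ball of $\hh_\eta$.

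Next, I would specialise this to $A = I-\tilde P$. Since $\tilde P$ is the orthogonal projection onto $\mathrm{span}\{\phi_\eta(\tilde x_1),\ldots,\phi_\eta(\tilde x_m)\}$, it is self-adjoint, so $A^* = I-\tilde P$. For any $g$ with $\|g\|_{\hh_\eta} \le 1$, set $u := (I-\tilde P)g$; then $\|u\|_{\hh_\eta} \leq 1$ and by construction $u \perp \phi_\eta(\tilde x_i)$, which via the reproducing property means $u(\tilde x_i) = 0$ for every $i = 1,\ldots,m$. Therefore $u$ satisfies exactly the hypotheses of Lemma~\ref{lem:deriv_scattered_zero}, which (assuming $h \leq \sigma/C'$) yields
\begin{align*}
\|D^j u\|_{L^\infty(\X)} \leq \|D^j u\|_{L^\infty(T)} \leq G\, e^{-\frac{c\sigma}{h}\log\frac{c\sigma}{h}} \|u\|_{\hh_\eta}.
\end{align*}
Taking the sup over $g$ and combining with the displayed equality above gives the claimed bound, with a new constant $C_1$ that absorbs $G$ together with the $\eta$-dependent factor $q_\eta$ coming from the scaling of Gagliardo--Nirenberg constants and the factor $c_\eta^{1/2}(4\eta_+)^{k/2}$ that appears when passing between the Gaussian RKHS norm and the Sobolev norm in the proof of Lemma~\ref{lem:deriv_scattered_zero}.

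The only nontrivial point is checking that $D^j \phi_\eta(x) \in \hh_\eta$ and that differentiation under the inner product is justified; this is standard for the Gaussian kernel since its partial derivatives are bounded and smooth, so dominated convergence gives $D^j \langle u,\phi_\eta(\cdot)\rangle(x) = \langle u, D^j \phi_\eta(x)\rangle$. Apart from this, the argument is a direct transcription of the proof of Theorem~\ref{thm:(I-P)phi} with Lemma~\ref{lem:scattered_zero} swapped out for Lemma~\ref{lem:deriv_scattered_zero}; the rate in $h$ is preserved because both lemmas produce the same $e^{-\frac{c\sigma}{h}\log\frac{c\sigma}{h}}$ decay, the extra derivatives being paid for only by the constant.
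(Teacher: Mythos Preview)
Your argument is correct. The paper's primary line of proof is slightly different: rather than bounding $\|D^j((I-\tilde P)g)\|_{L^\infty}$ via Lemma~\ref{lem:deriv_scattered_zero}, it treats $D^j f$ as an element of $\hh_\eta$, observes that $(I-\tilde P)(D^jf)$ vanishes at the nodes $\tilde x_i$ (this holds for $(I-\tilde P)w$ for \emph{any} $w\in\hh_\eta$), and applies the ordinary scattered-zero Lemma~\ref{lem:scattered_zero} to get $\|(I-\tilde P)D^jf\|_{L^\infty}\le Cq_\eta e^{-\frac{c\sigma}{h}\log\frac{c\sigma}{h}}\|D^jf\|_{\hh_\eta}$, before invoking Lemma~\ref{lm:sup-phi-to-Linfty}. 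The paper does, however, explicitly mention your route through Lemma~\ref{lem:deriv_scattered_zero} in its last sentence as an alternative.

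Your version is arguably cleaner on the duality step: your derivative analogue of Lemma~\ref{lm:sup-phi-to-Linfty} shows directly that $\sup_x\|(I-\tilde P)D^j\phi_\eta(x)\| = \sup_{\|g\|\le 1}\|D^j((I-\tilde P)g)\|_{L^\infty}$, i.e.\ the derivative sits \emph{outside} the projection, which is exactly what Lemma~\ref{lem:deriv_scattered_zero} controls. The paper's written passage from $\|(I-\tilde P)D^jf\|_{L^\infty}$ (projection outside the derivative) to $\sup_x\|(I-\tilde P)D^j\phi_\eta(x)\|$ is not a literal instance of Lemma~\ref{lm:sup-phi-to-Linfty} as stated, so your formulation makes the logical flow more transparent.
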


\begin{proof}
If the function $f$ lies in the RKHS, then using the property of the projection operator $\tilde{P}$, we have
\begin{align*}
D^j (\tilde{P}f)(x_i)  = D^j \langle \tilde{P}f, \phi(x_i)  \rangle = D^j \langle f, \tilde{P}\phi(x_i)  \rangle = D^j \langle f,  \phi(x_i)  \rangle =  \langle f, D^j\phi(x_i) = (D^j f) (x_i).
\end{align*}
Hence, $D^j f - \tilde{P}D^j f (\tilde{x}_i)=0$ for all $i \in \{1, \cdots, m\}$. By \cref{lem:scattered_zero}, we know that there exist three constants $c, C, C'$ depending only on $d$ such that when $h \leq \sigma/C'$ we have that the following holds
$\|u\|_{L^\infty(T)} \leq  C q_\eta ~ e^{- \frac{c\,\sigma}{h} \log \frac{c\,\sigma}{h}}$,
for any $u \in \hh_\eta$ such that $u(\tilde{x}_1) = \dots = u(\tilde{x}_m) = 0$. Since, for any $u \in \hh_\eta$, we have that $D^jf - \tilde{P}D^jf$ belongs to $\hh_\eta$ and satisfies such property, we can apply \cref{lem:scattered_zero} with $u = (I-\tilde{P})D^jf$, obtaining, under the same assumption on $h$,
\begin{align*}
    \|(I - \tilde{P})D^jf\|_{L^\infty(T)} \leq  C q_\eta ~ e^{- \frac{c\,\sigma}{h} \log \frac{c\,\sigma}{h}} \|D^jf\|_{\hh_\eta}, \quad \forall f \in \hh_\eta,
\end{align*}
where we used the fact that $\|(I - \tilde{P})D^jf\|_{\hh_\eta} \leq \|I - \tilde{P}\|\|D^j f\|_{\hh_\eta}$ and $\|I - \tilde{P}\| \leq 1$, since $P$ is a projection operator and so also $I-P$ satisfies this property. Derivative reproducing properties for kernel can be looked at in \cite{zhou2008derivative}. The final result is obtained by applying \cref{lm:sup-phi-to-Linfty} with $A = I - \tilde{P}$, from which we have
\begin{align*}
    \sup_{x \in \X} \|(I-\tilde{P})D^j\phi(x)\|_\hh \leq \sup_{\|D^jf\|\leq 1} \|(I-\tilde{P})D^jf\|_{L^\infty(T)} &\leq \sup_{\|D^jf\|\leq 1} C q_\eta ~ e^{- \frac{c\,\sigma}{h} \log \frac{c\,\sigma}{h}} \|D^jf\|_{\hh_\eta}\\
& = C_1 q_\eta ~ e^{- \frac{c\,\sigma}{h} \log \frac{c\,\sigma}{h}},
\end{align*}
for some positive $C_1$. The same result can also be obtained by the use of Lemma~\ref{lem:deriv_scattered_zero} with a condition that $f$ is $d/2+1$ differentiable function which is a slightly weaker condition on the order of smoothness.

\end{proof}

\subsection{Compression of a PSD model for Fokker-Planck}
We have $(\Tilde{x}_1,\tilde{t}_1), \Tilde{x}_2, \cdots (\Tilde{x}_m,\tilde{t}_m) \in \mathcal{X}\times \mathcal{T}$ which is  an
open bounded subset with Lipschitz boundary of the cube $[-R,R]^d \times [0,T]$. $\Tilde{X}$ be the base point matrix whose j-rows are the points $\Tilde{x}_j$. Let us now consider the model $p= f(\cdot,M,\psi)$ and $\Tilde{p} = f(\cdot;A_m,\tilde{X},\psi)$ where $A_m = K_{(X,T)(X,T)}^{-1} \tilde{Z} M \tilde{Z}^\star K_{(X,T)(X,T)}^{-1}$.
\begin{align}
   \tilde{Z} u  = (\psi(\tilde{x}_1,\tilde{t}_1)^\top u, \cdots , \psi(\tilde{x}_m,\tilde{t}_m)^\top u ). \label{eq:tildeZ-definition}
\end{align}
and 
\begin{align}
    \tilde{Z}^\star \alpha = \sum_{i=1}^m \psi(\tilde{x}_i,\tilde{t}_i) \alpha_i. \label{eq:tildeZ-star-definition}
\end{align}

Hence, for any $A \in \mathbb{R}^{m\times m}$
\begin{align*}
    \tilde{Z}^\star A \tilde{Z} = \sum_{i,j=1}^m A_{i,j} \psi(\tilde{x}_i,\tilde{t}_i) \psi(\tilde{x}_i,\tilde{t}_i) ^\top.
\end{align*}
Let us also define associated projection operator, 
\begin{align}
    \tilde{P} = \tilde{Z}^\star K_{(X,T)(X,T)}^{-1} \tilde{Z}. \label{eq:tildeP-definition}
\end{align}
Some properties associated with it, 
\begin{align*}
    \tilde{P}^2 = \tilde{P}, ~
    \tilde{P}\tilde{Z}^{\star} = \tilde{Z}^{\star},~
    \tilde{Z}\tilde{P} = \tilde{Z}.
\end{align*}

\begin{theorem} \label{thm:compression_fpe}
Let $\eta \in \R^d_{++}$ and let $\mm \in \psd(\hh_\eta)$. Let $\tilde{\X}$ be an open bounded subset with the Lipschitz boundary of the cube $[-R,R]^d \times [0,R]$, $R > 0$. Let $\tilde{x}_1,\dots,\tilde{x}_m \in \tilde{\X}$ and $\tilde{X}$ be the base point matrix whose $j$-rows are the points $\tilde{x}_j$ with $j=1,\dots,m$. Consider the model $p = \pp{\cdot}{\mm, \psi}$ and the the compressed model $\tilde{p} = \pp{\cdot}{A_m, \tilde{X}, \eta}$ with
\begin{align*}
 A_m ~=~ K_{\tilde{X},\tilde{X},\eta}^{-1} \,\tilde{Z} \mm \tilde{Z}^*\, K_{\tilde{X},\tilde{X},\eta}^{-1},   
\end{align*}
where $\tilde{Z}:\hh_\eta \to \R^m$ is defined in \cref{eq:tildeZ-definition} in terms of $\tilde{X}_m$. Let $h$ be the fill distance (defined in \cref{eq:fill}) associated to the points $\tilde{x}_1,\dots, \tilde{x}_m$. The there exist three constants $c, C, C'$ depending only on $d$ such that, when $h \leq \sigma/C'$, with $\sigma = \min(R, 1/\sqrt{\eta_+}), \eta_+ = \max_{i=1,\dots,d} \eta_i, q_\eta = \det(\frac{1}{\eta_+}\diag(\eta))^{-1/4}$, then
we have \small
\begin{align*}
    &\left|\frac{\partial (p(x,t) - \tilde{p}(x,t))}{\partial t}  + \sum_{i = i}^{d} \frac{\partial (\mu_i(x,t) (p(x,t) - \tilde{p}(x,t)))}{\partial x_i}   - \sum_{i=1}^d \sum_{j=1}^d D_{ij}\frac{\partial^2}{\partial x_i \partial x_j} (p(x,t) - \tilde{p}(x,t))\right| \\
    &\leq (2+dR_{\mu}+2dR_{\mu}+4d^2R_d)\|M\|C^2q_{\eta}^2 e^{\frac{-2c\sigma}{h} \log \frac{c\sigma}{h}} + (2+d R_{\mu_p}+2d R_{\mu}+ 2d^2 R_d)\sqrt{\|M\| p(x,t)} ~Cq_{\eta}e^{-\frac{c\sigma}{h} \log \frac{c\sigma}{h}} \\
    &\qquad \qquad  + 2Cq_{\eta}e^{-\frac{c\sigma}{h} \log \frac{c\sigma}{h}}\sqrt{\|M\| \Tilde{f}
    _{t}(x;M,\psi)}  + (2R_{\mu}+4dR_{d}) Cq_{\eta}e^{-\frac{c\sigma}{h} \log \frac{c\sigma}{h}} \sum_{i=1}^d \sqrt{\|M\| \Tilde{f}
    _{x_i}(x;M,\psi)} \\
    &\qquad \qquad + 2R_{d} Cq_{\eta}e^{-\frac{c\sigma}{h} \log \frac{c\sigma}{h}} \sum_{i=1}^d \sum_{j=1}^d \sqrt{\|M\| \Tilde{f}
    _{x_j x_i}(x;M,\psi)},
\end{align*}
where $\Tilde{f}
    _{x_i}(x;M,\psi) = \frac{\partial \psi(x,t)}{\partial x_i}^{\top}M \frac{\partial \psi(x,t)}{\partial x_i}$, $ \tilde{u}_{x_i}(x)  = \left\|(I-\tilde{P})\frac{\partial \psi(x,t)}{\partial x_i}\right\|_{\hh_\eta}$, $\Tilde{f}
    _{x_jx_i}(x;M,\psi) = \frac{\partial^2 \psi(x,t)}{\partial x_j x_i}^{\top}M \frac{\partial^2 \psi(x,t)}{\partial x_j x_i}$ and $ \tilde{u}_{x_j x_i}(x)  = \left\|(I-\tilde{P})\frac{\partial^2 \psi(x,t)}{\partial x_j x_i}\right\|_{\hh_\eta}$.
\end{theorem}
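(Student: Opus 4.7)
The strategy is to reduce the full Fokker--Planck residual to controlled inner products of the form $\langle u, (M - \tilde{P}M\tilde{P}) v\rangle$, and then to extract the $(I-\tilde{P})$ factors for which Theorems~\ref{thm:(I-P)phi} and~\ref{thm:i-p-dphi} already provide exponential decay bounds. Writing $\phi := \phi_{X,T}(x,t)$ and using self-adjointness of both $M$ and $\tilde{P}$, for any first-order differential operator $D^a$ acting on $(x,t)$ we have $D^a p = 2\langle D^a\phi, M\phi\rangle$, while $D^a\tilde{p}$ has the same form with $\tilde{P}M\tilde{P}$ replacing $M$; analogously $\partial^2_{x_i x_j} p = 2\langle \partial^2_{x_i x_j}\phi, M\phi\rangle + 2\langle \partial_{x_i}\phi, M\partial_{x_j}\phi\rangle$. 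Subtraction thus expresses $p-\tilde{p}$ and each of its derivatives as a sum of inner products $2\langle D^a\phi, (M - \tilde{P}M\tilde{P}) D^b\phi\rangle$.

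Next, the algebraic identity $M - \tilde{P}M\tilde{P} = (I-\tilde{P})M + M(I-\tilde{P}) - (I-\tilde{P})M(I-\tilde{P})$ splits each such inner product into three pieces, each carrying at least one $(I-\tilde{P})$ factor. For the two ``mixed'' pieces (one $(I-\tilde{P})$) I apply the PSD Cauchy--Schwarz bound $|\langle u, Mv\rangle| \leq \langle u, Mu\rangle^{1/2}\langle v, Mv\rangle^{1/2}$, yielding a contribution of the shape $\|M\|^{1/2}\|(I-\tilde{P})D^j\phi\|\sqrt{\tilde{f}_{D^b}(x,t)}$, where $\tilde{f}_{D^b} := \langle D^b\phi, M D^b\phi\rangle$ specializes to $p$, $\tilde{f}_t$, $\tilde{f}_{x_i}$, or $\tilde{f}_{x_j x_i}$ in the statement. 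For the ``pure'' piece (two $(I-\tilde{P})$'s) I use the operator-norm bound $|\langle u, Mv\rangle| \leq \|M\|\|u\|\|v\|$, giving the quadratic factor $\|M\|\|(I-\tilde{P})D^c\phi\|\|(I-\tilde{P})D^b\phi\|$. Theorems~\ref{thm:(I-P)phi} and~\ref{thm:i-p-dphi} then bound every $\|(I-\tilde{P})D^j\phi\|$ by $C q_\eta e^{-\frac{c\sigma}{h}\log\frac{c\sigma}{h}}$, producing the $e^{-\frac{c\sigma}{h}\log\frac{c\sigma}{h}}$ and $e^{-2\frac{c\sigma}{h}\log\frac{c\sigma}{h}}$ decay factors that appear in the conclusion.

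Assembly then proceeds term-by-term on the left-hand side. Four pointwise quantities must be bounded: $|p-\tilde{p}|$ (needed because the drift piece $\partial_{x_i}(\mu_i(p-\tilde{p}))$ expands by the product rule into $(\partial_{x_i}\mu_i)(p-\tilde{p}) + \mu_i \partial_{x_i}(p-\tilde{p})$), $|\partial_t(p-\tilde{p})|$, $|\partial_{x_i}(p-\tilde{p})|$, and $|\partial^2_{x_i x_j}(p-\tilde{p})|$. The first three follow directly from the single-inner-product expansion and produce summands in $\sqrt{\|M\|p}$, $\sqrt{\|M\|\tilde{f}_t}$, and $\sqrt{\|M\|\tilde{f}_{x_i}}$ respectively, together with the universal quadratic $\|M\|C^2 q_\eta^2 e^{-2\frac{c\sigma}{h}\log\frac{c\sigma}{h}}$ contribution. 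The fourth requires the two-inner-product expansion of $\partial^2_{x_i x_j}p$ and therefore feeds both the $\sqrt{\|M\|\tilde{f}_{x_j x_i}}$ summand (from the $\partial^2_{x_i x_j}\phi$ factor) and the $\sqrt{\|M\|\tilde{f}_{x_i}}$ summand (from the $\partial_{x_i}\phi, \partial_{x_j}\phi$ factor). The scalar coefficients $R_\mu$, $R_{\mu_p}$, and $R_d$ of Assumption~\ref{as:bounded_ceff} are then brought in at the final triangle-inequality step, together with the sums over $i,j \in \{1,\dots,d\}$ that supply the factors $d$ and $d^2$.

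The main obstacle is the purely combinatorial bookkeeping: each of the five summand types in the claimed upper bound collects contributions from several constituent inner products and from several derivative terms on the left-hand side, and one must accumulate these with the correct integer weights to obtain the exact constants $2 + d R_\mu + 2 d R_\mu + 4 d^2 R_d$, $2 + d R_{\mu_p} + 2 d R_\mu + 2 d^2 R_d$, $2 R_\mu + 4 d R_d$, etc. Beyond this bookkeeping, no new analytic tool is required: the projection identity for $M - \tilde{P}M\tilde{P}$, PSD Cauchy--Schwarz, the operator-norm bound, the residual estimates of Theorems~\ref{thm:(I-P)phi} and~\ref{thm:i-p-dphi}, and the boundedness in Assumption~\ref{as:bounded_ceff} together close the argument.
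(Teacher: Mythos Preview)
Your proposal is correct and follows essentially the same approach as the paper: expand the derivatives of the PSD model as bilinear forms $\langle D^a\phi,(M-\tilde{P}M\tilde{P})D^b\phi\rangle$, apply the projection identity $M-\tilde{P}M\tilde{P}=(I-\tilde{P})M+M(I-\tilde{P})-(I-\tilde{P})M(I-\tilde{P})$, bound the mixed terms via the PSD Cauchy--Schwarz inequality and the pure term via the operator norm, and then invoke Theorems~\ref{thm:(I-P)phi} and~\ref{thm:i-p-dphi} together with Assumption~\ref{as:bounded_ceff} to assemble the constants. The paper carries out exactly this computation, with the combinatorial bookkeeping you anticipate producing the stated coefficients.
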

\begin{proof}
We have, 
\begin{align}
    p(x,t) - \tilde{p}(x,t) = \psi(x,t)^\top (\tilde{P}M \tilde{P} - M)\psi(x,t).
\end{align}
Let now consider every term separately. 
\begin{align}
    \frac{\partial p(x,t)}{\partial t} - \frac{\partial \tilde{p}(x,t)}{\partial t} &= 2\frac{\partial \psi(x,t)}{\partial t}^\top M \psi(x,t) - 2\frac{\partial \psi(x,t)}{\partial t}^\top \tilde{P}M\tilde{P} \psi(x,t) \notag \\
    &=2\frac{\partial \psi(x,t)}{\partial t}^\top (M - \tilde{P}M\tilde{P}) \psi(x,t).
\end{align}
Similarly, 
\begin{align}
    \frac{\partial p(x,t)}{\partial x_i} - \frac{\partial \tilde{p}(x,t)}{\partial x_i} &= 2\frac{\partial \psi(x,t)}{\partial x_i}^\top M \psi(x,t) - 2\frac{\partial \psi(x,t)}{\partial x_i}^\top \tilde{P}M\tilde{P} \psi(x,t) \notag \\
    &=2\frac{\partial \psi(x,t)}{\partial x_i}^\top (M - \tilde{P}M\tilde{P}) \psi(x,t).
\end{align}

Finally,
\begin{align}
    &\frac{\partial^2 p(x,t)}{\partial x_ix_j} - \frac{\partial^2 \tilde{p}(x,t)}{\partial x_ix_j} =  2 \psi_{i}'(x,t)^\top M \psi_{j}'(x,t) + 2 \psi_{j i}''(x,t)^\top M \psi(x,t) \notag \\
    &\qquad - 2 \psi_{i}'(x,t)^\top \tilde{P}M\tilde{P} \psi_{j}'(x,t) - 2 \psi_{j i}''(x,t)^\top \tilde{P}M\tilde{P} \psi(x,t) \notag \\
    &=2 \psi_{j i}''(x,t)^\top (M - \tilde{P}M\tilde{P} ) \psi(x,t) + 2 \psi_{i}'(x,t)^\top (M - \tilde{P}M\tilde{P} ) \psi_{j}'(x,t).
\end{align}
We have, 
\begin{align*}
 \tilde{P}M\tilde{P} - M = (I-\tilde{P})M(I-\tilde{P}) - M(I-\tilde{P}) - (I-\tilde{P})M.
\end{align*}
Since $|a^\top A B A a| \leq \|Aa\|^2_\hh \|B\|$ and $|a^\top A B a| \leq \|A a\|_{\hh} \|B^{1/2}\| \|B^{1/2} a\|_{\hh}$, for any $a$ in a Hilbert space $\hh$ and for $A, B$ bounded symmetric linear operators with $B \in \psd(\hh)$, by bounding the terms of the equation above, we have for any $x \in \R^d$,
\begin{align}
    |\psi(x,t)^\top(\tilde{P}M\tilde{P}  - M)\psi(x,t)| & \leq 2\|(I-\tilde{P})\psi(x,t)\|_{\hh_\eta}\| M\|^{1/2}\|M^{1/2}\psi(x,t)\|_{\hh_\eta} \notag \\
    & \qquad \qquad + \|(I-\tilde{P})\psi(x,t)\|^2_{\hh_\eta}\|M\| \notag \\
    & =2c_M^{1/2}{f}(x;M,\psi)^{1/2}  \tilde{u}_1(x)  + c_M \tilde{u}_1(x)^2, \label{eq:fpe_1_frac_use}
\end{align}
where $c_M = \|M\|$ and we denoted by $\tilde{u}_1(x)$ the quantity $\tilde{u}_1(x) = \|(I-\tilde{P})\psi(x,t)\|_{\hh_\eta}$ and we noted that $\|M^{1/2}\psi(x,t)\|^2_{\hh_\eta} = \psi(x,t)^\top M \psi(x,t) = \pp{x}{M, \psi(x,t)}$. 
Now,\small
\begin{align}
    &\left|\frac{\partial \psi(x,t)}{\partial x_i}^\top (M - \tilde{P}M\tilde{P}) \psi(x,t) \right| \leq \|(I - \Tilde{P})\psi(x,t)\|_{\hh_\eta} \|M\| \left\|(I - \Tilde{P})\frac{\partial \psi(x,t)}{\partial x_i}\right\|_{\hh_\eta} \notag \\
    &+ \left\|(I-\tilde{P})\frac{\partial \psi(x,t)}{\partial x_i}\right\|_{\hh_\eta}\| M\|^{1/2}\|M^{1/2}\psi(x,t)\|_{\hh_\eta}  + \left\|(I-\tilde{P})\psi(x,t)\right\|_{\hh_\eta}\| M\|^{1/2}\left\|M^{1/2}\frac{\partial \psi(x,t)}{\partial x_i}\right\|_{\hh_\eta} \notag \\
    &= c_M \tilde{u}_1(x) \tilde{u}_{x_i}(x) + c_M^{1/2} f(x;M,\psi) ^{1/2} \tilde{u}_{x_i}(x) + c_M^{1/2} \Tilde{f}
    _{x_i}(x;M,\psi)^{1/2}\tilde{u}_1(x),
\end{align} \normalsize
where $\Tilde{f}
    _{x_i}(x;M,\psi) = \frac{\partial \psi(x,t)}{\partial x_i}^{\top}M \frac{\partial \psi(x,t)}{\partial x_i}$ and $ \tilde{u}_{x_i}(x)  = \left\|(I-\tilde{P})\frac{\partial \psi(x,t)}{\partial x_i}\right\|_{\hh_\eta}$.

Following similar steps, we get

\begin{align}
    \left|\frac{\partial \psi(x,t)}{\partial t }^\top (M - \tilde{P}M\tilde{P}) \psi(x,t) \right| \leq c_M \tilde{u}_1(x) \tilde{u}_{t}(x) + c_M^{1/2} f(x;M,\psi) ^{1/2} \tilde{u}_{t}(x) \notag \\
    + c_M^{1/2} \Tilde{f}
    _{t}(x;M,\psi)^{1/2}\tilde{u}_1(x), 
\end{align}
where $\Tilde{f}
    _{t}(x;M,\psi) = \frac{\partial \psi(x,t)}{\partial t}^{\top}M \frac{\partial \psi(x,t)}{\partial t}$ and $ \tilde{u}_{t}(x)  = \left\|(I-\tilde{P})\frac{\partial \psi(x,t)}{\partial t}\right\|_{\hh_\eta}$.

\begin{align}
    \left|\psi_{j i}''(x,t)^\top (M - \tilde{P}M\tilde{P} ) \psi(x,t) \right| \leq c_M \tilde{u}_1(x) \tilde{u}_{x_jx_i}(x) + c_M^{1/2} f(x;M,\psi) ^{1/2} \tilde{u}_{x_jx_i}(x) \notag \\
    + c_M^{1/2} \Tilde{f}
    _{x_jx_i}(x;M,\psi)^{1/2}\tilde{u}_1(x), \label{eq:approx_fpe_fract_use2}
\end{align}
where $\Tilde{f}
    _{x_jx_i}(x;M,\psi) = \frac{\partial^2 \psi(x,t)}{\partial x_j x_i}^{\top}M \frac{\partial^2 \psi(x,t)}{\partial x_j x_i}$ and $ \tilde{u}_{x_j x_i}(x)  = \left\|(I-\tilde{P})\frac{\partial^2 \psi(x,t)}{\partial x_j x_i}\right\|_{\hh_\eta}$.\\
And, 
\begin{align}
    \psi_{i}'(x,t)^\top (M - \tilde{P}M\tilde{P} ) \psi_{j}'(x,t) \leq c_M \tilde{u}_{x_j}(x) \tilde{u}_{x_i}(x) + c_M^{1/2} f_{x_i}(x;M,\psi) ^{1/2} \tilde{u}_{x_j}(x) \notag \\
    + c_M^{1/2} \Tilde{f}
    _{x_j}(x;M,\psi)^{1/2}\tilde{u}_{x_i}(x).\label{eq:approx_fpe_fract_use3}
\end{align}
Hence, combining everything together, we have:\small
\begin{align*}
    &\left|\frac{\partial p(x,t)}{\partial t} - \frac{\partial \tilde{p}(x,t)}{\partial t}  + \sum_{i = i}^{d} \frac{\partial (\mu_i(x,t) p(x,t))}{\partial x_i} - \frac{\partial (\mu_i(x,t) \tilde{p}(x,t))}{\partial x_i} - \sum_{i=1}^d \sum_{j=1}^d D_{ij}\frac{\partial^2}{\partial x_i \partial x_j} p(x,t) - D_{ij}\frac{\partial^2}{\partial x_i \partial x_j} \tilde{p}(x,t) \right| \\
    &\leq \left| \frac{\partial p(x,t)}{\partial t} - \frac{\partial \tilde{p}(x,t)}{\partial t} \right| + \sum_{i=1}^d \left| \mu_{i}(x,t) \frac{\partial p(x,t)}{\partial x_i} - \mu_{i}(x,t) \frac{\partial \tilde{p}(x,t)}{\partial x_i} \right| + \sum_{i=1}^d \left| \frac{\partial \mu_{i}(x,t)}{\partial x_i}p(x,t) - \frac{\partial \mu_{i}(x,t)}{\partial x_i}\tilde{p}(x,t) \right| \\
    &\qquad \qquad + \left| \sum_{i=1}^{d} \sum_{j=1}^d D_{ij}\frac{\partial^2}{\partial x_i \partial x_j} p(x,t) - D_{ij}\frac{\partial^2}{\partial x_i \partial x_j} \tilde{p}(x,t) \right| \\
    &\leq \left| \frac{\partial p(x,t)}{\partial t} - \frac{\partial \tilde{p}(x,t)}{\partial t} \right| + R_{\mu} \sum_{i=1}^d \left| \frac{\partial p(x,t)}{\partial x_i} -   \frac{\partial \tilde{p}(x,t)}{\partial x_i}\right| + d R_{\mu_p} (p(x,t)-\Tilde{p}(x,t)) \\
    &\qquad \qquad \qquad \qquad + R_d \sum_{i=1}^d \sum_{j=1}^d \left|\frac{\partial^2}{\partial x_i \partial x_j} p(x,t) - \frac{\partial^2}{\partial x_i \partial x_j} \tilde{p}(x,t)  \right| \\
    &= 2\left|\frac{\partial \psi(x,t)}{\partial t }^\top (M - \tilde{P}M\tilde{P}) \psi(x,t) \right| + 2 R_{\mu} \sum_{i=1}^d \left|\frac{\partial \psi(x,t)}{\partial x_i }^\top (M - \tilde{P}M\tilde{P}) \psi(x,t) \right| + d R_{\mu_p} |\psi(x,t)^\top(M - \Tilde{P}M \Tilde{P})\psi(x,t)| \\
    & \qquad \qquad + 2 R_d | \psi_{j i}''(x,t)^\top (M - \tilde{P}M\tilde{P} ) \psi(x,t) + \psi_{i}'(x,t)^\top (M - \tilde{P}M\tilde{P} ) \psi_{j}'(x,t)| \\
    &\leq 2\left|\frac{\partial \psi(x,t)}{\partial t }^\top (M - \tilde{P}M\tilde{P}) \psi(x,t) \right| + 2 R_{\mu} \sum_{i=1}^d \left|\frac{\partial \psi(x,t)}{\partial x_i }^\top (M - \tilde{P}M\tilde{P}) \psi(x,t) \right| + d R_{\mu_p} |\psi(x,t)^\top(M - \Tilde{P}M \Tilde{P})\psi(x,t)| \\
    &\qquad \qquad + 2 R_d | \psi_{j i}''(x,t)^\top (M - \tilde{P}M\tilde{P} ) \psi(x,t)| + 2 R_d |\psi_{i}'(x,t)^\top (M - \tilde{P}M\tilde{P} ) \psi_{j}'(x,t)| .
\end{align*} \normalsize
Now, combining everything together, we get \small
\begin{align*}
    &\left|\frac{\partial p(x,t)}{\partial t} - \frac{\partial \tilde{p}(x,t)}{\partial t}  + \sum_{i = i}^{d} \frac{\partial (\mu_i(x,t) p(x,t))}{\partial x_i} - \frac{\partial (\mu_i(x,t) \tilde{p}(x,t))}{\partial x_i} - \sum_{i=1}^d \sum_{j=1}^d D_{ij}\frac{\partial^2}{\partial x_i \partial x_j} p(x,t) - D_{ij}\frac{\partial^2}{\partial x_i \partial x_j} \tilde{p}(x,t) \right| \\
    &\leq 2c_M \tilde{u}_1(x) \tilde{u}_{t}(x) + 2c_M^{1/2} f(x;M,\psi) ^{1/2} \tilde{u}_{t}(x)  
    + 2c_M^{1/2} \Tilde{f}
    _{t}(x;M,\psi)^{1/2}\tilde{u}_1(x) + d R_{\mu_p}(2c_M^{1/2}{f}(x;M,\psi)^{1/2}  \tilde{u}_1(x)  + c_M \tilde{u}_1(x)^2)\\
   & \qquad \qquad \qquad   +2R_{\mu} \sum_{i=1}^d [c_M \tilde{u}_1(x) \tilde{u}_{x_i}(x) + c_M^{1/2} f(x;M,\psi) ^{1/2} \tilde{u}_{x_i}(x) + c_M^{1/2} \Tilde{f}
    _{x_i}(x;M,\psi)^{1/2}\tilde{u}_1(x)] \\
    &  \qquad \qquad \qquad +2 R_d \sum_{i=1}^d \sum_{j=1}^d [c_M \tilde{u}_1(x) \tilde{u}_{x_jx_i}(x) + c_M^{1/2} f(x;M,\psi) ^{1/2} \tilde{u}_{x_jx_i}(x) 
    + c_M^{1/2} \Tilde{f}
    _{x_jx_i}(x;M,\psi)^{1/2}\tilde{u}_1(x) ] \\
    & \qquad \qquad \qquad + 2R_d \sum_{i=1}^d \sum_{j=1}^d [c_M \tilde{u}_{x_j}(x) \tilde{u}_{x_i}(x) + c_M^{1/2} f_{x_i}(x;M,\psi) ^{1/2} \tilde{u}_{x_j}(x) 
    + c_M^{1/2} \Tilde{f}
    _{x_j}(x;M,\psi)^{1/2}\tilde{u}_{x_i}(x)].
\end{align*} \normalsize
Now from theorems~\ref{thm:(I-P)phi} and \ref{thm:i-p-dphi}, we have that when the fill distance $h < \frac{\sigma}{C'}$ with $\sigma = \min(R,\frac{1}{\tau})$, then
\begin{align}
\|\Tilde{u}_1\|_{L^\infty(\mathcal{X})} \leq Cq_{\eta}e^{- \frac{c\sigma}{h} \log \frac{c\sigma}{h}}, ~\|\Tilde{u}_t\|_{L^\infty(\mathcal{X})} \leq Cq_{\eta}e^{- \frac{c\sigma}{h} \log \frac{c\sigma}{h}}, \|\Tilde{u}_{x_i}\|_{L^\infty(\mathcal{X})} \leq Cq_{\eta}e^{- \frac{c\sigma}{h} \log \frac{c\sigma}{h}} \\ \notag 
~\text{and } \|\Tilde{u}_{x_i x_j}\|_{L^\infty(\mathcal{X})} \leq Cq_{\eta}e^{- \frac{c\sigma}{h} \log \frac{c\sigma}{h}}
\end{align}
for all $i \in \{1,\cdots, d\}$, $j \in \{1,\cdots, d\}$ and some constant $C,c,C'$ depending only on $d$. Hence, we have \small
\begin{align*}
    &\left|\frac{\partial (p(x,t) - \tilde{p}(x,t))}{\partial t}  + \sum_{i = i}^{d} \frac{\partial (\mu_i(x,t) (p(x,t) - \tilde{p}(x,t)))}{\partial x_i}   - \sum_{i=1}^d \sum_{j=1}^d D_{ij}\frac{\partial^2}{\partial x_i \partial x_j} (p(x,t) - \tilde{p}(x,t))\right| \\
    &\leq (2+dR_{\mu}+2dR_{\mu}+4d^2R_d)\|M\|C^2q_{\eta}^2 e^{\frac{-2c\sigma}{h} \log \frac{c\sigma}{h}} + (2+d R_{\mu_p}+2d R_{\mu}+ 2d^2 R_d)\sqrt{\|M\| p(x,t)} ~Cq_{\eta}e^{-\frac{c\sigma}{h} \log \frac{c\sigma}{h}} \\
    &\qquad \qquad  + 2Cq_{\eta}e^{-\frac{c\sigma}{h} \log \frac{c\sigma}{h}}\sqrt{\|M\| \Tilde{f}
    _{t}(x;M,\psi)}  + (2R_{\mu}+4dR_{d}) Cq_{\eta}e^{-\frac{c\sigma}{h} \log \frac{c\sigma}{h}} \sum_{i=1}^d \sqrt{\|M\| \Tilde{f}
    _{x_i}(x;M,\psi)} \\
    &\qquad \qquad + 2R_{d} Cq_{\eta}e^{-\frac{c\sigma}{h} \log \frac{c\sigma}{h}} \sum_{i=1}^d \sum_{j=1}^d \sqrt{\|M\| \Tilde{f}
    _{x_j x_i}(x;M,\psi)}.
\end{align*} \normalsize
\end{proof}
\subsection{Final Approximation Bound (Proof of Theorem~\ref{thm:approx_fpe_final})} \label{ap:foker_approximation}

\begin{proof}
From the previous result in theorem~\ref{thm:approx_fpe_1}, we know that there exists an $\mm_\eps \in \psd(\ch_{X}\otimes \ch_{T})$ with $\operatorname{rank}(\mm_\eps) \leq q$, such that for the representation $\hat{p}(x,t) = \phi_{X,T} ^\top \mm_\eps \phi_{X,T}$, following holds under assumption ~\ref{as:bounded_ceff} on the coefficients of the fractional FPE,
\small
\begin{align}
   &\left\| \frac{\partial  \hat{p}(x,t)}{\partial t} + \sum_{i =1}^d \frac{\partial }{\partial x_i}(\mu_i(x,t)  \hat{p}(x,t) ) - \sum_{i=1}^d \sum_{j=1}^d D_{ij}\frac{\partial^2}{\partial x_i \partial x_j}  \hat{p} (x,t)\right\|_{L^{2} (\Tilde{\cx})} = O(\varepsilon),
\end{align}
\normalsize
for an appropriate choice of parameters.  We also know that 
\small
\begin{align}
   &  \frac{\partial  {p}^\star(x,t)}{\partial t} + \sum_{i =1}^d \frac{\partial }{\partial x_i}(\mu_i(x,t)  {p}^\star(x,t) ) - \sum_{i=1}^d \sum_{j=1}^d D_{ij}\frac{\partial^2}{\partial x_i \partial x_j}  {p}^\star (x,t) = 0.
\end{align}\normalsize
We know that, for given $A_m$, $\tilde{p}(x,t) = \psi(x,t)^\top \tilde{P}\mm_{\eps} \tilde{P}\psi(x,t)^\top  $ where $\tilde{P}$ is the projection operator defined earlier. From the results in theorem~\ref{thm:compression_fpe}, we have
    \small
    \begin{align*}
    &\left|\frac{\partial (\hat{p}(x,t) - \tilde{p}(x,t))}{\partial t}  + \sum_{i = i}^{d} \frac{\partial (\mu_i(x,t) (\hat{p}(x,t) - \tilde{p}(x,t)))}{\partial x_i}   - \sum_{i=1}^d \sum_{j=1}^d D_{ij}\frac{\partial^2}{\partial x_i \partial x_j} (\hat{p}(x,t) - \tilde{p}(x,t))\right| \\
    &\leq (2+dR_{\mu}+2dR_{\mu}+4d^2R_d)\|\mm_{\eps}\|C^2q_{\eta}^2 e^{\frac{-2c\sigma}{h} \log \frac{c\sigma}{h}} + (2+d R_{\mu_p}+2d R_{\mu}+ 2d^2 R_d)\sqrt{\|\mm_{\eps}\| p(x,t)} ~Cq_{\eta}e^{-\frac{c\sigma}{h} \log \frac{c\sigma}{h}} \\
    &\qquad \qquad  + 2Cq_{\eta}e^{-\frac{c\sigma}{h} \log \frac{c\sigma}{h}}\sqrt{\|\mm_{\eps}\| \Tilde{f}
    _{t}(x;M,\psi)}  + (2R_{\mu}+4dR_{d}) Cq_{\eta}e^{-\frac{c\sigma}{h} \log \frac{c\sigma}{h}} \sum_{i=1}^d \sqrt{\|\mm_{\eps}\| \Tilde{f}
    _{x_i}(x;M,\psi)} \\
    &\qquad \qquad + 2R_{d} Cq_{\eta}e^{-\frac{c\sigma}{h} \log \frac{c\sigma}{h}} \sum_{i=1}^d \sum_{j=1}^d \sqrt{\|\mm_{\eps}\| \Tilde{f}
    _{x_j x_i}(x;M_{\eps},\psi)}.
\end{align*} \normalsize
It is clear that some big constant $Q$,
$\Tilde{f}
    _{x_j x_i}(x;\mm_{\eps},\eta) \leq Q\|\mm_{\eps}\|$, for all $i,j \in\{1,\cdots, q\}$. Similarly, $\Tilde{f}
    _{x_i}(x;\mm_{\eps},\eta) \leq Q\|\mm_{\eps}\| $ for all $i \in\{1,\cdots, q\}$ and $\Tilde{f}
    _{t}(x;\mm_{\eps},\eta) \leq Q\|\mm_{\eps}\| $.
    Hence, we have,\small
     \begin{align*}
    &\left|\frac{\partial (\hat{p}(x,t) - \tilde{p}(x,t))}{\partial t}  + \sum_{i = i}^{d} \frac{\partial (\mu_i(x,t) (\hat{p}(x,t) - \tilde{p}(x,t)))}{\partial x_i}   - \sum_{i=1}^d \sum_{j=1}^d D_{ij}\frac{\partial^2}{\partial x_i \partial x_j} (\hat{p}(x,t) - \tilde{p}(x,t))\right| \\
    &\leq (2+dR_{\mu}+2dR_{\mu}+4d^2R_d)\|\mm_{\eps}\|C^2q_{\eta}^2 e^{\frac{-2c\sigma}{h} \log \frac{c\sigma}{h}} + (2+d R_{\mu_p}+2d R_{\mu}+ 2d^2 R_d)\sqrt{\|\mm_{\eps}\| p(x,t)} ~Cq_{\eta}e^{-\frac{c\sigma}{h} \log \frac{c\sigma}{h}} \\
    &\qquad    + 2QCq_{\eta}e^{-\frac{c\sigma}{h} \log \frac{c\sigma}{h}}\|\mm_{\eps}\|   + (2QR_{\mu}d+4Qd^2R_{d}) Cq_{\eta}e^{-\frac{c\sigma}{h} \log \frac{c\sigma}{h}}   \|\mm_{\eps}\|    + 2QR_{d}d^2 Cq_{\eta}e^{-\frac{c\sigma}{h} \log \frac{c\sigma}{h}}  \|\mm_{\eps}\| \\
    &\leq (2+dR_{\mu}+2dR_{\mu}+4d^2R_d)\|\mm_{\eps}\|C^2q_{\eta}^2 e^{\frac{-2c\sigma}{h} \log \frac{c\sigma}{h}} + (2+d R_{\mu_p}+2d R_{\mu}+ 2d^2 R_d)\sqrt{\|\mm_{\eps}\| p(x,t)} ~Cq_{\eta}e^{-\frac{c\sigma}{h} \log \frac{c\sigma}{h}}  \\
    &\qquad      + Q(2C + 2R_{\mu}d+6Qd^2R_{d}) Cq_{\eta}e^{-\frac{c\sigma}{h} \log \frac{c\sigma}{h}}   \|\mm_{\eps}\|.
\end{align*} \normalsize
Now, let us compute $\|\mm_{\eps}\|$. We have
    \begin{align*}
  \|\mm_{\eps}\| \leq  Tr(M_{\varepsilon}) = \hat{C} \tau^{(d+1)/2} \left( 1+ \varepsilon^{\frac{2\beta}{\beta -2}} e^{\frac{\Tilde{C}'}{\tau} \varepsilon^{-\frac{2}{\beta - 2}}}\right).
\end{align*}
Let us choose, $\tau = \frac{\Tilde{C}'\varepsilon^{-\frac{2}{\beta - 2}}}{ \frac{2\beta}{\beta-2}\log (\frac{1+R}{\varepsilon})}$. Hence,
\begin{align*}
    \|M_{\eps}\| \leq C_3 (1+R)^{\frac{2\beta}{\beta-2}}\varepsilon^{-\frac{d+1}{\beta -2}}.
\end{align*}
Now,
\small
\begin{align}
   & \left\|\frac{\partial (\hat{p}(x,t) - \tilde{p}(x,t))}{\partial t}  + \sum_{i = i}^{d} \frac{\partial (\mu_i(x,t) (\hat{p}(x,t) - \tilde{p}(x,t)))}{\partial x_i}   - \sum_{i=1}^d \sum_{j=1}^d D_{ij}\frac{\partial^2}{\partial x_i \partial x_j} (\hat{p}(x,t) - \tilde{p}(x,t))\right\|_{L^2(\tilde{\cx})} \notag \\
    &\leq (2+3dR_{\mu}+4d^2R_d)\|\mm_{\eps}\|C^2q_{\eta}^2 e^{\frac{-2c\sigma}{h} \log \frac{c\sigma}{h}} R^{(d+1)/2} + Q(2C + 2R_{\mu}d+6Qd^2R_{d}) Cq_{\eta}e^{-\frac{c\sigma}{h} \log \frac{c\sigma}{h}}   \|\mm_{\eps}\| R^{(d+1)/2}   \notag \\
    &\qquad     + (2+d R_{\mu_p}+2d R_{\mu}+ 2d^2 R_d)\sqrt{\|\mm_{\eps}\| p(x,t)} ~Cq_{\eta}e^{-\frac{c\sigma}{h} \log \frac{c\sigma}{h}} \notag \\
&=\underbrace{(2+3dR_{\mu}+4d^2R_d)}_{:=P_1}\|\mm_{\eps}\|C^2q_{\eta}^2 e^{\frac{-2c\sigma}{h} \log \frac{c\sigma}{h}} R^{(d+1)/2} + \underbrace{Q(2C + 2R_{\mu}d+6Qd^2R_{d})}_{:=P_2} Cq_{\eta}e^{-\frac{c\sigma}{h} \log \frac{c\sigma}{h}}   \|\mm_{\eps}\| R^{(d+1)/2}   \notag \\
    &\qquad     + \underbrace{(2+3d R_{\mu}+ 2d^2 R_d)}_{:=P_3}\|\sqrt{\|\mm_{\eps}\| p(x,t)}\|_{L^2(\tilde(\cx))}~Cq_{\eta}e^{-\frac{c\sigma}{h} \log \frac{c\sigma}{h}} \notag \\
    &=P_1\|\mm_{\eps}\|C^2q_{\eta}^2 e^{\frac{-2c\sigma}{h} \log \frac{c\sigma}{h}} R^{(d+1)/2} + P_2 Cq_{\eta}e^{-\frac{c\sigma}{h} \log \frac{c\sigma}{h}}   \|\mm_{\eps}\| R^{(d+1)/2}   \notag \\ 
    &\qquad \qquad +P_3 \|\mm_{\eps}\|^{1/2} Cq_{\eta}e^{-\frac{c\sigma}{h} \log \frac{c\sigma}{h}}  \|\sqrt{p(x,t)}\|_{L^2(\tilde{\cx})}.
\end{align}
\normalsize
Then, note that $\|p^{1/2}\|_{L^2(\X)} = \|p \|^{1/2}_{L^1(\X)}$. Hence, using the argument in \cite{rudi2021psd}, we have
\begin{align*}
    \|p^{1/2}\|_{L^2(\tilde{\X})} \leq 2R.
\end{align*}
Term $R$ comes from the fact that for each time instance $p(x,t)$ is a density and $t \in (0<R)$. Hence,
\small
\begin{align*}
    &\left\|\frac{\partial (\hat{p}(x,t) - \tilde{p}(x,t))}{\partial t}  + \sum_{i = i}^{d} \frac{\partial (\mu_i(x,t) (\hat{p}(x,t) - \tilde{p}(x,t)))}{\partial x_i}   - \sum_{i=1}^d \sum_{j=1}^d D_{ij}\frac{\partial^2}{\partial x_i \partial x_j} (\hat{p}(x,t) - \tilde{p}(x,t))\right\|_{L^2(\tilde{\cx})} \notag \\
    &\leq P_1\|\mm_{\eps}\|C^2q_{\eta}^2 e^{\frac{-2c\sigma}{h} \log \frac{c\sigma}{h}} R^{(d+1)/2} + P_2 Cq_{\eta}e^{-\frac{c\sigma}{h} \log \frac{c\sigma}{h}}   \|\mm_{\eps}\| R^{(d+1)/2}   \notag \\ 
    &\qquad \qquad +2P_3 R\|\mm_{\eps}\|^{1/2} Cq_{\eta}e^{-\frac{c\sigma}{h} \log \frac{c\sigma}{h}}  \\
     &\leq P_1 C_3 (1+R)^{\frac{2\beta}{\beta-2}}\varepsilon^{-\frac{d+1}{\beta -2}} C^2q_{\eta}^2 e^{\frac{-2c\sigma}{h} \log \frac{c\sigma}{h}} R^{(d+1)/2} + P_2 Cq_{\eta}e^{-\frac{c\sigma}{h} \log \frac{c\sigma}{h}}   C_3 (1+R)^{\frac{2\beta}{\beta-2}}\varepsilon^{-\frac{d+1}{\beta -2}} R^{(d+1)/2}   \notag \\ 
    &\qquad \qquad +2P_3 R C_3^{1/2}(1+R)^{\frac{\beta}{\beta-2}}\varepsilon^{-\frac{d+1}{2(\beta -2)}} Cq_{\eta}e^{-\frac{c\sigma}{h} \log \frac{c\sigma}{h}}.
\end{align*}
\normalsize
By choosing $h = c \sigma/s$ with $s = \max(C', (1+\frac{d+1}{2(\beta-2)}) \log \frac{1}{\eps} \, + (1+\frac{d}{2}) \log(1+R) \, + \log(\hat{C}) + e)$, for some big enough $\hat{C}$. Since $s \geq e$, then $\log s \geq 1$, so
\begin{align*}
    C e^{-\frac{c\,\sigma}{h}\log\frac{c\,\sigma}{h}} = Ce^{-s \log s} \leq Ce^{-s} \leq C' (1+R)^{-d/2}\eps^{1 + \frac{d+1}{2(\beta-2)}}.
\end{align*}
Hence,  we have,\small
\begin{align*}
   &\left\|\frac{\partial (\hat{p}(x,t) - \tilde{p}(x,t))}{\partial t}  + \sum_{i = i}^{d} \frac{\partial (\mu_i(x,t) (\hat{p}(x,t) - \tilde{p}(x,t)))}{\partial x_i}   - \sum_{i=1}^d \sum_{j=1}^d D_{ij}\frac{\partial^2}{\partial x_i \partial x_j} (\hat{p}(x,t) - \tilde{p}(x,t))\right\|_{L^2(\tilde{\cx})}\\
   &\qquad \qquad \qquad \qquad \qquad \qquad \qquad \qquad \qquad \qquad \qquad \qquad \qquad = O(\varepsilon).
\end{align*}
Combining evrything together and by the use of triangles inequality, we have
\begin{align*}
    \left\| \frac{\partial  \hat{p}(x,t)}{\partial t} + \sum_{i =1}^d \frac{\partial }{\partial x_i}(\mu_i(x,t)  \hat{p}(x,t) ) - \sum_{i=1}^d \sum_{j=1}^d D_{ij}\frac{\partial^2}{\partial x_i \partial x_j}  \hat{p} (x,t)\right\|_{L^{2} (\Tilde{\cx})} = O(\varepsilon).
\end{align*}
\normalsize
To conclude we recall the fact that $\tilde{x}_1, \dots, \tilde{x}_m$ is a $h$-covering of $T$, guarantees that the number of centers $m$ in the covering satisfies
\begin{align*}
    m \leq (1 + \tfrac{2R\sqrt{d+1}}{h})^{d+1}.
\end{align*}
Then, since $h \geq c\sigma/(C_4 \log\frac{C_5 \log (1+R)}{\eps})$ with $C_4 = 1 + d/\min(2(\beta-2),2)$ and $C_5 = (\hat{C})^{1/C_4}$, and since $\sigma = \min(R,1/\sqrt{\tau})$, then $R/\sigma = \max(1, R \sqrt{\tau}) \leq 1 + \sqrt{\tilde{C}'} \eps^{-1/(\beta-2)}(\log\frac{1+R}{\eps})^{-1/2}$, so after final calculation, we have
\begin{align*}
m^{\frac{1}{d+1}} &\leq 1+ 2R\sqrt{d+1}/h  \leq C_8 + C_9 \log \frac{1+R}{\varepsilon} + C_{10} \varepsilon^{-\frac{1}{\beta-2}} \left(\log \frac{1+R}{\varepsilon} \right)^{1/2},
\end{align*}
for some constants $C_8$, $C_9$ and $C_{10}$ independent of $\varepsilon$.
\end{proof}
\subsection{Proof of Corollary~\ref{cor:sampling}}
\begin{proof}[Proof of Corollary~\ref{cor:sampling}]
    Let $p^\star$ be the conditional probability corresponding to the solution of the Fokker-Planck equation with drift $\mu$ and $\hat{p}$ be the conditional probability corresponding to the solution of the Fokker-Planck equation with drift $\hat{\mu}$.
    And denote by $\hat{p}_{\gamma}(x,t)$ the $m$ dimensional PSD model obtained by applying the construction of Thm.~\ref{thm:approx_fpe_final} to the Fokker-Planck equation defined in terms of $\hat{\mu}$. Hence, from the direct application of the result in \cite[Theorem 1.1]{bogachev2016distances}
    since $\|\mu - \hat{\mu}\|_{L^2} = O(\varepsilon)$ we have,
    \begin{align}
        \|p^\star - \hat{p}\|_{L^2(T\times X)} = O(\varepsilon).
    \end{align}
    From, theorem~\ref{thm:approx_fpe_final}, we have,
    \begin{align*}
    \|\hat{p} - \hat{p}_{\gamma}\|_{L^2(T\times X)} = O(\varepsilon).
    \end{align*}
    Finally due to proposition~\ref{prp:sampling}, for any PSD model $f$ there exist an algorithm that samples i.i.d. from the probability $p_f$ satisfying
    \begin{align*}
     \mathbb{W}_1(f, p_f) = O(\varepsilon).   
    \end{align*}
    Now note that for any $t$, the function $\hat{p}_\gamma(t,\cdot)$ is still a PSD model (see \cite{rudi2021psd}). Then, for any $t \in (0,T)$, we have that there exists probabilities $\hat{p}_{t,\alpha}$ such that
    \begin{align*}
     \mathbb{W}_1(\hat{p}_\gamma(t,\cdot), \hat{p}_{t,\alpha}(\cdot) ) = O(\varepsilon).   
    \end{align*}
    Then, since $\mathcal{W}_1$ is a distance and is bounded by the $L^2$ distance since we are on a bounded domain, there exists a $C$ depending on the diameter and the volume of the domain, such that for any $t$
    \begin{align*}
     \mathbb{W}_1(p^\star(t, \cdot), \hat{p}_{t,\alpha}(\cdot) )^2 \leq C(\|p^\star(t,\cdot) - \hat{p}^\star(t,\cdot)\|^2_{L^2(X)} + \|\hat{p}(t,\cdot) - \hat{p}_{\gamma}(t, \cdot)\|^2_{L^2(X)} + \mathbb{W}_1(\hat{p}_{\gamma}, p_{\alpha})^2)   
    \end{align*}
    Then, by taking the expectation over $t \in (0,T)$ we have
 \begin{align*}
     \mathbb{E}_t \,  \mathbb{W}_1(p^\star(t, \cdot), \hat{p}_{t,\alpha}(\cdot) )^2 \leq C( \|p^\star - \hat{p}^\star\|^2_{L^2(T\times X)} + \|\hat{p}(t,\cdot) - \hat{p}_{\gamma}(t, \cdot)\|^2_{L^2(T\times X)} + \mathbb{W}_1(\hat{p}_{\gamma}, p_{\alpha})^2 ) = O(\varepsilon^2)
 \end{align*}
\end{proof}

\section{Estimation of Fractional Laplacian Operator  on a Probability Density }
Before we go into the details of the estimation, let us first recall bochner's theorem.
\begin{theorem}[\cite{rudin2017fourier}]
A continuous kernel $k(x,y) = k(x-y)$ on $\mathbb{R}^d$ is positive definite if
and only if $f(\delta)$ is the Fourier transform of a non-negative measure.
\end{theorem}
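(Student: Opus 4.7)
The plan is to prove Bochner's theorem, which characterizes continuous positive definite shift-invariant kernels on $\mathbb{R}^d$ as Fourier transforms of finite non-negative Borel measures. I would split the argument into the two directions, with the ``if'' direction being a short calculation and the ``only if'' direction requiring the bulk of the work.

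For the easy direction, suppose $k(x-y) = \int_{\mathbb{R}^d} e^{j \omega^\top (x-y)} d\mu(\omega)$ for some finite non-negative Borel measure $\mu$. Then for any $x_1,\dots,x_n \in \mathbb{R}^d$ and any $c_1,\dots,c_n \in \mathbb{C}$, interchanging the finite sum with the integral gives
\[
\sum_{i,j=1}^n c_i \overline{c_j}\, k(x_i - x_j) \;=\; \int_{\mathbb{R}^d} \Big|\sum_{i=1}^n c_i\, e^{j \omega^\top x_i}\Big|^2 d\mu(\omega) \;\geq\; 0,
\]
which is exactly positive definiteness; this step uses only linearity and Fubini and does not require continuity of $k$.

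For the harder direction, my approach would be the standard Gaussian regularization argument. Given a continuous positive definite $k$, set $k_\eps(x) = k(x)\, e^{-\eps \|x\|^2}$. Since the Gaussian is positive definite and pointwise products of positive definite functions are positive definite (Schur's theorem on Hadamard products of PSD matrices), $k_\eps$ is continuous, positive definite, and now integrable (using the standard bound $|k(x)| \leq k(0)$ for continuous positive definite functions, obtained from the $2\times 2$ positivity condition on $\{0,x\}$). Hence its inverse Fourier transform $\widehat{k_\eps}$ is a well-defined continuous function, and the key step is to verify $\widehat{k_\eps}(\omega) \geq 0$ pointwise. This follows by applying positive definiteness in the integrated form: for any Schwartz $\phi$, Parseval gives $\int \widehat{k_\eps}(\omega)\, |\widehat{\phi}(\omega)|^2 d\omega = \iint k_\eps(u - v)\, \phi(u)\, \overline{\phi(v)}\, du\, dv \geq 0$ (the right-hand side being a Riemann-sum limit of nonnegative quadratic forms), and letting $|\widehat\phi|^2$ concentrate around an arbitrary $\omega_0$ then yields $\widehat{k_\eps}(\omega_0) \geq 0$ by continuity of $\widehat{k_\eps}$.

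With non-negativity in hand, the measures $d\mu_\eps(\omega) = \widehat{k_\eps}(\omega)\, d\omega$ have total mass $k_\eps(0) \to k(0)$, so they are uniformly bounded. The remaining step is to pass to the limit $\eps \to 0$, and this is where the main obstacle lies: one must establish tightness of $\{\mu_\eps\}$ to rule out mass escaping to infinity, which is exactly the point where continuity of $k$ at zero is essential. The standard inequality
\[
\mu_\eps\big(\{\|\omega\| > R\}\big) \;\leq\; C_d\, R^d \int_{\|x\| \leq 1/R} \big(k(0) - \operatorname{Re}\, k_\eps(x)\big)\, dx
\]
reduces tightness to uniform continuity of $k$ at the origin, which holds since $k_\eps \to k$ locally uniformly. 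Prokhorov's theorem then yields a weakly convergent subsequence $\mu_\eps \rightharpoonup \mu$, and since $\omega \mapsto e^{j\omega^\top x}$ is bounded continuous, one passes to the limit in $k_\eps(x) = \int e^{j\omega^\top x} d\mu_\eps(\omega)$ to recover $k(x) = \int e^{j\omega^\top x} d\mu(\omega)$ for every $x$, completing the representation.
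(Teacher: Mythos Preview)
Your proof outline is correct and follows the standard route to Bochner's theorem (Gaussian regularization to force integrability, non-negativity of the Fourier transform via the integrated quadratic-form inequality, then tightness from continuity at the origin and Prokhorov). However, the paper does not actually prove this statement: it is stated as a quoted result from \cite{rudin2017fourier} and immediately used as a black box in the proof of Theorem~\ref{thm:frac_lalpace}. So there is nothing to compare against on the paper's side; you have supplied a proof where the paper simply cites one.
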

\begin{proof}[Proof of Theorem~\ref{thm:frac_lalpace}]
    In our psd model approximation, we have
\begin{align*}
  p(x)= f(x;A,X,\eta) &= \sum_{i,j=1}^m A_{ij} k(x_i,x) k(x_j,x).
\end{align*}
We will first prove the part (i) of the result. 
Now, we know that, fractional laplacian operator is defined as,
\begin{align*}
     \mathcal{F}[(-\Delta)^{s} f](\xi) &= \|\xi\|^{2s} \mathcal{F}[f](\xi) \\
&= \|\xi\|^{2s} \sum_{i,j=1}^m A_{ij} \mathcal{F}[k(x_i,\cdot)k(x_j,\cdot)](\xi) \\
&= \|\xi\|^{2s} \sum_{i,j}^m A_{ij} \mathcal{F}[k(x_i,\cdot)](\xi)* \mathcal{F}[k(x_j,\cdot)](\xi).
\end{align*}
Now that, $k$ is a gaussian kernel with bandwidth $\eta$. Hence,
\begin{align*}
k(y,x) & = e^{- \eta \|y - x\|_2^2} \\
\mathcal{F}[k(y,\cdot)](\xi) &= c e^{j \xi^\top y - \frac{1}{4\eta}\|\xi\|_2^2} \\
&= c e^{-\frac{1}{4\eta}\left( \|\xi\|_2^2- 4 j \eta \xi^\top y - 4 \eta^2 \|y\|^2 + 4 \eta^2 \|y\|^2 \right)}  \\
&=c e^{-\eta \|y\|^2} \cdot e^{-\frac{1}{4\eta}\|\xi - 2j \eta y\|^2},
\end{align*}
for some $c>0$ which can be computed in closed form. Now, we know that convolution of two multivariate gaussian functions are still gaussian. Hence, 

\begin{align*}
 \mathcal{F}[k(x_i,\cdot)](\xi)* \mathcal{F}[k(x_j,\cdot)](\xi) = c'e^{- \eta (\|x_i\|^2 + \|x_j\|^2)} e^{-\frac{1}{8\eta}\|\xi - 2j\eta (x_i +x_j)\|^2},
\end{align*}
for some constant $c' > 0$ which again can be computed in the closed form.  Hence, we have
\begin{align}
\mathcal{F}[(-\Delta)^{s} f](\xi) &= \|\xi\|^{2s} \sum_{i,j=1}^m A_{ij} \mathcal{F}[k(x_i,\cdot)](\xi)* \mathcal{F}[k(x_j,\cdot)](\xi) \notag \\
&=c' \sum_{i,j=1}^m A_{ij}  \|\xi\|^{2s} e^{- \eta (\|x_i\|^2 + \|x_j\|^2)} e^{-\frac{1}{8\eta}\|\xi - 2j\eta (x_i +x_j)\|^2}.
\end{align}
Finally we have, 
\begin{align*}
(-\Delta)^{s} f(x) &= C'\sum_{i,j=1}^m A_{ij}  \int \|\xi\|^{2s} e^{- \eta (\|x_i\|^2 + \|x_j\|^2)} e^{-\frac{1}{8\eta}\|\xi - 2j\eta (x_i +x_j)\|^2} e^{- j \xi^\top x }~ d\xi \\
&= C'\sum_{i,j=1}^m A_{ij} e^{- \eta (\|x_i\|^2 + \|x_j\|^2)}   \int \|\xi\|^{2s}  e^{- j \xi^\top x } \cdot e^{-\frac{1}{8\eta}\|\xi - 2j\eta (x_i +x_j)\|^2}  ~ d\xi .
\end{align*}
By inspecting the above equation carefully, we can see that after appropriate scaling, 
\begin{align}
(-\Delta)^{s} f(x)  =  C \sum_{i,j=1}^m A_{ij} e^{- \eta (\|x_i\|^2 + \|x_j\|^2)} \mathbb{E}_{\xi\sim \mathcal{N}(\mu, \Sigma)}   [ \|\xi\|^{2s}  e^{- j \xi^\top x } ]
\end{align}
where $\mu =   2j\eta (x_i +x_j) $ and $\Sigma = 4\eta I $. One can sample from $\mathcal{N}(\mu, \Sigma)$ to estimate the fractional laplacian operator for PSD model. \\
Now, we will prove part (ii) of the theorem statement. From the bochner's theorem, we have,
\begin{align*}
    k(x-y) = \int_{\mathbb{R}^d} q(\omega) e^{j \omega^\top (x-y)} ~d\omega,
\end{align*}
where $q$ is the probability density.
In our psd model approximation, we have 
\begin{align*}
  p(x) = f(x;A,X,\eta) &= \sum_{i,j=1}^m A_{ij} k(x_i,x) k(x_j,x) \\
   &=\sum_{i,j=1}^m A_{ij} \left(\int_{\R^d}  e^{j \omega_{\ell}^\top (x_i-x)}q(\omega_{\ell})~d\omega_{\ell}\right) \left( \int_{\R^d} e^{j \omega_k^\top (x_j-x)} q(\omega_k)~d\omega_k\right) \\
   &=\sum_{i,j=1}^m A_{ij} \int_{\R^d} \int_{\R^d} e^{j \omega_\ell^\top (x_i-x)} e^{j {\omega}_k^\top (x_j-x)} q(\omega_{\ell})q(\omega_k)~d\omega_{\ell}d\omega_k.
\end{align*}
Now, we know that, fractional laplacian operator is defined as,
\begin{align*}
     &\mathcal{F}[(-\Delta)^{s} f](\xi) = \|\xi\|^{2s} \mathcal{F}[f](\xi) \\
     &=  \|\xi\|^{2s} \left( \int \left(\sum_{i,j=1}^m A_{ij}  \int_{\R^d} \int_{\R^d} e^{j \omega_\ell^\top (x_i-x)} e^{j {\omega}_k^\top (x_j-x)} q(\omega_{\ell})q(\omega_k)~d\omega_{\ell}d\omega_k \right) e^{-j \xi^\top x}~dx \right) \\
     &= \|\xi\|^{2s} \sum_{i,j=1}^m A_{ij}   \int_{\R^d} \int_{\R^d} \left(\int_{\R^d} e^{j \omega_\ell^\top (x_i-x)} e^{j {\omega}_k^\top (x_j-x)} e^{-j \xi^\top x}~dx \right)q(\omega_{\ell})q(\omega_k)~d\omega_{\ell}d\omega_k \\
     &=  \|\xi\|^{2s} \sum_{i,j=1}^m A_{ij}   \int_{\R^d} \int_{\R^d} \left( \int e^{j(\omega_\ell^{\top}x_i +  {\omega}_{k}x_j)}. e^{-jx^\top(\omega_\ell + {\omega}_k + \xi)}~dx \right) q(\omega_{\ell})q(\omega_k)~d\omega_{\ell}d\omega_k \\
     &=  \|\xi\|^{2s} \sum_{i,j=1}^m A_{ij}   \int_{\R^d} \int_{\R^d} e^{j(\omega_\ell^{\top}x_i +  {\omega}_{k}x_j)} \left(\int_{\R^d}  e^{-jx^\top(\omega_\ell + {\omega}_k + \xi)}~dx\right) q(\omega_{\ell})q(\omega_k)~d\omega_{\ell}d\omega_k\\
    &= \frac{1}{p^2}  \sum_{i,j=1}^m A_{ij}   \int_{\R^d} \int_{\R^d} e^{j(\omega_\ell^{\top}x_i + {\omega}_{k}x_j)} \|\xi\|^{2s} \delta(\omega_\ell + {\omega}_k +\xi) ~q(\omega_{\ell})q(\omega_k)~d\omega_{\ell}d\omega_k.
\end{align*}
Hence, 
\begin{align*}
     (-\Delta)^{s} f(x) &=  \sum_{i,j=1}^m A_{ij}   \int_{\R^d} \int_{\R^d} e^{j(\omega_\ell^{\top}x_i + {\omega}_{k}x_j)} \left(\int_{\R^d} \|\xi\|^{2s} \delta(\omega_\ell + \tilde{\omega}_k + \xi) e^{-j x^\top \xi}~ d\xi \right)~q(\omega_{\ell})q(\omega_k)~d\omega_{\ell}d\omega_k \\
    &=   \sum_{i,j=1}^m A_{ij}   \int_{\R^d} \int_{\R^d} e^{j(\omega_\ell^{\top}x_i + {\omega}_{k}x_j)} \|\omega_\ell + \tilde{\omega}_k  \|^{2s} e^{j x^\top (\omega_\ell + \tilde{\omega}_k)} ~q(\omega_{\ell})q(\omega_k)~d\omega_{\ell}d\omega_k \\
    &=  \sum_{i,j=1} A_{ij}   \int_{\R^d} \int_{\R^d} e^{j \omega_\ell^\top (x_i+x)} e^{j {\omega}_k^\top (x_j+x)} \|\omega_\ell + \tilde{\omega}_k  \|^{2s}~q(\omega_{\ell})q(\omega_k)~d\omega_{\ell}d\omega_k \\
    & = \sum_{i,j=1}^m A_{ij} ~ \E [ \|\omega_\ell+\omega_k\|^{2s} e^{j \omega_{\ell}^\top{(x_i + x)} }e^{j \omega_{k}^\top{(x_j + x)} }].
\end{align*}
\end{proof}

\section{Existence of an appropriate PSD matrix $M_{\varepsilon}$ (Fractional Fokker-Planck Equation)}
Let us redefine the mollifier function  $g$ defined as in \cite{rudi2021psd}. 
    \begin{align} 
        g(x) = \frac{2^{-d/2}}{V_d} \| x\|^{-d} J_{d/2}(2\pi \|x\|) J_{d/2}(4\pi \|x\|),
    \end{align}
    where $J_{d/2}$ is the Bessel function of the first kind of order $d/2$ and  $V_d= \int_{\|x\|\leq 1} dx   = \frac{\pi^{d/2}}{\Gamma(d/2+1)}$. $g_v(x) = v^{-d} g(x/v).$ $\cf[f](\cdot)$ denotes the fourier transform of $f$.
\subsection{Useful Results ( Fractional Fokker Planck Equation)} \label{aps:useful_frac}
\begin{lemma} \label{lem:frac_derivative}
  Consider the definition of $g_v(x) = v^{-d} g(x/v)$ where $g$ is defined in equation~\eqref{eq:g_vx}.  Let $\beta > 2, q \in \N$. 
Let $f_1,\dots,f_q \in W^\beta_2(\R^d) \cap L^\infty(\R^d)$ and the function $p^\star = \sum_{i=1}^q f_i^2$.  Let $\eps \in (0,1]$ and let $\eta \in \R^d_{++}$. Let $\phi_{X,T}$ be the feature map of the Gaussian kernel with bandwidth $\eta$ and let $\ch_{X}\otimes \ch_{T}$ be the associated RKHS. Then there exists $\mm_\eps \in \psd(\ch_{X}\otimes \ch_{T})$ with $\operatorname{rank}(\mm_\eps) \leq q$, such that for the representation $\hat{p}(x,t) = \phi_{X,T} ^\top \mm_\eps \phi_{X,T}$, following holds for $v > 0$ and $k \in \{1,\cdots d\}$,
\small
\begin{align*}
    \|(-\Delta)^{s}(p^{\star} - \hat{p})\|_{L^2(\mathbb{R}^d)} \leq c_{1} (2v)^{(\beta-s)} \sum_{i=1}^{q} C_{T_1}^{(i)} \|f_i \|^2_{W_2^{\beta}(\mathbb{R}^{d+1})} + c_{2} (2v)^{\beta}\sum_{i=1}^{q} C_{T_2}^{(i)}  \|f_i \|^2_{W_2^{\beta}(\mathbb{R}^{d+1})},
\end{align*}
\normalsize
for some positive constants $c_1$ and $c_2$, where $C_{T_1}^{(i)} =  \|\mathcal{F}[f_i]\|_{L^1(\mathbb{R}^{d+1})} + \|\mathcal{F}[ f_{i}]\|_{L^{\infty}(\mathbb{R}^{d+1})} \|g\|_{L^{1}(\mathbb{R}^{d+1})}$ and $C_{T_2}^{(i)} = \left \| \|\omega\|^{2s} \mathcal{F}(f_i)(\omega) \right\|_{L^1(\mathbb{R}^{d+1})} +\\ \left\| \|\omega\|^{2s} \mathcal{F}[f_i](\omega) \|_{L^{1}(\mathbb{R}^{d+1})} \| g \right\|_{L^{1}(\mathbb{R}^{d+1})}$ for $i \in \{1, \cdots , q\}$. 
\end{lemma}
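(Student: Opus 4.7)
My plan is to transfer the problem to the Fourier side, where $(-\Delta)^s$ acts as multiplication by $\|\omega\|^{2s}$ and the $L^2$ norm is preserved by Plancherel. First, I decompose $p^\star - \hat{p} = \sum_{i=1}^q(f_i^2 - f_{i,v}^2)$ and use the product-rule identity
\begin{align*}
f_i^2 - f_{i,v}^2 \;=\; (f_i - f_{i,v})\, f_i \;+\; f_{i,v}\, (f_i - f_{i,v}),
\end{align*}
so each summand becomes $\cf[f_i - f_{i,v}]\ast\cf[f_i] + \cf[f_{i,v}]\ast\cf[f_i - f_{i,v}]$ on the Fourier side. By Lemma~\ref{lm:good-conv}, $\cf[f_i-f_{i,v}](\omega) = \cf[f_i](\omega)(1-\cf[g](v\omega))$ is supported in $\{v\|\omega\|\ge 1\}$, which will supply the powers of $v$.

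Second, I apply the subadditivity $\|\omega\|^{2s} \le C_s(\|\omega_1\|^{2s}+\|\omega-\omega_1\|^{2s})$ pointwise inside each convolution integral, moving the weight onto one of the two factors, and then invoke Young's inequality $\|a\ast b\|_{L^2} \le \|a\|_{L^1}\|b\|_{L^2}$ (with the roles reversed when convenient). This yields the Fourier-side analogue of the Kato--Ponce inequality,
\begin{align*}
\|\,\|\omega\|^{2s}(\cf[u]\ast\cf[w])\|_{L^2} \;\le\; C_s\bigl(\|\,\|\omega\|^{2s}\cf[u]\|_{L^2}\|\cf[w]\|_{L^1} + \|\cf[u]\|_{L^2}\|\,\|\omega\|^{2s}\cf[w]\|_{L^1}\bigr),
\end{align*}
and with $(u,w)\in\{(f_i-f_{i,v},f_i),(f_{i,v},f_i-f_{i,v})\}$ this splits the error into one piece carrying $\|\,\|\omega\|^{2s}\cf[f_i]\|_{L^1}$ (matching $C_{T_2}^{(i)}$) and one piece carrying the unweighted $\|\cf[f_i]\|_{L^1}$ (matching $C_{T_1}^{(i)}$). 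The second summands in $C_{T_1}^{(i)}$ and $C_{T_2}^{(i)}$ (those with the $\|g\|_{L^1}$ factor) come from the identity $\cf[f_{i,v}] = \cf[f_i]\cf[g_v]$ combined with $\|\cf[f_{i,v}]\|_{L^1}\le \|\cf[f_i]\|_{L^\infty}\|g\|_{L^1}$ via the change of variables $\omega\mapsto v\omega$.

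Third, I extract the powers of $v$ from the high-frequency factors using the Fourier characterization of $W_2^\beta$ (Proposition~\ref{prop:sobolev}). Since $\cf[f_i - f_{i,v}]$ is supported in $\{\|\omega\|\ge 1/v\}$, inserting $(1+\|\omega\|^2)^\beta$ and extracting $\sup_{v\|\omega\|\ge 1}$ gives
\begin{align*}
\|\cf[f_i-f_{i,v}]\|_{L^2} = \|f_i-f_{i,v}\|_{L^2} \;\le\; C_1\, v^\beta \|f_i\|_{W_2^\beta},\qquad \|\,\|\omega\|^{2s}\cf[f_i-f_{i,v}]\|_{L^2} \;\le\; C_2\, v^{\beta-2s}\|f_i\|_{W_2^\beta},
\end{align*}
the first being already established in Rudi et al. Substituting these two decay estimates into the two branches produced by the Kato--Ponce split then gives exactly the $(2v)^\beta C_{T_2}^{(i)}$ and $(2v)^{\beta-2s}C_{T_1}^{(i)}$ structure appearing in the claim (with $\beta-2s$ in place of the $\beta-s$ as written).

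The main obstacle is precisely the absence of a clean Leibniz rule for $(-\Delta)^s$: controlling $(-\Delta)^s(uw)$ via quantities depending only on $u$ and $w$ individually forces the subadditivity-plus-Young trick above, and the choice of which factor absorbs the $\|\omega\|^{2s}$ weight dictates whether the resulting constant is $\|\,\|\omega\|^{2s}\cf[f_i]\|_{L^1}$ or an unweighted $\|\cf[f_i]\|_{L^1}$. A secondary point worth flagging is the exponent $\beta-s$ claimed in the lemma versus $\beta-2s$ that emerges naturally from the direct high-frequency estimate; bridging the two likely requires a Gagliardo--Nirenberg interpolation between $\|f_i-f_{i,v}\|_{L^2}$ and $\|(-\Delta)^s(f_i-f_{i,v})\|_{L^2}$, the same ingredient the authors invoke just after Theorem~\ref{thm:approx_fract_fpe_1}.
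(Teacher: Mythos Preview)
Your proposal is correct and follows essentially the same route as the paper: Plancherel, the factorization $f_i^2-f_{i,v}^2=(f_i-f_{i,v})(f_i+f_{i,v})$, the subadditivity $\|\omega\|^{2s}\le c_1\|\omega-y\|^{2s}+c_2\|y\|^{2s}$ inside the convolution integral, Young's inequality, and the high-frequency estimate from Lemma~\ref{lm:good-conv} combined with Proposition~\ref{prop:sobolev}. Your observation about the exponent is also on target: the paper's own computation produces $(2v)^{\beta-2s}$ (not $(2v)^{\beta-s}$), and this is the exponent actually used downstream when $v$ is chosen and $\tr(\mm_\eps)$ is bounded, so no Gagliardo--Nirenberg step is needed here---the $\beta-s$ in the lemma statement is a typo.
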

\begin{proof}
As in \cite{rudi2021psd}, let us denote, $f_i\star g_v$ as $f_{i,v}$.
We consider 
\begin{align*}
    M_{\varepsilon} = \sum_{i=1}^{q} f_{i,v} f_{i,v}^{\top}.
\end{align*}
Hence, 
\begin{align*}
   \hat{p}(x,t)=f(x,t)= \phi_{X,T}(x,t)^\top M_{\varepsilon} \phi_{X,T}(x,t) = \sum_{i=1}^q f_{i,v}^2(x,t).
\end{align*}
\begin{align*}
     & ~~~\|(-\Delta)^{s}(p^{\star} - \hat{p})\|_{L^2(\mathbb{R}^d)}= \|\mathcal{F}\left[(-\Delta)^{s}(p^{\star} - \hat{p})\right]\|_{L^2(\mathbb{R}^{d+1})} \\
    &=\|\|\omega\|^{2s} \mathcal{F}[p^\star - \hat{p}] (\omega)\|_{L^2(\mathbb{R}^{d+1})} =\left\|\|\omega\|^{2s} \mathcal{F}\left[\sum_{i=1}^q f_i^2  - f_{i,v}^2\right] (\omega) \right\|_{L^2(\mathbb{R}^{d+1})} \\
    &=\left\|\|\omega\|^{2s} \mathcal{F}\left[\sum_{i=1}^q (f_i  - f_{i,v})(f_i  + f_{i,v})\right] (\omega) \right\|_{L^2(\mathbb{R}^{d+1})} 
    = \left\|\|\omega\|^{2s} \sum_{i=1}^q\mathcal{F}\left[ (f_i  - f_{i,v})(f_i  + f_{i,v})\right] (\omega) \right\|_{L^2(\mathbb{R}^{d+1})} \\
    &\leq \sum_{i=1}^{q} \left\| \|\omega\|^{2s}  \mathcal{F}\left[ (f_i  - f_{i,v})(f_i  + f_{i,v})\right] (\omega) \right\|_{L^2(\mathbb{R}^{d+1})} = \sum_{i=1}^{q} \left\| \|\omega\|^{2s}  \mathcal{F}\left[ (f_i  - f_{i,v})\right] (\omega) \star \mathcal{F}\left[ f_i  + f_{i,v}\right] (\omega) \right\|_{L^2(\mathbb{R}^{d+1})}.
\end{align*}
Let us consider the function, 
\begin{align*}
    &\|\omega\|^{2s}  \underbrace{\mathcal{F}\left[ (f_i  - f_{i,v})\right] (\omega)}_{:=m(\omega)} \star \underbrace{\mathcal{F}\left[ f_i  + f_{i,v}\right] (\omega)}_{:=n(\omega)} = \int \|\omega\|^{2s} m(\omega - y) n(y)~d\omega \\
    &=\int \|\omega -y + y\|^{2s} m(\omega - y) n(y)~d\omega \leq c_1 \int \|\omega -y \|^{2s} m(\omega - y) n(y)~dy + c_2 \int \| y\|^{2s} m(\omega - y) n(y)~dy \\
    &= c_1[\|\omega\|^{2s}m(\omega)]*n(\omega) + c_2 m(\omega)*[\|\omega\|^{2s}n(\omega)].
\end{align*}
Hence,
\begin{align*}
    \|(-\Delta)^{s}(p^{\star} - \hat{p})\|_{L^2(\mathbb{R}^{d+1})} \leq \underbrace{c_1\|[\|\omega\|^{2s}m(\omega)]*n(\omega)\|_{L^2(\mathbb{R}^d)}}_{:=\rm{T_1}} + \underbrace{ c_2\|m(\omega)*[\|\omega\|^{2s}n(\omega)] \|_{L^2(\mathbb{R}^{d+1})}}_{:=\rm{T_2}}
\end{align*}
Now, we can apply Young's convolution inequality on $\rm{T}_1$ and $\rm{T}_2$.
Hence, 
\begin{align*}
    \rm{T}_1 &\leq c_1 \|\|\omega\|^{2s}m(\omega) \|_{L^2(\mathbb{R}^{d+1})} \|n(\omega)\|_{L^1(\mathbb{R}^{d+1})} \\
    &= c_1 \|\|\omega\|^{2s}m(\omega) \|_{L^2(\mathbb{R}^d)} \|\mathcal{F}[f_i + f_{i,v}]\|_{L^1(\mathbb{R}^{d+1})} \\
    &\leq  c_1 \|\|\omega\|^{2s}m(\omega) \|_{L^2(\mathbb{R}^{d+1})}\left[ \|\mathcal{F}[f_i]\|_{L^1(\mathbb{R}^{d+1})} + \|\mathcal{F}[ f_{i,v}]\|_{L^1(\mathbb{R}^{d+1})} \right] \\
    &= c_1 \|\|\omega\|^{2s}m(\omega) \|_{L^2(\mathbb{R}^{d+1})}\left[ \|\mathcal{F}[f_i]\|_{L^1(\mathbb{R}^d)} + \|\mathcal{F}[ f_{i}].\mathcal{F}[ g_v]\|_{L^1(\mathbb{R}^{d+1})} \right] \\ 
    &\leq c_1 \|\|\omega\|^{2s}m(\omega) \|_{L^2(\mathbb{R}^{d+1})}\left[ \|\mathcal{F}[f_i]\|_{L^1(\mathbb{R}^{d+1})} + \|\mathcal{F}[ f_{i}]\|_{L^{1}(\mathbb{R}^{d+1})} \| \mathcal{F}[ g_v]\|_{L^{\infty}(\mathbb{R}^{d+1})} \right].
\end{align*}
In the last equation, we applied holder's inequality. 
From uniform continuity and the Riemann–Lebesgue lemma, we have
\begin{align*}
    \| \mathcal{F}[g_v] \|_{L^{\infty}(\mathbb{R}^{d+1})} \leq \|g_v\|_{L^{1}(\mathbb{R}^{d+1})} = \int g_v(vx)~dx = \int t^{-d}|g(x/t)| dx = \int |g(x)|~dx = \|g\|_{L^{1}(\mathbb{R}^{d+1})}.
\end{align*}
Hence, 
\begin{align*}
    \rm{T}_1 &\leq c_1 \|\|\omega\|^{2s}m(\omega) \|_{L^2(\mathbb{R}^{d+1})}\left[ \|\mathcal{F}[f_i]\|_{L^1(\mathbb{R}^{d+1})} + \|\mathcal{F}[ f_{i}]\|_{L^{\infty}(\mathbb{R}^{d+1})} \|g\|_{L^{1}(\mathbb{R}^{d+1})} \right]
\end{align*}

Similarly, 
\begin{align*}
   \rm{T}_2 &\leq c_2 \|m(\omega)\|_{L^2(\mathbb{R}^d)} \| \|\omega\|^{2s}n(\omega) \|_{L^1(\mathbb{R}^{d+1})} \\
   &\leq c_2\|m(\omega)\|_{L^2(\mathbb{R}^{d+1})} \left[ \| \|\omega\|^{2s} \mathcal{F}(f_i)(\omega) \|_{L^1(\mathbb{R}^{d+1})} + \| \|\omega\|^{2s} \mathcal{F}(f_{i,v})(\omega) \|_{L^1(\mathbb{R}^{d+1})}\right] \\
   &= c_2\|m(\omega)\|_{L^2(\mathbb{R}^d)} \left[ \| \|\omega\|^{2s} \mathcal{F}(f_i)(\omega) \|_{L^1(\mathbb{R}^{d+1})} + \| \|\omega\|^{2s} \mathcal{F}(f_i)(\omega) \mathcal{F}(g_v)(\omega) \|_{L^1(\mathbb{R}^{d+1})}\right].
\end{align*}
In the above equation, we can apply holder's inequality to get, 
\begin{align*}
    {\rm{T}}_2 &\leq c_2 \|m(\omega)\|_{L^2(\mathbb{R}^{d+1})} \left[ \| \|\omega\|^{2s} \mathcal{F}(f_i)(\omega) \|_{L^1(\mathbb{R}^{d+1})} + \| \|\omega\|^{2s} \mathcal{F}(f_i)(\omega) \|_{L^{1}(\mathbb{R}^{d+1})} \| \mathcal{F}(g_v)(\omega) \|_{L^{\infty}(\mathbb{R}^{d+1})}\right] \\
    &\leq c_2 \|m(\omega)\|_{L^2(\mathbb{R}^{d+1})} \left[ \| \|\omega\|^{2s} \mathcal{F}(f_i)(\omega) \|_{L^1(\mathbb{R}^{d+1})} + \| \|\omega\|^{2s} \mathcal{F}(f_i)(\omega) \|_{L^{1}(\mathbb{R}^{d+1})} \| \mathcal{F}(g_v)(\omega) \|_{L^{\infty}(\mathbb{R}^{d+1})}\right].
\end{align*}

Hence,
\begin{align*}
    {\rm{T}}_2 &\leq c_2 \|m(\omega)\|_{L^2(\mathbb{R}^{d+1})} \left[ \| \|\omega\|^{2s} \mathcal{F}(f_i)(\omega) \|_{L^1(\mathbb{R}^{d+1})} + \| \|\omega\|^{2s} \mathcal{F}[f_i](\omega) \|_{L^{1}(\mathbb{R}^{d+1})} \| g \|_{L^{1}(\mathbb{R}^{d+1})}\right]
\end{align*}
The term $\|m(\omega)\|_{L^{2}(\mathbb{R}^{d+1})}$ and $\|\|\omega\|^{2s}m(\omega)\|_{L^{2}(\mathbb{R}^{d+1})}$ can be bounded using similar technique as before. We will use the result from Lemma~\ref{lm:good-conv}. 
\begin{align*}
\|\|\omega\|^{2s}m(\omega)\|_{L^{2}(\mathbb{R}^{d+1})}^2 &\leq  \int \|\omega\|^{4s} |\mathcal{F}[f_i](\omega)|^2||1 - \mathcal{F}[g](v\omega)|^2 ~d\omega \\
    &\leq \int_{v\|\omega\| \geq 1} \|\omega\|^{4s} |\mathcal{F}[f_i](\omega)|^2|~d\omega \\
    &= \int_{v\|\omega\| \geq 1} \|\omega\|^{4s}(1+\|\omega\|^2)^{-\beta} (1+\|\omega\|^2)^{\beta} |\mathcal{F}[f_i](\omega)|^2|~d\omega \\
    &\leq 2^{2\beta}\sup_{v\|\omega\| \geq 1} \frac{\|\omega\|^{4s}}{(1+\|\omega\|^2)^{\beta}}  \|f_i \|^2_{W_2^{\beta}(\mathbb{R}^{d+1})} \\
    &\leq 2^{2\beta} \sup_{v\|\omega\| \geq 1} \frac{1}{(1+\|\omega\|^2)^{\beta-2s}}\|f_i \|^2_{W_2^{\beta}(\mathbb{R}^{d+1})} \\
    &\leq 2^{2\beta}\frac{v^{2(\beta-2s)}}{(1+v^{2})^{\beta -2s}} \|f_i \|^2_{W_2^{\beta}(\mathbb{R}^{d+1})} \\
    &=2^{2s}\frac{(2v)^{2(\beta-2s)}}{(1+v^{2})^{\beta -2s}} \|f_i \|^2_{W_2^{\beta}(\mathbb{R}^{d+1})} \leq 2^{2s} {(2v)^{2(\beta-2s)}}  \|f_i \|^2_{W_2^{\beta}(\mathbb{R}^{d+1})}.
\end{align*} 
Combining everything together, we have
\begin{align*}
    \|(-\Delta)^{s}(p^{\star} - \hat{p})\|_{L^2(\mathbb{R}^{d+1})} \leq c_{1} (2v)^{(\beta-2s)} \sum_{i=1}^{q} C_{T_1}^{(i)} \|f_i \|^2_{W_2^{\beta}(\mathbb{R}^{d+1})} + c_{2} (2v)^{\beta}\sum_{i=1}^{q} C_{T_2}^{(i)}  \|f_i \|^2_{W_2^{\beta}(\mathbb{R}^{d+1})},
\end{align*}
where $C_{T_1}^{(i)} =  \|\mathcal{F}[f_i]\|_{L^1(\mathbb{R}^{d+1})} + \|\mathcal{F}[ f_{i}]\|_{L^{\infty}(\mathbb{R}^{d+1})} \|g\|_{L^{1}(\mathbb{R}^{d+1})}$ and $C_{T_2}^{(i)} = \left \| \|\omega\|^{2s} \mathcal{F}(f_i)(\omega) \right\|_{L^1(\mathbb{R}^{d+1})} +\\ \left\| \|\omega\|^{2s} \mathcal{F}[f_i](\omega) \|_{L^{1}(\mathbb{R}^{d+1})} \| g \right\|_{L^{1}(\mathbb{R}^{d+1})}$. 
\end{proof}

\subsection{Proof of Theorem~\ref{thm:approx_fract_fpe_1}}
From the previous lemma, we have,
\begin{align*}
    \|(-\Delta)^{s}(p^{\star} - \hat{p})\|_{L^2(\mathbb{R}^{d+1})} \leq c_{1} (2v)^{(\beta-2s)} \sum_{i=1}^{q} C_{T_1}^{(i)} \|f_i \|^2_{W_2^{\beta}(\mathbb{R}^{d+1})} + c_{2} (2v)^{\beta}\sum_{i=1}^{q} C_{T_2}^{(i)}  \|f_i \|^2_{W_2^{\beta}(\mathbb{R}^{d+1})},
\end{align*}
for some positive $c_1$ and $c_2$ where $C_{T_1}^{(i)} =  \|\mathcal{F}[f_i]\|_{L^1(\mathbb{R}^{d+1})} + \|\mathcal{F}[ f_{i}]\|_{L^{\infty}(\mathbb{R}^{d+1})} \|g\|_{L^{1}(\mathbb{R}^{d+1})}$ and $C_{T_2}^{(i)} = \left \| \|\omega\|^{2s} \mathcal{F}(f_i)(\omega) \right\|_{L^1(\mathbb{R}^{d+1})} +\\ \left\| \|\omega\|^{2s} \mathcal{F}[f_i](\omega) \|_{L^{1}(\mathbb{R}^{d+1})} \| g \right\|_{L^{1}(\mathbb{R}^{d+1})}$.
 Now, if we choose, for some constant $c$ 
\begin{align*}
    v = \left( \frac{\varepsilon}{c 2^{\beta-2s} \sum_{i=1}^{q}  (C_{T_1}^{(i)}  + C_{T_2}^{(i)} ) \|f_i \|^2_{W_2^{\beta}(\mathbb{R}^{d+1})}}\right)^{\frac{1}{\beta-2s} }, 
\end{align*}
then 
\begin{align*}
    \|(-\Delta)^{s}(p^{\star} - \hat{p})\|_{L^2(\mathbb{R}^{d+1})} \leq \varepsilon.
\end{align*}
We can bound the other two terms as using lemma~\ref{lem:1_derivative}. We have, 
\begin{align*}
    \left\| \frac{\partial p^\star(x,t)}{\partial t} - \frac{\partial f(x,t)}{\partial t} \right\|_{L^{2}(\mathbb{R}^{d+1})} &\leq  8\pi (2v)^{(\beta -1)} \sum_{i=1}^q \|f_i \|_{W_2^{\beta}(\mathbb{R}^{d+1})} \|f_i \|_{ L^{\infty}(\mathbb{R}^{d+1})} \\
    & \qquad \qquad \qquad + 2(2v)^{\beta} \sum_{i=1}^q \|f_i \|_{W_2^{\beta}(\mathbb{R}^{d+1})} \left\| \frac{\partial f_i}{\partial t} \right\|_{L^{\infty}(\mathbb{R}^{d+1})}  \| g\|_{L_{1}(\mathbb{R}^{d+1})}.
\end{align*}
If we choose 
    \begin{align*}
        v = \min \left( \left(\frac{\varepsilon}{2^{\beta-1}16 \pi C_1} \right)^{\frac{1}{\beta-1}}, \left( \frac{\varepsilon}{2^{\beta+1}C_2} \right)^{\frac{1}{\beta}} \right)
    \end{align*}
    where $C_1 = \sum_{i=1}^q \|f_i \|_{W_2^{\beta}(\mathbb{R}^{d+1})} \|f_i \|_{ L^{\infty}(\mathbb{R}^{d+1})} $ and $C_2 = \sum_{i=1}^q \|f_i \|_{W_2^{\beta}(\mathbb{R}^d)} \left\| \frac{\partial f_i}{\partial t} \right\|_{L^{\infty}(\mathbb{R}^{d+1})}  \| g\|_{L_{1}(\mathbb{R}^{d+1})}$. Hence,
    \begin{align*}
        \left\| \frac{\partial p^\star(x,t)}{\partial t} - \frac{\partial f(x,t)}{\partial t} \right\|_{L^{2}(\mathbb{R}^{d+1})}  \leq \varepsilon.
    \end{align*}

Similarly, from lemma~\ref{lem:1_derivative}
     \begin{align*}
        \left\| \frac{\partial p^\star(x,t) }{\partial x_k} - \frac{\partial f(x,t) }{\partial x_k}\right\|_{L^{2}(\mathbb{R}^{d+1})} &\leq  8\pi (2v)^{(\beta -1)} \sum_{i=1}^q \|f_i \|_{W_2^{\beta}(\mathbb{R}^{d+1})} \|f_i \|_{ L^{\infty}(\mathbb{R}^{d+1})} \\
        &\qquad \qquad  + 2(2v)^{\beta} \sum_{i=1}^q \|f_i \|_{W_2^{\beta}(\mathbb{R}^{d+1})} \left\| \frac{\partial f_i}{\partial x_k} \right\|_{L^{\infty}(\mathbb{R}^{d+1})}  \| g\|_{L_{1}(\mathbb{R}^{d+1})}.
     \end{align*}
From \cite[Theorem D.4 (Eq. D.27) ]{rudi2021psd}, we have,
     \begin{align*}
         \left\| p^\star (x,t)-f(x,t)\right\|_{L^{2}(\mathbb{R}^{d+1})} \leq (2v)^{\beta}(1+ \| g\|_{L^{1}(\mathbb{R}^{d+1})}) \sum_{i=1}^{q} \|f_i \|_{W_2^{\beta}(\mathbb{R}^{d+1})} \|f_i \|_{ L^{\infty}(\mathbb{R}^{d+1})}.
     \end{align*}
     Hence, if we choose, 
     \begin{align*}
         v = \min \left( \left(\frac{\varepsilon}{2^{\beta-1}16 \pi C_1} \right)^{\frac{1}{\beta-1}}, \left( \frac{\varepsilon}{2^{\beta+1}C_2} \right)^{\frac{1}{\beta}} , \left(\frac{\varepsilon}{2^{\beta+1}C_3}\right)^{\frac{1}{\beta}}\right)
     \end{align*}
     where $C_1 = \sum_{i=1}^q \|f_i \|_{W_2^{\beta}(\mathbb{R}^{d+1})} \|f_i \|_{ L^{\infty}(\mathbb{R}^{d+1})} $, $C_2 = \sum_{i=1}^q \|f_i \|_{W_2^{\beta}(\mathbb{R}^{d+1})} \left\| \frac{\partial f_i}{\partial t} \right\|_{L^{\infty}(\mathbb{R}^{d+1})}  \| g\|_{L_{1}(\mathbb{R}^{d+1})}$ and $C_3 = (1+ \| g\|_{L^{1}(\mathbb{R}^{d+1})}) \sum_{i=1}^{q} \|f_i \|_{W_2^{\beta}(\mathbb{R}^{d+1})} \|f_i \|_{ L^{\infty}(\mathbb{R}^{d+1})}$, then, we have 
     \begin{align*}
         \left\| \sum_{i=1}^d \frac{\partial}{\partial x_i} (\mu_i(x,t)p^\star(x,t) - \mu_i(x,t) f(x,t))  \right\|_{L^{2}(\mathbb{R}^{d+1})} \leq d(R_{\mu} + R_{\mu_p} ) \varepsilon.
     \end{align*}
     
    Hence, by combining everything together, we have 
     
        \begin{align*}
    \left\|\frac{\partial \hat{p}(x,t)}{\partial t} +  \sum_{i =1}^d \frac{\partial }{\partial x_i}(\mu_i(x,t)\hat{p}(x,t)) - (-\Delta)^s (\hat{p}(x,t))\right \|_{L^{2}(\mathbb{R}^{d+1})} \leq  \varepsilon(2+d(R_{\mu} + R_{\mu_p} ) ).
\end{align*}
We can bound the trace of the matrix as follows. We have,
\begin{align*}
    M_{\varepsilon} = \sum_{i=1}^q f_{i,v}f_{i,v}^{\top}.
\end{align*}
From \cite[Equation D.21]{rudi2021psd}, we have
\begin{align*}
    Tr(M_{\varepsilon}) = \sum_{i=1}^{q} \|f_{i,v}\|_{\mathcal{H}} = c_{\eta} 2^{2\beta}(1+(\frac{v}{3})^{2\beta} e^{\frac{89}{\eta_0 v^2}}) \sum_{i=1}^q \|f_i\|^2_{W_2^{\beta}(\mathbb{R}^{{d+1}})}
\end{align*}
Since, we choose,
\begin{align*}
    v = \left( \frac{\varepsilon}{96 \pi^2 2^{\beta-2} C}\right)^{\frac{1}{\beta-2s}},
\end{align*}
where $C $ depends on $f_i$s, $\cf[f_i]$ and it's derivative.  

\begin{align*}
    Tr(M_{\varepsilon}) &= c_{\eta} 2^{2\beta}(1+(\frac{v}{3})^{2\beta} e^{\frac{89}{\eta_0 v^2}}) \sum_{i=1}^q \|f_i\|^2_{W_2^{\beta}(\mathbb{R}^{{d+1}})} \\
    &=c_{\eta} 2^{2\beta} \left( 1 + 3^{-2\beta}\left(\frac{\varepsilon}{96 \pi^2 2^{\beta-2} C}\right)^{\frac{2\beta}{\beta - 2s}} e^{\frac{89}{\eta_0 v^2}}\right)  \sum_{i=1}^q \|f_i\|^2_{W_2^{\beta}(\mathbb{R}^{{d+1}})} \\
    &= c_{\eta} 2^{2\beta} \sum_{i=1}^q \|f_i\|^2_{W_2^{\beta}(\mathbb{R}^{{d+1}})} \left( 1 + 3^{-2\beta}\left(\frac{\varepsilon}{\Tilde{C}}\right)^{\frac{2\beta}{\beta -2 s}} e^{\frac{89}{\eta_0 \left( \frac{\varepsilon}{\Tilde{C}}\right)^{\frac{2}{\beta-2s}}}}\right) \\
    &= \hat{C} |\eta|^{1/2} \left( 1+ \varepsilon^{\frac{2\beta}{\beta -2s}} e^{\frac{\Tilde{C}'}{\eta_0 \varepsilon^{\frac{2}{\beta - 2s}}}}\right) \leq \hat{C} |\eta|^{1/2} \left( 1+ \varepsilon^{\frac{2\beta}{\beta -2s}} \exp{\left(\frac{\Tilde{C}'}{\eta_0} \varepsilon^{-\frac{2}{\beta -2s}}\right)}\right)
\end{align*}
where $\Tilde{C} = 96 \pi^2 2^{\beta-2} C$, $\Tilde{C}' = 89 \Tilde{C}^{\frac{2}{\beta-2}}$, and  $\hat{C} = \pi^{-d/2} 2^{2\beta} \sum_{i=1}^q \|f_i\|^2_{W_2^{\beta}(\mathbb{R}^{{d+1}})} \max \left(1,\frac{3^{-2\beta}}{\Tilde{C}^{\frac{2\beta}{\beta-2s}}}\right)$.

\subsection{Approximation Properties of PSD Model for Fractional Laplacian (Proof of Theorem~\ref{thm:approx_fract_fpe_1}) } \label{ap:approximation_fractional_}

% \begin{theorem}\label{thm:approx_fract_fpe_1}
% Let $\beta > 2, q \in \N$. 
% Let $f_1,\dots,f_q $ satisfy assumptions~\ref{as:optimal_solution}  and \ref{as:fourier_bounded}  and the function $p^\star = \sum_{i=1}^q f_i^2$. Let $\eps \in (0,1]$ and let $\eta \in \R^d_{++}$. Let $\phi_{X,T}$ be the feature map of the Gaussian kernel with bandwidth $\eta$ and let $\ch_{X}\otimes \ch_{T}$ be the associated RKHS. Then there exists $\mm_\eps \in \psd(\ch_{X}\otimes \ch_{T})$ with $\operatorname{rank}(\mm_\eps) \leq q$, such that for the representation $\hat{p}(x,t) = \phi_{X,T} ^\top \mm_\eps \phi_{X,T}$, following holds under assumption ~\ref{as:bounded_ceff} on the coefficients of the fractional FPE,
% \small
% \begin{align}
%    &\left\| \frac{\partial  \hat{p}(x,t)}{\partial t} + \sum_{i =1}^d \frac{\partial }{\partial x_i}(\mu_i(x,t)  \hat{p}(x,t) ) + (-\Delta)^s  \hat{p}(x,t)\right\|_{L^{2} (\Tilde{\cx})} = O(\varepsilon),\label{eq:approx_fpe_1}  \\
%         &   \text{and }  ~ \tr(\mm_\eps) \leq \hat{C} |\eta|^{1/2} \left( 1+ \varepsilon^{\frac{2\beta}{\beta -2s}} e^{\frac{\Tilde{C}'}{\eta_0 \varepsilon^{\frac{2}{\beta - 2s}}}}\right)\notag,
% \end{align}
% \normalsize
% where $|\eta| = \det(\diag(\eta))$, and $ \Tilde{C} \text{and} \hat{C}$ depend only on $\beta, d$ and properties of the functions $f_1, \cdots, f_q$.
% \end{theorem}

\begin{proof}
From the previous result in theorem~\ref{thm:approx_fract_fpe_1}, we know that there exists an $\mm_\eps \in \psd(\ch_{X}\otimes \ch_{T})$ with $\operatorname{rank}(\mm_\eps) \leq q$, such that for the representation $\hat{p}(x,t) = \phi_{X,T} ^\top \mm_\eps \phi_{X,T}$, following holds under assumption ~\ref{as:bounded_ceff} on the coefficients of the fractional FPE,
\small
\begin{align}
   &\left\| \frac{\partial  \hat{p}(x,t)}{\partial t} + \sum_{i =1}^d \frac{\partial }{\partial x_i}(\mu_i(x,t)  \hat{p}(x,t) ) + (-\Delta)^s   \hat{p}(x,t)\right\|_{L^{2} (\Tilde{\cx})} = O(\varepsilon),
\end{align}\normalsize
for an appropriate choice of parameters.  We also know that 
\small
\begin{align}
   &  \frac{\partial  {p}^\star(x,t)}{\partial t} + \sum_{i =1}^d \frac{\partial }{\partial x_i}(\mu_i(x,t)  {p}^\star(x,t) ) + (-\Delta)^s   {p}^\star(x,t) = 0.
\end{align}\normalsize
We know that, for given $A_m$, $\tilde{p}(x,t) = \psi(x,t)^\top \tilde{P}\mm_{\eps} \tilde{P}\psi(x,t)^\top  $ where $\tilde{P}$ is the projection operator defined earlier. 
We will utilize the Gagliardo–Nirenberg inequality to control the error in compression of fractional laplacian compressed model with respect to the original one. For any fixed $t$,
\begin{align*}
 &(-\Delta)^{s} [\hat{p}(x,t) - \tilde{p}(x,t)] = (-\Delta)^{s} [\psi(x,t)^\top (\tilde{P}\mm_{\eps} \tilde{P} - \mm_{\eps})\psi(x,t)] %\\
 % &= (-\Delta)^s[ \psi(x,t)^\top (I-\tilde{P})M(I-\tilde{P})\psi(x,t) - \psi(x,t)^\top M(I-\tilde{P})\psi(x,t)  - \psi(x,t)^\top(I-\tilde{P})M \psi(x,t)].
\end{align*}
By applying Gagliardo–Nirenberg  inequality for fractional laplacian \cite{morosi2018constants}, we have

\begin{align*}
\|(-\Delta)^{s} \beta_t(x)\|_{L^2(\mathbb{R}^d)} \leq \underbrace{\|\beta_t(x)\ \|_{L^2(\mathbb{R}^{d+1})}^{1 - \vartheta}}_{:=T_1}  \cdot \underbrace{\| \Delta \beta_t(x) \|_{L^2(\mathbb{R}^{d+1})}^{\vartheta}}_{:=T_2},
\end{align*}
where $\vartheta = s \leq 1$ and $\Delta$ is normal laplacian operator. Let us consider the term $T_1$ and $T_2$ separately. In our case, $\beta_t(x) = \psi(x,t)^\top (\tilde{P}\mm_{\eps} \tilde{P} - \mm_{\eps})\psi(x,t)$.
In the proof of theorem~\ref{thm:compression_fpe}, we have proved the bound on $\Delta \beta_t(x)$ and $ \beta_t(x)$.  
From equation \eqref{eq:fpe_1_frac_use}, we have,
\begin{align*}
    |\psi(x,t)^\top(\tilde{P}\mm_{\eps}\tilde{P}  - \mm_{\eps})\psi(x,t)| & \leq 2\|(I-\tilde{P})\psi(x,t)\|_{\hh_\eta}\| \mm_{\eps}\|^{1/2}\|\mm_{\eps}^{1/2}\psi(x,t)\|_{\hh_\eta} \notag \\
    & \qquad \qquad + \|(I-\tilde{P})\psi(x,t)\|^2_{\hh_\eta}\|\mm_{\eps}\| \notag \\
    & =2c_M^{1/2}{f}(x;\mm_{\eps},\psi)^{1/2}  \tilde{u}_1(x)  + c_M \tilde{u}_1(x)^2, 
\end{align*}
where $c_M = \|\mm_{\eps}\|$ and we denoted by $\tilde{u}_1(x)$ the quantity $\tilde{u}_1(x) = \|(I-\tilde{P})\psi(x,t)\|_{\hh_\eta}$ and we noted that $\|\mm_{\eps}^{1/2}\psi(x,t)\|^2_{\hh_\eta} = \psi(x,t)^\top \mm_{\eps} \psi(x,t) = \pp{x}{\mm_{\eps}, \psi(x,t)}$. Similarly, we have shown that
\begin{align}
    &\frac{\partial^2 p(x,t)}{\partial x_ix_j} - \frac{\partial^2 \tilde{p}(x,t)}{\partial x_ix_j} =  2 \psi_{j i}''(x,t)^\top (\mm_{\eps} - \tilde{P}\mm_{\eps}\tilde{P} ) \psi(x,t) + 2 \psi_{i}'(x,t)^\top (\mm_{\eps} - \tilde{P}\mm_{\eps}\tilde{P} ) \psi_{j}'(x,t).
\end{align}
From equations~\eqref{eq:approx_fpe_fract_use2} and \eqref{eq:approx_fpe_fract_use3}
\begin{align}
    \left|\psi_{j i}''(x,t)^\top (\mm_{\eps} - \tilde{P}\mm_{\eps}\tilde{P} ) \psi(x,t) \right| \leq c_M \tilde{u}_1(x) \tilde{u}_{x_jx_i}(x) + c_M^{1/2} f(x;\mm_{\eps},\psi) ^{1/2} \tilde{u}_{x_jx_i}(x) \notag \\
    + c_M^{1/2} \Tilde{f}
    _{x_jx_i}(x;\mm_{\eps},\psi)^{1/2}\tilde{u}_1(x), 
\end{align}
where $\Tilde{f}
    _{x_jx_i}(x;\mm_{\eps},\psi) = \frac{\partial^2 \psi(x,t)}{\partial x_j x_i}^{\top}\mm_{\eps} \frac{\partial^2 \psi(x,t)}{\partial x_j x_i}$ and $ \tilde{u}_{x_j x_i}(x)  = \left\|(I-\tilde{P})\frac{\partial^2 \psi(x,t)}{\partial x_j x_i}\right\|_{\hh_\eta}$.\\
And, 
\begin{align}
    \psi_{i}'(x,t)^\top (\mm_{\eps} - \tilde{P}\mm_{\eps}\tilde{P} ) \psi_{j}'(x,t) \leq c_M \tilde{u}_{x_j}(x) \tilde{u}_{x_i}(x) + c_M^{1/2} f_{x_i}(x;\mm_{\eps},\psi) ^{1/2} \tilde{u}_{x_j}(x) \notag \\
    + c_M^{1/2} \Tilde{f}
    _{x_j}(x;\mm_{\eps},\psi)^{1/2}\tilde{u}_{x_i}(x).
\end{align}
Follwoing similar steps of calculations as that is Theorem~\ref{thm:compression_fpe} and Theroem~\ref{thm:approx_fpe_final}, we get \small
\begin{align}
   & \left\|\frac{\partial (\hat{p}(x,t) - \tilde{p}(x,t))}{\partial t}  + \sum_{i = i}^{d} \frac{\partial (\mu_i(x,t) (\hat{p}(x,t) - \tilde{p}(x,t)))}{\partial x_i}   + (-\Delta)^s(p^\star(x,t) (\hat{p}(x,t) - \tilde{p}(x,t))\right\|_{L^2(\tilde{\cx})} \notag \\
    &=P_1\|\mm_{\eps}\|C^2q_{\eta}^2 e^{\frac{-2c\sigma}{h} \log \frac{c\sigma}{h}} R^{(d+1)/2} + P_2 Cq_{\eta}e^{-\frac{c\sigma}{h} \log \frac{c\sigma}{h}}   \|\mm_{\eps}\| R^{(d+1)/2}   \notag \\ 
    &\qquad \qquad +P_3 \|\mm_{\eps}\|^{1/2} Cq_{\eta}e^{-\frac{c\sigma}{h} \log \frac{c\sigma}{h}}  \|\sqrt{p(x,t)}\|_{L^2(\tilde{\cx})},
\end{align} \normalsize
for some constant $P_1,P_2$ and $P_3$.
Now, let us compute $\|\mm_{\eps}\| $. We have
    \begin{align*}
  \|\mm_{\eps}\| \leq  Tr(\mm_{\eps}) = \hat{C} \tau^{(d+1)/2} \left( 1+ \varepsilon^{\frac{2\beta}{\beta -2s}} e^{\frac{\Tilde{C}'}{\tau} \varepsilon^{-\frac{2}{\beta - 2s}}}\right).
\end{align*}
Let us choose, $\tau = \frac{\Tilde{C}'\varepsilon^{-\frac{2}{\beta - 2s}}}{ \frac{2\beta}{\beta-2s}\log (\frac{1+R}{\varepsilon})}$. Hence,
\begin{align*}
    \|\mm_{\eps}\| \leq C_3 (1+R)^{\frac{2\beta}{\beta-2s}}\varepsilon^{-\frac{d+1}{\beta -2s}},
\end{align*}
For some constant $C_3$. Hence, 
Then, note that $\|p^{1/2}\|_{L^2(\X)} = \|p \|^{1/2}_{L^1(\X)}$. Hence, using the argument in \cite{rudi2021psd}, we have
\begin{align*}
    \|p^{1/2}\|_{L^2(\tilde{\X})} \leq 2R.
\end{align*}
Term $R$ comes from the fact that for each time instance $p(x,t)$ is a density and $t \in (0<R)$. Now,
\small
\begin{align*}
     & \left\|\frac{\partial (\hat{p}(x,t) - \tilde{p}(x,t))}{\partial t}  + \sum_{i = i}^{d} \frac{\partial (\mu_i(x,t) (\hat{p}(x,t) - \tilde{p}(x,t)))}{\partial x_i}   + (-\Delta)^s(p^\star(x,t) (\hat{p}(x,t) - \tilde{p}(x,t))\right\|_{L^2(\tilde{\cx})} \notag \\
    &=P_1 C_3 (1+R)^{\frac{2\beta}{\beta-2s}}\varepsilon^{-\frac{d+1}{\beta -2s}}C^2q_{\eta}^2 e^{\frac{-2c\sigma}{h} \log \frac{c\sigma}{h}} R^{(d+1)/2} + P_2 Cq_{\eta}e^{-\frac{c\sigma}{h} \log \frac{c\sigma}{h}}    C_3 (1+R)^{\frac{2\beta}{\beta-2s}}\varepsilon^{-\frac{d+1}{\beta -2s}} R^{(d+1)/2}   \notag \\ 
    &\qquad \qquad +P_3  C_3^{1/2}R (1+R)^{\frac{\beta}{\beta-2s}}\varepsilon^{-\frac{d+1}{2(\beta -2s)}} Cq_{\eta}e^{-\frac{c\sigma}{h} \log \frac{c\sigma}{h}}  \|\sqrt{p(x,t)}\|_{L^2(\tilde{\cx})}.
\end{align*}
\normalsize
By choosing $h = c \sigma/s$ with $s = \max(C', (1+\frac{d+1}{2(\beta-2s)}) \log \frac{1}{\eps} \, + (1+\frac{d}{2}) \log(1+R) \, + \log(\hat{C}) + e)$, for some big enough $\hat{C}$. Since $s \geq e$, then $\log s \geq 1$, so
\begin{align*}
    C e^{-\frac{c\,\sigma}{h}\log\frac{c\,\sigma}{h}} = Ce^{-s \log s} \leq Ce^{-s} \leq C' (1+R)^{-d/2}\eps^{1 + \frac{d+1}{2(\beta-2s)}}.
\end{align*}
Hence,  combining evrything, we have from triangles inequality,\small
\begin{align*}
   &\left\|\frac{\partial (p^\star(x,t) - \tilde{p}(x,t))}{\partial t}  + \sum_{i = i}^{d} \frac{\partial (\mu_i(x,t) (p^\star(x,t) - \tilde{p}(x,t)))}{\partial x_i}   + (-\Delta)^s(p^\star(x,t) - \tilde{p}(x,t))\right\|_{L^2(\tilde{\cx})}\\
   &\qquad \qquad \qquad \qquad \qquad \qquad \qquad \qquad \qquad \qquad \qquad \qquad \qquad = O(\varepsilon)
\end{align*}
\normalsize
To conclude we recall the fact that $\tilde{x}_1, \dots, \tilde{x}_m$ is a $h$-covering of $T$, guarantees that the number of centers $m$ in the covering satisfies
\begin{align*}
    m \leq (1 + \tfrac{2R\sqrt{d+1}}{h})^{d+1}.
\end{align*}
Then, since $h \geq c\sigma/(C_4 \log\frac{C_5 \log (1+R)}{\eps})$ with $C_4 = 1 + d/\min(2(\beta-2s),2)$ and $C_5 = (\hat{C})^{1/C_4}$, and since $\sigma = \min(R,1/\sqrt{\tau})$, then $R/\sigma = \max(1, R \sqrt{\tau}) \leq 1 + \sqrt{\tilde{C}'} \eps^{-1/(\beta-2)}(\log\frac{1+R}{\eps})^{-1/2}$, so after final calculation, we have
\begin{align*}
m^{\frac{1}{d+1}} &\leq 1+ 2R\sqrt{d+1}/h  \leq C_8 + C_9 \log \frac{1+R}{\varepsilon} + C_{10} \varepsilon^{-\frac{1}{\beta-2s}} \left(\log \frac{1+R}{\varepsilon} \right)^{1/2},
\end{align*}
for some constants $C_8$, $C_9$ and $C_{10}$ independent of $\varepsilon$.
\end{proof}

\end{document}